\crefname{hypothesis}{Hypothesis}{Hypotheses}
\Crefname{ALC@unique}{Line}{Lines}
\newcommand{\vect}[1]{\mathbf{#1}}
\newcommand{\PB}[1]{{\color{red} Prasanna: {#1}}}
\newcommand{\norm}[1]{\left\lVert#1\right\rVert}
\colorlet{texcscolor}{blue!50!black}
\colorlet{texemcolor}{red!70!black}
\colorlet{texpreamble}{red!70!black}
\colorlet{codebackground}{black!25!white!25}
\title{Learning Continually on a Sequence of Graphs -- The Dynamical System Way\thanks{Submitted to the editors 5/14/2023.
}}
\author{Krishnan Raghavan\thanks{Mathematics and Computer Science, Argonne National Laboratory~(\email{ kraghavan@anl.gov})} \and Prasanna Balaprakash\thanks{Oak Ridge National Laboratory
  (\email{ pbalapra@ornl.gov}).}}
\pgfplotsset{compat=1.17}
\begin{document}
\maketitle
\begin{abstract}
Continual learning~(CL) is a field concerned with learning a series of inter-related task with the tasks typically defined in the sense of either regression or classification. In recent years, CL has been studied extensively when these tasks are defined using Euclidean data-- data, such as images, that can be described by a set of vectors in an n-dimensional real space. However, the literature is quite sparse, when the data  corresponding to a CL task is nonEuclidean-- data , such as graphs, point clouds or manifold, where the notion of similarity in the sense of Euclidean metric does not hold. For instance, a graph is described by a tuple of vertices and edges and similarities between two graphs is not well defined through a Euclidean metric. Due to this fundamental nature of the data, developing CL for nonEuclidean data presents several theoretical and methodological challenges.  In particular, CL for graphs requires explicit modelling of nonstationary behavior of vertices and edges and their effects on the learning problem. Therefore, in this work, we develop a adaptive dynamic programming viewpoint for CL with graphs. In this work, we formulate a two-player sequential game between the act of learning new tasks~(generalization) and remembering previously learned tasks~(forgetting). We prove mathematically the existence of a solution to the game and demonstrate convergence to the solution of the game. Finally, we demonstrate the efficacy of our method on a number of graph benchmarks with a comprehensive ablation study while establishing state-of-the-art performance.
\end{abstract}

\begin{keywords}
Continual learning, dynamic programming, vertex edge random graph, optimal control, Stackelberg Equillibrium
\end{keywords}

\begin{MSCcodes}
68T05, 37N35, 91A99
\end{MSCcodes}


\section{Introduction}
The problem of Continual learning~(CL) when the tasks are made up of a collection of graphs is referred as graph continual learning~(GCL). GCL is illustrated on the left of \cref{fig:cartoon} where the complete interval is divided into three tasks. Each task is signified by a learning problem~(graph classification, node classification, regression of graphs or link prediction) on a collection of a graphs. The goal at the onset of the first task is train a graph neural network~(a neural network that takes graphs as input) to achieve perfect performance on the first task. \begin{figure}
\includegraphics[width=\columnwidth]{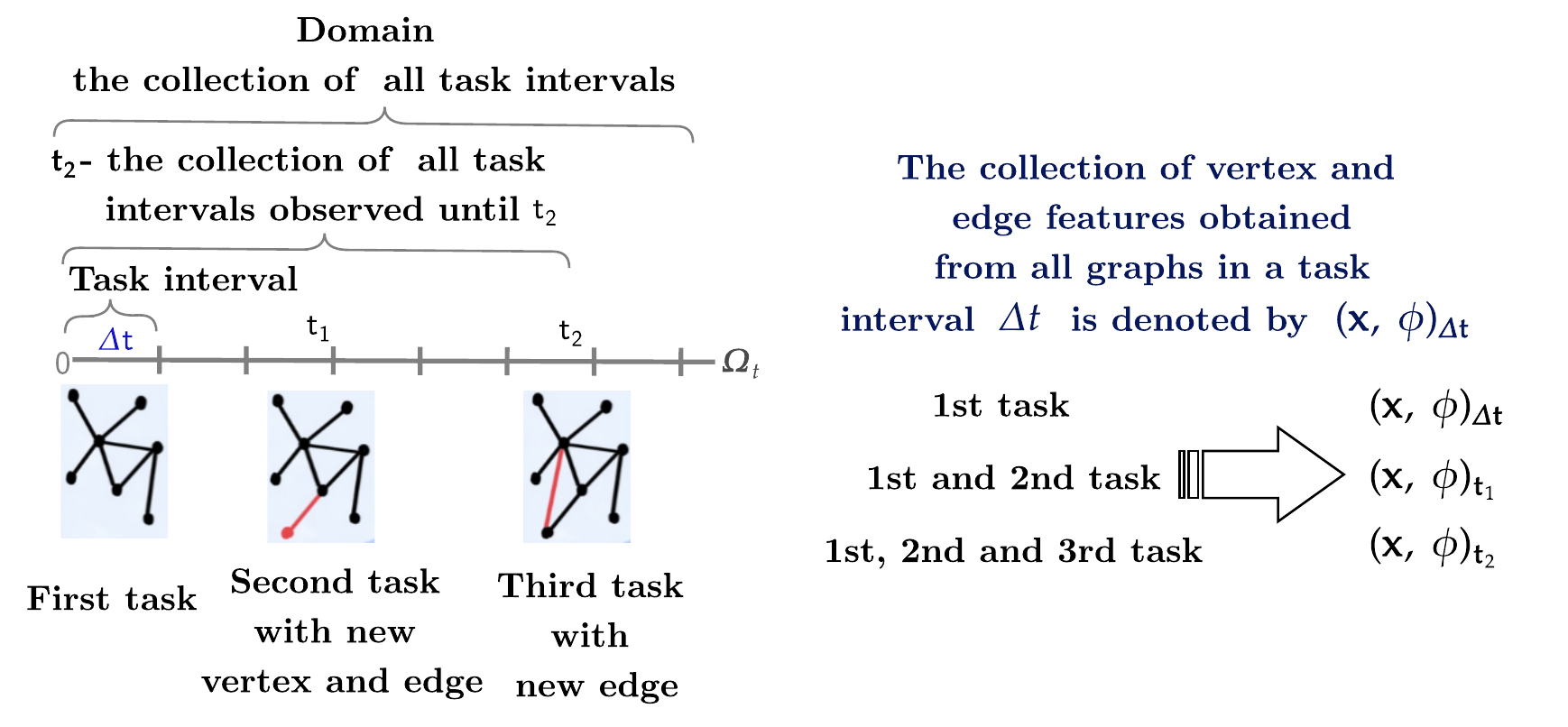}
\caption{Illustration of the GCL problem. At the end of each task interval, a collection of graphs is obtained. These tasks associated with a cost function forms a task. The illustration provides three tasks, the first task is obtained at $\Delta~t$, the second at $t_1$ and the third task at $t_2.$ Each subsequent task may have new edges or new vertices. The notation, $\vect{t}_{2}$ indicates the collection of all tasks in the interval $[0,t_2].$}
\label{fig:cartoon}
\end{figure} After the first tasks is finished, another task is observed and we seek to perform the second task well while remembering the first task. However, as this second task may present new vertices and/or edges along with new edge features and/or vertex features, updating the model naively using the second task will reduce the models' effectiveness on the first task~(previous task) due to a phenomenon known as catastrophic forgetting~\cite{lin1992self}.  This behaviour highlights an unusual dilemma in the CL domain, known as the stability-plasticity dilemma~\cite{carpenter1987massively}, where a model in pursuit of learning new experiences~(generalize to new tasks) loses long-term memory~(remember previous tasks). This dilemma provides a natural objective for GCL: \textit{balance forgetting and generalization.}  To facilitate this balance and achieve improved CL performance on graph tasks one must model the dynamics between forgetting and generalization -- something the literature does not do, more on this in section~\cref{sec:prior}

The first work to formalize these notion of balance in CL was presented in meta-experience replay~(MER)~\cite{riemer2018learning}, where the balance between forgetting and generalization is enforced with hyperparameters. Our prior work in \cite{ krishnan2020meta, krishnan2021continual} attempted to take an alternative adaptive dynamic programming viewpoint of CL to model the this balance and its dynamics with changing tasks. However, both \cite{riemer2018learning} and our prior work in \cite{krishnan2020meta, krishnan2021continual} does not naturally extend to \emph{graph neural network}~(neural networks that are specifically designed to take graphs as inputs~\cite{scarselli2008graph}). In recent years, graph neural networks~(GNNs) have facilitated unprecedented efficiency \cite{bronstein2017geometric,9046288,scarselli2008graph} for datasets with graph inputs. However, much of this research involves cases when all the training graphs are available upfront. In many applications, however, graphs are  observed in sequence, and the distribution of the graphs is nonstationary~\cite{trivedi2018representation}. Thus, we must train the GNN to adapt to the sequential nature of graph generation. While the GCL literature is sparse, several methods~\cite{zhang2022hierarchical, liu2021overcoming, galke2021lifelong, zhou2021overcoming, wang2020lifelong} have been proposed in recent years. 
There are two key issues with the current GCL literature, first, none of the methods in the GCL literature model the dynamics introduced into the GCL problem by the non-stationarity in graphs. 
Furthermore, the balance between generalization and forgetting and these methods are not modeled in any prior work and despite promising initial advances, the current GCL methods~\cite{zhang2022hierarchical, liu2021overcoming, galke2021lifelong, zhou2021overcoming, wang2020lifelong} do not provide any theoretical characterization of GCL.

\subsection{Prior work on GCL~\label{sec:prior} }

A rather complete and recent survey of continual learning when graphs are chosen as inputs is provided in \cite{yuan2023continual, febrinanto2023graph}. It can be seen from the survey that the prior work in GCL methods is sparse and the existing GCL methods can be grouped into three classes, representation learning strategies~\cite{zhang2022hierarchical, galke2021lifelong,  wang2020lifelong, tang2020graph}, regularization strategies~\cite{liu2021overcoming}, and  Replay-based strategies~\cite{zhou2021overcoming}.

\textit{Representation learning strategies~\cite{zhang2022hierarchical, galke2021lifelong,  wang2020lifelong, tang2020graph},} focus on learning representations across tasks to enable CL. For instance, Galke et al.~\cite{galke2021lifelong} incrementally learn unseen classes by estimating the change in the graph distribution characterized by the node features. Wang et al.~\cite{wang2020lifelong} translate a GNN to a feature graph network and apply CL method developed for convolutional neural network~(CNN). The work in \cite{tang2020graph} seeks to build a context graph to model connections and consolidate information across tasks  and demonstrates performance on image classification problems. The method in \cite{tang2020graph} does not provide any evidence for being directly relevant to GCL since all their experiments is on image data.  Zhang  et al.~\cite{zhang2022hierarchical} propose a hierarchical prototype network and provide certain theoretical insights.  In particular, Theorem 1 provides an upper bound on the number of prototypes, and Theorem 2 quantifies the prototype changes due to new tasks.  Furthermore, other recent works that utilize representation learning strategies includes SGNN-GR~\cite{wang2022streaming}~that identifies nodes in the network that are perturbed by the repeated training and retrains them; and MSCGL~\cite{cai2022multimodal} that seeks to obtain an optimal network architecture for each task through automated neural architecture search.    

\textit{Regularization strategies~\cite{liu2021overcoming}:} The key idea behind regularization strategies is to introduce constraints to the weights/output of the network such that the forgetting that is incurred because  a new task is minimized. The idea of regularization alleviates the need to construct different architectures for each problem. However, such a process is governed by preascertained hyperparameters that are more often than not blindly selected without the presence of past experiences~(previous tasks). 
For example, without the information from previous tasks, methods such as  \cite{liu2021overcoming} solely focus on increasing \textit{plasticity~(resistance to change due to the presence of a new task)} in the network. 

\textit{Replay-based strategies~\cite{zhou2021overcoming}:} The plasticity imposed by regularization strategies in \cite{liu2021overcoming} is obviated by ER-GNN~\cite{zhou2021overcoming}, where five pooling mechanisms are specifically developed to aggregate information with careful attention to the previous task through the use of an experience replay mechanism.  ER-GNN~\cite{zhou2021overcoming} empirically studied the effectiveness of these pooling techniques and demonstrated superior performance on vertex classification in the CL setting. However, no attempt was  made to ascertain balance required for efficient CL, nor does ER-GNN provide any theoretical insights. In summary, a key gap in the literature is the lack of theoretical foundations that enable the study and analysis of GCL for a variety of graph tasks involving dynamic graphs without the need for building specific architectures for different applications~\cite{liu2021overcoming, zhou2021overcoming} or blindly setting hyper-parameters~\cite{liu2021overcoming}. 

It is important to note from the the summary of prior work on GCL and the recent surveys~\cite{yuan2023continual, febrinanto2023graph} that; while rudimentary investigations have been performed and several methods have been proposed, there are two fundamental  oversights in the field of continual learning with Graphs. First, there is no characterization of the dynamics of learning, that is, how does the solution to the CL problem change when new tasks are introduced. Second, there is no study, theoretical or empirical, that proves the feasibility of a new balance point between forgetting and generalization whenever a shift in the input distribution is observed. 

To obviate these shortcomings, we develop the theoretical foundations of GCL where we model the progression of value function~(the best CL cost over all possible tasks) with respect to tasks in the GCL setting as a dynamical system. To this end, we extend our prior work in \cite{krishnan2020meta, krishnan2021continual}  to dynamic graphs and present our insights both when tasks are observed continuously and when the tasks are observed discretely. Towards this pursuit, we model the non-stationarity in graphs through continuous~(CT) and discrete~(DT) stochastic processes leveraging the vertex edge random graphs (VERGs)~\cite{verg} formalism. Subsequently, we define the GCL task as a realization of VERG with the corresponding loss function. We utilize an adaptive dynamic programming viewpoint to formulate GCL, wherein the GCL cost~($L$) over all the possible tasks~(domain of the CL problem) is given by an integral of the generalization~(cost on the new task) and forgetting cost~(cost on all previous tasks) over the domain. The solution to the GCL problem is then be obtained by minimizing $L$ to get the value function $L^*.$ However, obtaining this solution is not straightforward because much of the domain is unknown.  To circumvent this issue, we utilize Bellman's principle of optimality and derive an ordinary differential equation~(ODE)  which models the progression of value function as a function of tasks. That is, models, how each new task affects the value function. To solve the ODE, we discretize the domain and formulate a sequential two player min-max game with the goal of achieving balance between generalization and forgetting---one player maximizes generalization, and another minimizes forgetting. We solve this game utilizing mini-batch stochastic gradient ascent-descent. 

We present a comprehensive theoretical analysis that proves the existence of at least one balance point between these two players for each new task~(\cref{theory:theorem1}). We also show convergence of our algorithm~(\cref{theory:theorem2}). We demonstrate these results while considering the effect of dynamic graph with assumptions of Lipschitz continuity. To substantiate our approach empirically, we demonstrate a 44\% improvement over the state of the art on vertex classification in a CL setting. Furthermore, we use large-scale hyperparameter search experiments to demonstrate the robustness of our method to different hyperparameters and initialization of weights on the graph classification. We also present an ablation study and demonstrate a 21\% improvement  over the naive experience replay method. 

\subsection{Summary of Contributions}
In summary, we build on~\cite{krishnan2020meta, krishnan2021continual} to initiate the investigation into, how tasks impact the graph continual learning problem when the input is provided by graphs. Our work provides the first theoretical characterization and analysis of the balance between forgetting and generalization in the GCL setting where we model the non-Euclidean nature of the graphs including the stochasticity presented by the dynamic nodes, the edge attributes and edge connectivity~(including dynamic connectivity) and introduce a theoretical framework to study this. Furthermore, utilizing this framework, we develop a simple methodology that can be shown theoretically to converge and provides empirical advantages. Finally, we provide a comprehensive ablation study where we illustrate the effect of changes in hyperparameter configurations, which is quite unique in the ML setting.

\subsection{Notations}
We denote scalars by lowercase letters, i.e., $x,$ vectors by lowercase bold letters, i.e., $\vect{x}.$ A matrix is denoted by uppercase bold alphabets, i.e., $\vect{X}$ and use $\vect{w}$ to collectively denote a column vector of all parameters. Let the interval $\vect{\Omega} = [0,\Omega]$ for $\Omega>0$ represent the sample space for all tasks instances (the total interval in which tasks may be observed) and $\mathbb{R}^+$ referring to the set of real positive numbers. Task intervals in $\Omega$ are given by $\vect{t} \subseteq \vect{\Omega}\;|\;\vect{t} = [0,t].$ In a special case, when $\vect{\Omega} \subset \mathbb{N},$ with $\mathbb{N}$ denoting the set of natural numbers, we will use $k$ to denote discrete task instances in $\vect{\Omega}$. Any partial derivatives are given by $\partial_{y}(x)$---the partial derivative of $x$ with respect to $y.$ An asymptotic value is indicated by a superscript $N$, optimal value by a superscript $*$ and iterations are indicated by a superscript in parenthesis. For instance, $L^{*}_{t}$ refers to the optimal value of $L$ with respect to the task $t$ and $\vect{w}^{(n)}$ provides the value of $\vect{w}$ after $n$ iterations. 

\subsection{Outline}
There are two major goals in this paper. First, we seek to derive the dynamics of the GCL value function. Next, we derive our two player min-max game to provide updates to a neural network to find the balance between generalization and forgetting. We begin with preliminaries from \cite{verg} in  and describe a task in GCL in \cref{sec:GCL}. Next, we describe the dynamical system modelling of GCL in \cref{sec:dyn_sys} and \cref{sec:GCL_ode} and formulate the two player game and provide the algorithm in \cref{sec:balance}. The theoretical analysis presented in \cref{sec:theory} and we substantiate our contributions empirically in \cref{sec: results}.

\section{Graph Continual Learning}\label{sec:GCL}
GCL is the problem of learning a sequence of tasks, where each task is defined as a collection of graphs associated with a loss function according to a GNN. 
Therefore, we must first define a collection of graphs that captures the non-stationarity in vertices and edges within a graph. In typical literature, a collection of graphs  $G \in \mathcal{G}(k)$ for  $k \in \vect{\Omega}$ is a tuple of vertices $\vect{V} \subset \boldsymbol{\mathcal{V}}$, edges $\vect{E}$, and features $\vect{F}.$  This collection of graphs is typically considered as samples from random graphs -- in particular, a probability space  $(\vect{\Omega}, \sigma(\vect{\Omega}), \mathbb{P}).$ Here, $\vect{\Omega}$ is a sample space equipped with a sigma algebra $\sigma(\vect{\Omega})$ and  $\mathbb{P}$ is a probability space. Furthermore, $\vect{V} \subset \mathbb{N}$ is the vertex set, $\vect{E} \in \mathbb{R}^{ |\vect{V}| \times |\vect{V}|}$ is the edge set with $\vect{\mathcal{X}} \subset \mathbb{R}^{n}$ being the sample space for the features $\vect{F}.$  In the context of GCL, graphs within the collection $\mathcal{G}(k)$ can be dynamic for different tasks: they can have different vertices, edges, and corresponding features. We formalize this dynamic behavior using vertex-edge random graph~(VERG)~\cite{verg} where both the vertices and edges of a random graph are formalized as stochastic processes. 

\subsection{VERG for GCL}
 According to \cite{verg}, a vertex edge random graph~(VERG) is a probability space over $(\vect{x},\phi)$ graphs. Subsequently, an $(\vect{x},\phi)$ graph is a tuple of vertex features, represented by $\vect{x}$ and edge features, represented by  $\phi.$ The stochastic processes $\vect{x}$ and $\vect{\phi}$ can be thought as measurable functions the assign real values to vertex and an edge. Consequently, a VERG is of the form $(\mathcal{G}_{\vect{V}}, P)$, where $\mathcal{G}_{V}$ is a collection of $(\vect{x}, \phi)$ graphs with a probability measure $P.$ 

To adapt this generic notion of VERG must be adapted to GCL we let $\boldsymbol{\mathcal{V}}$ be the set of all vertices over which the graph data can be collected and that $\boldsymbol{\mathcal{V}}$ is endowed with a compact neighborhood topology. Then, for each interval $[t, t+ \Delta t] \subset \vect{\Omega}| t, \Delta t \in \vect{\Omega},$  the collection of graphs is defined over a particular task dependent vertex set $\vect{V}(t) \subset \boldsymbol{\mathcal{V}}$ such that VERG for GCL is formally given as a probability space over dynamic graphs.

\begin{defn}[VERG for GCL]\label{def:stp} 
For any $t \in \vect{\Omega}, v \in \boldsymbol{\mathcal{V}}$ define $ \vect{x}(v, t) :  \boldsymbol{\mathcal{V}} \times \vect{\Omega} \rightarrow \mathbb{R}^{n}$ and $\phi : \boldsymbol{\mathcal{V}} \times \boldsymbol{\mathcal{V}} \times \vect{\Omega} \rightarrow \mathbb{R}.$ Then, $\vect{x}_{\vect{V}}(t) = \{ \vect{ {x}}(v, t), \quad \forall v \in \vect{N}_{v}, \forall \vect{N}_{v} \subset \vect{V}(t)\}_{\vect{V}(t)}$ is the vertex stochastic process with $\phi_{\vect{V}}(t) = \{ \phi(i, j, t), \forall i, j \in \vect{N}_{v}, \vect{N}_{v}  \subset \vect{V}(t)\}_{\vect{V}(t)}$ being the edge stochastic process. Then, VERG is a probability space of the form $(\mathcal{G}_{\vect{V}}(t), P_{\vect{x} \times  \phi} )$ where $\mathcal{G}_{\vect{V}}(t)$ is a collection of a graphs $\left( \vect{x}_{\vect{V}}(t), \phi_{\vect{V}}(t) \right)$ with probability measure $P_{\vect{x} \times  \phi}.$ 
\end{defn}

\subsection{Learning Objective:} When dynamic graphs are generated, the associated problem is to learn an unknown function $g(.): \mathbb{R}^{n} \times \mathbb{R} \rightarrow \mathbb{R}^p$ such that  $\label{eq:prob} y(t) = \rho(\vect{x}(t), \phi(t))$
with $y(t) = \partial_{t} (.)$ for regression and $y(t)$ representing probabilities in the context of classification~(node and edge classification as well as link prediction).  We seek to approximate this unknown function using a graph neural network~(GNN) parameterized by some weight parameters. Therefore, we define a compact parameter space $\mathcal{W} \subset \mathbb{R}^{m}$ and a vector-valued function that  provides an $\mathbb{R}^{m}$ weight vector for each $t \in \vect{\Omega}$ where $\vect{w}(t) : \vect{\Omega} \rightarrow \mathcal{W}.$ The vector-valued function can be essentially thought as a policy that defines the set of rules according to which a weight vector may be obtained corresponding to each $t$ such that 
$y(t) = g(.,., \vect{w}(t))$ represents the GNN. Then, we define a loss function $\ell: \mathbb{R}^n \rightarrow \mathbb{R},$ which provides the learning objective. 

\subsection{GCL task}
The GCL task is given as
\begin{defn}[GCL task in the continuous sense]\label{defn:task} For $t, \Delta t \in \vect{\Omega}$, define the interval $[t, t + \Delta t]$ and let $(\mathcal{G}_{V}(t), P_{\vect{x}(t) \times  \phi(t)})$  represent a VERG associated with GCL. Denote the GNN model as $g(., ., \vect{w}(t)): \vect{\Omega} \rightarrow \mathbb{R}^{n}$ with a loss function given as $\ell: \mathbb{R}^p \rightarrow \mathbb{R}.$ Let $J(\vect{x}([t, t+\Delta t]), \phi([t, t+\Delta t]), \vect{w}([t, t+\Delta t])) = \int_{\tau = t}^{t+\Delta t} \ell(\vect{x}_{\vect{V}}(\tau), \phi_{\vect{V}}(\tau), \vect{w}(\tau))]$ be the forgetting and generalization cost over the interval $[t, t+\Delta t].$  Then, a GCL task $\mathcal{T}([t, t+\Delta t])$ is described by the tuple \begin{multline} \Big( \vect{x}([t, t+\Delta t]), \phi([t, t+\Delta t]),  J(\vect{x}([t, t+\Delta t]), \phi([t, t+\Delta t]), \vect{w}([t, t+\Delta t]))  \Big)\end{multline} with $ \vect{x}([t, t+\Delta t]) = \{\vect{x}_{\vect{V}}(\tau) \forall \tau \in [t, t+\Delta t] \}_{\vect{V}(t)}$ and  $\phi([t, t+\Delta t]) = \{\vect{\phi}_{\vect{V}}(\tau) \forall \tau \in [t, t+\Delta t] \}_{\vect{V}(t)}$
\end{defn}
A GCL task has been defined as a stochastic process over $\vect{V}(t)$ of $\boldsymbol{\mathcal{V}}.$ As a GNN operates by performing operations in a neighborhood and the vertex set need not be fixed and the loss function corresponding to a GNN is simply an integral across all neighborhoods in the vertex set $\vect{V}(t)$. As $\boldsymbol{\mathcal{V}}$ is endowed with a compact neighborhood topology, the vertex set is decomposable into overlapping neighborhood and the cost $J(\vect{x}([t, t+\Delta t]), \phi([t, t+\Delta t]), \vect{w}([t, t+\Delta t]))$ can be defined as an integral over all vertex sets in the interval $[t, t+\Delta t]$. This key feature in the formulation of a task allows the flexibility to consider distinct vertex sets for distinct graphs is not present in other GCL methods in the literature. The only restriction in this setting is that the total number of vertices in a graph may not exceed the cardinality of $\boldsymbol{\mathcal{V}}.$ 

For simplicity of notations, we will denote the task as  $\mathcal{T}_{[ t, t+\Delta t]}$ which is described by the tuple  $\left( (\vect{x}, \phi)_{[ t, t+\Delta t]}, J_{[t, t+\Delta t]} (\vect{x}, \phi, \vect{w}) \right)$ where the subscript indicates the interval over which the task is defined. This notation, easily extends to a collection of tasks. For instance, all the tasks in the interval $[0,t]$ are collectively provided by $\mathcal{T}_{[ 0, t]}.$ As we use $\vect{t}$ to represent the interval $[0,t],$ it follows that $\mathcal{T}_{[ 0, t]}$ is rewritten as $\mathcal{T}_{\vect{t}}$ which represents all tasks in the interval $[0,t]$ where $\mathcal{T}_{\vect{t}} = \left( (\vect{x}, \phi)_{\vect{t}}, J_{\vect{t}} (\vect{x}, \phi, \vect{w}) \right) $ with $ J_{\vect{t}} (\vect{x}, \phi, \vect{w}) =  \int_{\tau = 0}^{t} \ell(\vect{x}(\tau), \phi(\tau), \vect{w}(\tau)).$ This notation naturally extends to the case when $\vect{\Omega}$ is comprised of discrete instance as well. In this case, we will set $\Delta t =1$ and replace $t$ by $k$ such that $[0,t] = [0,k] = [0,1,2,3,\cdots, k] = \vect{k}$ Furthermore, the collection of all tasks in the interval $[0,k]$ is given by $\mathcal{T}_{\vect{k}}$ with $J_{\vect{k}}(\vect{x} , \phi, \vect{w})= \sum_{\tau = 0}^{k} \ell(\vect{x} (\tau), \phi (\tau), \vect{w}(\tau))$ where $g(\vect{x}(k), \phi(k))$ is the parametric map and $\ell(\vect{x} (k), \phi (k), \vect{w}(k))$ is the corresponding loss with the VERG defined by the probability space $(\mathcal{G}(k), P_{\vect{x}(k) \times  \phi(k)})), k \in \vect{\Omega}.$  

\subsection{Dynamical Systems Modelling}\label{sec:dyn_sys}
Consider now an example for GCL, where at time $t_{i},$  the goal is to perform well on all tasks in the interval $[0, t_i],$ the collection of which is provided by $\mathcal{T}_{ \vect{t}_i}.$  The conventional CL approach involves constructing a cost function as 
$J_{\vect{t}_{1}} (\vect{x} , \phi , \vect{w}) =  \int_{\tau = 0}^{t_i} \ell(\vect{x} (\tau), \phi (\tau), \vect{w}(\tau))$ with all the available tasks and minimizing it. Therefore, the GNN parameter set $\vect{w}( \vect{t}_1)$ is sought such that  $$\underset{\vect{w}( \vect{t}_1)}{min} J_{\vect{t}_{1}} (\vect{x} , \phi , \vect{w}).$$ Next, when at $\vect{t}_2$, new set of tasks are observed and the GNN must be updated such that the GNN parameter set $\vect{w}( \vect{t}_2)$ minimizes $J_{\vect{t}_{2}} (\vect{x} , \phi , \vect{w})$ where \[J_{\vect{t}_{2}} (\vect{x} , \phi , \vect{w}) = J_{\vect{t}_{1}} (\vect{x} , \phi , \vect{w}) + J_{[\vect{t}_{1},  \vect{t}_{2}]} (\vect{x} , \phi , \vect{w}).\] That is, we seek to solve for  both $J_{\vect{t}_{1}}$ and $J_{[\vect{t}_{1},  \vect{t}_{2}]}.$ This naive method of solving GCL leads to two challenges. First, the number of tasks that must be performed increase with each subsequent tasks and the increase in the number of tasks leads to a shrinkage in the feasible region of the solution space. Second, whenever the new set of tasks are observed, $\vect{w}$ obtained for the previous tasks is the starting point and this starting point biases the GNN towards the future tasks. 

To elaborate on these challenges, let there be three graph learning tasks and let the solution space of the  be $\mathcal{W}_1, \mathcal{W}_2$ and $\mathcal{W}_3$ with $\vect{w}_1^*, \vect{w}_1^*$ and $\vect{w}_3^*$ being the ideal solutions~(centers of these feasible regions). Say, the first task is solved perfectly~(solution $w_1^*$ is attained). Therefore, the search for second task begins from $\vect{w}_1^*.$ This process would move the solution from the feasible region $\mathcal{W}_1$ to  $\mathcal{W}_1 \cap \mathcal{W}_2$ because the solution that is optimal for both task 1 and 2 can only be found on this intersection. If the two tasks are identical or very similar, then $\mathcal{W}_1 \cap \mathcal{W}_2$ is similar to $\mathcal{W}_1 \cup \mathcal{W}_2.$ However, if the two tasks are not identical, the intersection space is smaller.

With this behavior, three scenarios may arise. First, with more and more dissimilar tasks, the solution space~(intersection of the feasible regions) will shrink and it is possible that a solution that will perform all tasks equally well may not exist. Second,  if the new task pulls the solution unreasonably towards its own solution space, we will incur large forgetting.  Third, if the new task does not influence the solution at all, we will not learn the new task at all, no generalization.  Due to these reasons, we cannot naively minimize a cost as done by traditional CL methods and it is imperative to update the GNN in such a way that a solution from the previous tasks is neither unreasonably influenced by the new task~(which will reduce forgetting), nor ignore the new task completely~(no generalization). That is -- it is necessary to achieve \textit{balance forgetting and generalization.} 

Even if it was possible to achieve a balance point, a bias exists. For example, say the first task has been learned and we search for the solution of the second task beginning from $\vect{w}_1^*.$  Due to this, $\vect{w}_1^*$ biases the solution for the second task and even the tasks in the future because, the distance between $\vect{w}_1^*$ and $\mathcal{W}_1 \cap \mathcal{W}_2$ determines how likely we will be able to attain an optimal solution for both task one and two and the quality of this solution. As each subsequent new task accumulates this bias, a multi-stage decision making stochastic process must be considered where the weight parameter at each stage influences the subsequent stages. 

Therefore, we need to find a solution that is not only a saddle point between generalization and forgetting but also converges to an optimal solution that is optimal irrespective of the bias. To model the bias, we must characterize ``How does the solution to the GCL problem behave when each new task is observed?" To find the balance, we seek to answer the question ``Can we force this solution to achieve a saddle point that enforces balance between generalization and forgetting?" We will begin by modelling the behaviour~(dynamics) of the solution as a function of each new task. 

In this pursuit, we will take an adaptive dynamic programming viewpoint and leverage tools for optimal control literature~\cite{lewis2012optimal}. Give a collection of tasks, the generalization and catastrophic forgetting cost is
\begin{equation}
    \begin{aligned}
        J_{\vect{t}}(\vect{x}, \phi , \vect{w} )= \int_{\tau = 0}^{t} \ell_{\tau }(\vect{x}, \phi , \vect{w} ) d\tau.
    \end{aligned}
    \label{eq:eq_CL cost}
\end{equation}
Consider the example in \cref{fig:cartoon} where the forgetting and generalization cost at $\vect{t}_1$ is given as $J_{\vect{t}_1}(\vect{x}, \phi , \vect{w} )$ with the weights of the GNN initialized at $\vect{w} (\vect{t}_1).$ As we go through the updates, we obtain obtain $\vect{w}^*(\vect{t}_1)$ by minimizing $J_{\vect{t}_1}(\vect{x}, \phi , \vect{w} ).$  However, when the next set of previous and new tasks are observed,  the quality of $\vect{w}^*(\vect{t}_2)$ is biased  by $\vect{w}^*(\vect{t}_1).$  This bias can be removed if $\vect{w}^*(\vect{t}_1)$ is obtained considering its effect of $\vect{w}^*(\vect{t}_2).$ At $\vect{t}_1,$ if the total cost $L(\vect{t})_1$ is $J_{\vect{t}_1}(\vect{x}, \phi , \vect{w} ) + J_{\vect{t}_2}(\vect{x}, \phi , \vect{w} ) + J_{\vect{t}_3}(\vect{x}, \phi , \vect{w} )$ is minimized, then, we have considered the complete effect of $\vect{w}(\vect{t}^1)$ on ever task in the future. Generalizing this idea over the interval $\vect{\Omega},$ we obtain the GCL problem considering the effect of GNN over all tasks is given as
\begin{equation}
    \begin{aligned}\label{eq:problem_GCL}
    L^*_{\vect{t}} =  \underset{ \vect{w}(\vect{t}) }{min} \int_{\tau=t}^{\Omega}   J_{\boldsymbol{\tau} }(\vect{x}, \phi , \vect{w} ),
    \end{aligned}
\end{equation}
where the  outer integral summarizes the cumulative effect of $\vect{w}(t)$ on all future tasks and the effect of $\vect{w}(t)$ on all the previous and the new task is considered through  $J_{\boldsymbol{\tau} }(\vect{x}, \phi , \vect{w} ).$ Solving \cref{{eq:problem_GCL}} results in $L^*( \vect{t} )$ as the minimum which is obtained considering the effects of the GCL solution over all the tasks in the past, present and the future -- a holistic unbiased solution of GCL. Therefore, our goal of GCL is to find a holistic unbiased policy $\vect{w}(\vect{t})$ which results in $L^*( \vect{t} ).$ 

\subsection{GCL Ordinary Differential Equation~(ODE)}\label{sec:GCL_ode}
It is intractable to calculate the integral in \cref{eq:problem_GCL} as the large part of the domain is unknown since future tasks are not available. We will therefore simplify this problem using Bellman's principal of optimality~\cite{krishnan2020meta,krishnan2021continual} where we will approximate the information regarding future tasks using what information is available right now. With this setup we will now derive the dynamics of the graph continual learning problem through the following proposition.

\begin{proposition} \label{prop:GCL_dynamics}
For $t \in \vect{\Omega}$ and a vertex set $\vect{V}(t):  \vect{\Omega} \rightarrow  \boldsymbol{\mathcal{V}} | \vect{V}(t) \subset \boldsymbol{\mathcal{V}},$ define the CL task as in  \cref{defn:task}. For each $\vect{t} \subset \vect{\Omega}$ let
\begin{equation}
    \begin{aligned}
      L^*_{\vect{t}} =  \underset{ \vect{w}(\vect{t}) }{min} \int_{\tau=t}^{\Omega}   J_{\boldsymbol{\tau} }(\vect{x}, \phi , \vect{w} )
    \end{aligned}
\end{equation}
as the GCL problem. Assume $  J_{\vect{t}}(\vect{x}, \phi , \vect{w} )$ to be smooth with respect to all its arguments. Under the assumption that $R(\vect{t})$ denotes all the higher order terms in a Taylor series expansion, the following is true
\begin{equation}
    \begin{aligned} \label{eq:sp_ODE}
       - \left( \partial_{ \vect{t} } L^*_{\vect{t}} \right)^T \Delta_{\vect{t}} & =  \underset{ \vect{w}( \vect{t} ) }{min} \Big[ J_{\vect{t}}(\vect{x}, \phi , \vect{w} ) 
       + \left( \partial_{\vect{x}_{\vect{t}}} L^*_{\vect{t}} \right)^T \Delta_{\vect{x} }  \\ 
       & + \left( \partial_{ \phi_{\vect{t}} } L^*_{\vect{t}} \right)^T \Delta_{ \phi} 
       + \left( \partial_{\vect{w}_{\vect{t}} } L^*_{\vect{t}} \right)^T \Delta_{\vect{w} } + R_{\vect{t}}, \Big]
    \end{aligned}
\end{equation}
where $\Delta{x}$ refers to tiny perturbation.
\end{proposition}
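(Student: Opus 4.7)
The plan is to derive the equation by combining Bellman's principle of optimality with a first-order Taylor expansion of the value function along the task trajectory. First I would split the outer integral defining $L^*_{\vect{t}}$ at a short increment $\Delta t$, writing
\begin{equation*}
L^*_{\vect{t}} \;=\; \underset{\vect{w}(\vect{t})}{\min}\Big[\int_{\tau=t}^{t+\Delta t} J_{\boldsymbol{\tau}}(\vect{x},\phi,\vect{w})\,d\tau \;+\; \int_{\tau=t+\Delta t}^{\Omega} J_{\boldsymbol{\tau}}(\vect{x},\phi,\vect{w})\,d\tau\Big].
\end{equation*}
Under the smoothness hypothesis on $J_{\vect{t}}$, the principle of optimality lets me replace the second (tail) integral by $L^*_{\vect{t}+\Delta\vect{t}}$ evaluated at the perturbed arguments, while the outer minimization is restricted to the policy on the current slice $[t,t+\Delta t]$.

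Second, I would Taylor expand $L^*_{\vect{t}+\Delta\vect{t}}$ jointly in its four dependencies: the task variable $\vect{t}$, the vertex feature process $\vect{x}$, the edge feature process $\phi$, and the weight policy $\vect{w}$. The smoothness of $J_{\vect{t}}$ propagates to $L^*_{\vect{t}}$ via the value-function definition, so the first-order expansion has well-defined gradient terms and a remainder $R_{\vect{t}}$ that collects all second- and higher-order contributions. This produces exactly the four directional contractions $(\partial_{\vect{t}} L^*_{\vect{t}})^T \Delta_{\vect{t}}$, $(\partial_{\vect{x}_{\vect{t}}} L^*_{\vect{t}})^T \Delta_{\vect{x}}$, $(\partial_{\phi_{\vect{t}}} L^*_{\vect{t}})^T \Delta_{\phi}$, and $(\partial_{\vect{w}_{\vect{t}}} L^*_{\vect{t}})^T \Delta_{\vect{w}}$ that appear on the right-hand side. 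I would then approximate the short-interval running cost by $\int_{t}^{t+\Delta t} J_{\boldsymbol{\tau}}\,d\tau \approx J_{\vect{t}}(\vect{x},\phi,\vect{w})$, pushing the residual into $R_{\vect{t}}$. Substituting back, subtracting $L^*_{\vect{t}}$ from both sides (which is independent of the inner minimizer and can be pulled out), and isolating the $\vect{t}$-derivative term on the left with the sign flipped yields the claimed identity \eqref{eq:sp_ODE}.

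The main obstacle I expect is justifying the joint Taylor expansion cleanly when $\vect{x}$ and $\phi$ are stochastic processes arising from the VERG of \cref{def:stp}, rather than deterministic states as in the classical HJB derivation. I would handle this by interpreting $\Delta_{\vect{x}}$ and $\Delta_{\phi}$ as sample-path increments of the underlying vertex and edge processes on the compact neighborhood topology of $\boldsymbol{\mathcal{V}}$, so that the measurability together with the assumed smoothness makes the relevant directional derivatives well defined; any second-order stochastic correction then collapses into $R_{\vect{t}}$. A secondary subtlety is that the GCL cost $J_{\vect{t}}$ is itself an integral over vertex neighborhoods, but the decomposability noted after \cref{defn:task} allows the expansion to be performed neighborhood-wise and then reassembled, which is what makes the deterministic-looking Bellman manipulation go through.
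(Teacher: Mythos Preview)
Your proposal is correct and follows essentially the same route as the paper: split the defining integral at $t+\Delta t$, invoke Bellman's principle of optimality to identify the tail with $L^*_{\vect{t}^+}$, Taylor expand that value function in $(\vect{t},\vect{x},\phi,\vect{w})$, cancel $L^*_{\vect{t}}$, and move the $\vect{t}$-derivative to the left. Your added discussion of the stochastic-process subtleties and the neighborhood-wise decomposition is in fact more careful than the paper's own derivation, which performs the expansion formally without addressing those points.
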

\begin{proof}
The proof is provided in the supplementary files.
\end{proof}
\cref{prop:GCL_dynamics} summarizes the dynamics of GCL as a function of tasks. The partial derivative of the value function on the left-hand side of the equation describes the change in value function introduced by a change in $\vect{t}$-- the introduction of a new task. The terms in the right-hand side describe which different element quantify this change in the value function. The first term summarizes the cost of generalization and forgetting. The second term characterizes the change due to $\vect{x}$---``how  change in the vertex  features modifies the value function." The third term describes  `` how  the change in the edge features impacts the value function". The fourth term summarizes the impact of model parameters on the value function and the fifth term summarize all the higher-order terms. One of the key features of this ODE is that the change in connectivity of the graphs is modelled by the last two terms of this ODE. The second and the third term are defined on $\vect{V}(t)$ which describes the change in the vertex set. As the vertex set may change as a function of task, both the edge feature and vertex features may change. This ability to theoretically model edge-connectivity provides a unique approach to GCL.

To verify whether the equation GCL ODE is correct or not, we will derive the ODE for when the vertex set has cardinality one, with unit neighborhood size and the edge properties being non-existant. Intuitively, this equation should be equivalent to \cite{krishnan2020meta}.
\begin{lemma} \label{lem:eq_Neurips}
Let the graph ODE be given as
\begin{equation}
    \begin{aligned}
         - \left( \partial_{ \vect{t} }L^*(\vect{t}) \right)^T \Delta_{\vect{t}} & =  \underset{ \vect{w}( \vect{t} ) }{min} \Big[ J_{\vect{t}}(\vect{x}, \phi , \vect{w} ) 
    + \left( \partial_{\vect{x}_{\vect{t}}} L^*_{\vect{t}} \right)^T \Delta_{\vect{x} }  \\ &+ \left( \partial_{ \phi_{\vect{t}} } L^*_{\vect{t}} \right)^T \Delta_{ \phi} 
    + \left( \partial_{\vect{w}_{\vect{t}} } L^*_{\vect{t}} \right)^T \Delta_{\vect{w} } + R_{\vect{t}}, \Big]
    \end{aligned}
\end{equation}
Let $ \vect{t}=t$ and therefore $\Delta t=t.$ Assuming $G = \vect{x}$, where the edge properties have been removed. Define a task in this context as $\mathcal{T}(t)= ( \vect{x}(t), \ell(\vect{w}(t))$ and assume that $R_{t} = 0,$ Then the dynamics of continual learning are governed by
\begin{equation}
    \begin{aligned}
       \partial_{ \vect{t} } L^*(t )^T \Delta_{t} &= -\underset{ \vect{w}(t) }{min} \Big[ J_{\vect{t}}(\vect{x} , \vect{w} )  + \left( \partial_{\vect{x}_{\vect{t}}} L^*_{\vect{t}} \right)^T \Delta_{\vect{x} }  + \left( \partial_{\vect{w}_{\vect{t}} } L^*_{\vect{t}} \right)^T \Delta_{\vect{w} }\Big].
    \end{aligned}
\end{equation}
where $\partial_{y} x$ refers to the partial derivative of $x$ with respect to $y$ and $\Delta_{x}$ refers to the first difference in $x.$
\end{lemma}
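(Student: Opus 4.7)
The plan is to show that \cref{lem:eq_Neurips} follows from \cref{prop:GCL_dynamics} by direct substitution of the four hypotheses (scalar-valued $\vect{t}=t$, no edge properties, trivial task definition $\mathcal{T}(t)=(\vect{x}(t),\ell(\vect{w}(t)))$, and $R_t=0$) into the graph ODE, and then tracking which terms survive. The lemma is essentially a sanity check that the graph ODE collapses to the Euclidean-input CL ODE of \cite{krishnan2020meta} under trivial graph structure, so I would organize the argument as a sequence of term-by-term simplifications rather than as a new derivation.

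First I would observe that removing edge properties collapses the edge stochastic process $\phi:\boldsymbol{\mathcal{V}}\times\boldsymbol{\mathcal{V}}\times\vect{\Omega}\to\mathbb{R}$ to a trivial map: with the vertex set having unit cardinality and unit neighborhood size, the edge index set is effectively empty, so $\phi$ carries no information. The integrand of $J_{\vect{t}}$ then has no $\phi$-dependence, and the value function $L^*_{\vect{t}}$ inherits this, giving $\partial_{\phi_{\vect{t}}} L^*_{\vect{t}}=0$ and $\Delta_{\phi}=0$. Either identity makes the third term on the right-hand side of the graph ODE vanish, and the cost reduces from $J_{\vect{t}}(\vect{x},\phi,\vect{w})$ to $J_{\vect{t}}(\vect{x},\vect{w})$, matching the reduced task definition. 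Applying $R_t=0$ next removes the remainder term, and the renaming $\vect{t}=t$ with $\Delta_{\vect{t}}=\Delta_t$ is purely notational. Finally, multiplying both sides of the resulting equation by $-1$ pulls the negative sign out of the minimization on the right-hand side (which is a global sign outside the argument, converting $\min$ into $-\min$), yielding the $-\min[\cdot]$ form stated in the claim.

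I do not expect a substantive obstacle here; the single step that warrants explicit justification is the vanishing of the $\phi$-derivative, which follows from the fact that $L^*_{\vect{t}}$ is the optimum of an objective whose integrand has no dependence on $\phi$ once edges are removed, and hence any partial derivative with respect to $\phi_{\vect{t}}$ must be identically zero. Everything else is direct substitution, so the exposition should be short and emphasize the interpretive point: the general graph ODE of \cref{prop:GCL_dynamics} specializes correctly to the known Euclidean case, which validates the additional $\phi$-term as the correct object for capturing edge-feature dynamics.
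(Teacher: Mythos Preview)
Your proposal is correct and follows essentially the same approach as the paper, which proves the lemma in a single sentence by substituting $\Delta t = t$, $\vect{t} = t$, and $\left( \partial_{ \phi_{\vect{t}} } L^*_{\vect{t}} \right)^T \Delta_{ \phi} = 0$ directly into the graph ODE. Your version is more detailed---you explicitly justify why the $\phi$-term vanishes and note the final sign flip---but the underlying argument is identical.
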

\begin{proof}
The end result is obtained by simply substituting $\Delta t = t, \vect{t} = t$ and $ \left( \partial_{ \phi_{\vect{t}} } L^*_{\vect{t}} \right)^T \Delta_{ \phi}  = 0.$\end{proof}
The preceding lemma proves that the work in \cite{krishnan2020meta} is a special case of the dynamics provided here. This reinforces the fact that the present work is the most general form of continual learning that can work for both Euclidean and nonEuclidean data. The ODE in \cref{prop:GCL_dynamics} models the behavior of the GCL solution $L^*(\vect{t})$ as a function of the tasks. Next, we seek solution to the GCL problem where we formulate a saddle point problem to balance generalization and forgetting.

\section{Balancing Forgetting and Generalization}\label{sec:balance}
In the preceding section, we formulated the problem of GCL where we obtained ODE that models the behavior of $L^*$-- the optimal solution of the GCL problem, as a function of the tasks. We now seek to update to the policy using the differential equation in \cref{prop:GCL_dynamics}. An update to our policy can be directly obtained by solving the ODE in Eq.~\eqref{eq:eq_CL cost}. However, solving the ODE fully may be quite tedious. Fortunately, it is not necessary to solve the ODE explicitly. The important thing is that when any new task is revealed, we must find an update to the policy such that the new task induces minimum change~(zero change in an ideal case) to the value function. This will allow us to find a stable balance point which is not biased by the tasks. This insight provide the following optimization problem that need to be solved:
\begin{gather}
       \underset{ \vect{w}( \vect{t} ) }{min} \Big[ J_{\vect{t}}(\vect{x}, \phi , \vect{w} ) 
    + \left( \partial_{\vect{x}_{\vect{t}}} L^*_{\vect{t}} \right)^T \Delta_{\vect{x} }  + \left( \partial_{ \phi_{\vect{t}} } L^*_{\vect{t}} \right)^T \Delta_{ \phi} 
    + \left( \partial_{\vect{w}_{\vect{t}} } L^*_{\vect{t}} \right)^T \Delta_{\vect{w} }, \Big]  \\
       \text{ subject to }   
       - \left( \partial_{ \vect{t} }L^*(\vect{t}) \right)^T \Delta_{\vect{t}} + R_{\vect{t}} \leq \kappa,  
       \forall \kappa \in [0,1]. \nonumber
\end{gather}
In many GCL settings, the tasks may arrive at discrete time instances. Therefore, for each $k \in \vect{\Omega}, \Delta_{\vect{t}} = 1$ the optimization problem is given as
\begin{gather}
    \underset{ \vect{w}( \vect{k} ) }{min} \Big[ J_{\vect{k}}(\vect{x}, \phi , \vect{w} ) 
    + \left( \partial_{\vect{x}_{\vect{k}}} L^*_{\vect{t}} \right)^T \Delta_{\vect{x} }  + \left( \partial_{ \phi_{\vect{k}} } L^*_{\vect{k}} \right)^T \Delta_{ \phi} 
    + \left( \partial_{\vect{w}_{\vect{k}} } L^*_{\vect{t}} \right)^T \Delta_{\vect{w} }, \Big]  \\
    \text{ subject to }   
    - \partial_{ \vect{t} }L^*(\vect{k})  + R_{\vect{k}} \leq \kappa,  
    \forall \kappa \in [0,1]. \nonumber
\end{gather}
With the assumption that $R(\vect{k}) \leq \kappa \quad \forall \kappa \in \mathbb{R}$, we write the optimization problem as 
\begin{gather}
    \underset{ \vect{w}_{\vect{k}}}{min} \quad  \mathcal{H}_{\vect{k}}(\Delta_{\phi}, \Delta_{\vect{x} }, \Delta_{\vect{w} }, \vect{x}, \phi ,\vect{w} ),
\end{gather}
where we have gathered all terms in the bracket. In summary, the GCL problem is that of finding a policy~($ \vect{w}$) that minimizes $ \mathcal{H}_{\vect{k}}(\Delta_{\phi}, \Delta_{\vect{x} }, \Delta_{\vect{w} }, \vect{x}, \phi ,\vect{w} )$ whenever a new task is introduced. Typically, at the onset of each new task, there is a change in the graph indicated by $\Delta_{\phi}$ and $\Delta_{\vect{x}}.$ Subsequently, corresponding to this change in the graph, there is a particular $\Delta_{\vect{w}}$ that provides a change in the policy just due to the new task. This change due to the new task experienced by the policy is the process of generalization. Therefore, the larger the value of the delta's, the larger the shift in policy, the larger the generalization. As a  large shift in policy will erase the knowledge of previous tasks ~(as the policy moves away from the previous tasks), large forgetting follows large generalization. Since, large forgetting due to a new task is not desirable, this shift in policy must be regularized. However, such a regularizing in only possible, if the exact value of $\Delta_{\vect{w}}$ corresponding to a given value of $\Delta_{\phi}, \Delta_{\vect{x}}$ is known.  In the absence of such information, which is the case in most GCL/CL applications, we must first ascertain the delta's~($\Delta_{\phi}, \Delta_{\vect{x}}, \Delta_{\vect{w}}$) to estimate generalization and then, update $\vect{w}$ to guarantee minimum forgetting. In this work, we simulate generalization by updating delta's to introduce maximum change in the value function and  update the policy to reduce the effect of worst case generalization. 
Thus, we formulate a two player min-max game where the worst case generalization and the corresponding forgetting is implicitly determined through iterative updates~\cite{dantzig1956constructive} with $\Delta$'s updated through gradient ascent and $\vect{w}$ is updated through gradient descent.

To indicate these iterative updates, we will introduce two iteration indices $i$ and $j$, respectively. We use $i = 1, 2, 3, \cdots, \zeta$ to indicate updates for the $\Delta$'s and  $j=1, 2, 3, \cdots, \rho$ to indicate updates on $\vect{w}.$ Then, we rewrite the optimization problem as $\underset{ \vect{w}^{(j)}_{\vect{k}} }{min} \quad H_{\vect{k}}(\Delta^{(i)}_{\phi}, \Delta^{(i)}_{\vect{x} }, \Delta^{(i)}_{\vect{w} }, \vect{x}, \phi,\vect{w}^{(j)}),$  where we denote the task with the subscript $\vect{k}$ with the min and max iteration at each task denoted by superscript $i$ and $j$, respectively. Next, we define three compact sets $\mathcal{W}, \vect{\mathcal{X}}, \vect{\Phi}$ such that the search space for the optimization problem is described by the triplet $ \Delta_{\vect{w}}^{(i)}  \times  \Delta^{(i)}_{\phi} \times \Delta^{(i)}_{\vect{x}} \in  \mathcal{W}  \times \vect{\Phi} \times \vect{\mathcal{X}}, \quad  \vect{w}^{(i)}\in  \mathcal{W}.$ We approximate the value function using $J_\vect{k},$ as given by the following proposition
\begin{proposition}
Let $\vect{k} \in \vect{\Omega}$ and define $\mathcal{W}, \vect{\mathcal{X}}, \vect{\Phi}$ such that $ \Delta_{\vect{w}}^{(i)} \times  \Delta^{(i)}_{\phi} \times \Delta^{(i)}_{\vect{x}} \in  \mathcal{W}  \times \vect{\Phi} \times \vect{\mathcal{X}}$ and assume that
\begin{subequations} \begin{align}
    sup_{\vect{\mathcal{X}} } L^*_{\vect{k}} &\leq inf_{\vect{\mathcal{X}}} J_{\vect{k}}(\vect{x} , \phi , \vect{w}^{(j)} ) \leq max_{\vect{\Delta}^{(i)}_{\vect{x}_{\vect{k}}} \in \vect{\mathcal{X}} } J_{\vect{k}}(\vect{x}+ \vect{\Delta}_{\vect{x}}^{(i)} , \phi , \vect{w}^{(j)} )      \\
    sup_{\vect{\Phi}} L^*_{\vect{k}} &\leq  inf_{\vect{\Phi}} J_{\vect{k}}(\vect{x}, \phi , \vect{w}^{(j)} ) \leq max_{\Delta^{(i)}_{\phi_{\vect{k}}} \in  \vect{\Phi} } J_{\vect{k}}(\vect{x} , \phi + \Delta^{(i)}_{\phi}  , \vect{w}^{(j)}) \\
     sup_{\mathcal{W}} L^*_{\vect{k}} &\leq inf_{\mathcal{W}} J_{\vect{k}}(\vect{x}, \phi , \vect{w}^{(j)} ) \leq max_{\Delta^{(i)}_{\vect{w}} \in \mathcal{W}} J_{\vect{k}}(\vect{x} , \phi , \vect{w}^{(j)} +\Delta^{(i)}_{\vect{w}}) \\ Inf_{\vect{\mathcal{X}} } L^*_{\vect{k}} \geq 0, 
    Inf_{\vect{\Phi}} L^*_{\vect{k}} &\geq 0, 
     Inf_{\mathcal{W}} L^*_{\vect{k}} \geq 0. \end{align} \end{subequations}
Then, define $\mathcal{H}_{}(\Delta^{(i)}_{\phi}, \Delta^{(i)}_{\vect{x} },\vect{x} , \phi  ,\vect{w}^{(j)} ) =  J_{\vect{k}}(\vect{x}, \phi , \vect{w}^{(j)} ) + \left( \partial_{\vect{x}_{\vect{k}} } L^*_{\vect{k}}\right)^T \Delta^{(i)}_{\vect{x}_{\vect{k}}}     \\ + \left( \partial_{ \phi_{\vect{k}} } L^*_{\vect{k}} \right)^T \Delta_{ \phi_{\vect{k}} } + \left( \partial_{\vect{w}^{(j)}_{\vect{k}}} L^*_{\vect{k}} \right)^T \Delta^{(i)}_{\vect{w}_{\vect{k}}}$ and the following approximation is true
\begin{equation}
	\begin{aligned}
	& H_{\vect{k}}(\Delta^{(i)}_{\phi}, \Delta^{(i)}_{\vect{x} },  \Delta^{(i)}_{\vect{w} },\vect{x} , \phi  ,\vect{w}^{(j)} )  \leq max_{\Delta_{\vect{w}_{\vect{k}}}^{(i)} \times  \Delta^{(i)}_{\phi_{\vect{k}}} \times \Delta^{(i)}_{\vect{x}_{\vect{k}}} \in  \mathcal{W}  \times \vect{\Phi} \times \vect{\mathcal{X}}} \\  
      &\Big[ J_{\vect{k}}(\vect{x}, \phi , \vect{w}^{(j)} ) + \beta_{1} J_{\vect{k}}(\vect{x}+ \vect{\Delta}_{\vect{x}}^{(i)} , \phi , \vect{w}^{(j)} )  + \beta_{2} J_{\vect{k}}(\vect{x}, \phi+\Delta^{(i)}_{\phi} , \vect{w}^{(j)} )  \\ 
      &+ \beta_3 J_{\vect{k}}(\vect{x}, \phi , \vect{w}^{(j)} +\Delta^{(i)}_{\vect{w}} ),  \Big] 
	\end{aligned}
	\label{eq:eq_opt}
\end{equation}
where $\beta_k  \in \mathbb{R} \cup [0,1], \forall k$ and $\zeta \in \mathbb{N}$ indicates finite difference updates.
\label{prop:finit_app}
\end{proposition}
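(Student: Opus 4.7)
The plan is to bound each of the three inner-product terms that appear in $H_{\vect{k}}$ separately, by interpreting them as first-order finite differences of $L^*_{\vect{k}}$ and then applying the sandwich assumptions (a)--(c) together with the nonnegativity condition (d). First I would rewrite $H_{\vect{k}}$ by grouping it as
\begin{equation*}
H_{\vect{k}} = J_{\vect{k}}(\vect{x},\phi,\vect{w}^{(j)}) + \left(\partial_{\vect{x}_{\vect{k}}} L^*_{\vect{k}}\right)^T \Delta^{(i)}_{\vect{x}} + \left(\partial_{\phi_{\vect{k}}} L^*_{\vect{k}}\right)^T \Delta^{(i)}_{\phi} + \left(\partial_{\vect{w}_{\vect{k}}} L^*_{\vect{k}}\right)^T \Delta^{(i)}_{\vect{w}},
\end{equation*}
and handle each of the last three terms by an identical argument.

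For the $\vect{x}$-derivative term I would invoke the smoothness of $J$ inherited from Proposition \ref{prop:GCL_dynamics} and use a first-order Taylor expansion so that $\left(\partial_{\vect{x}_{\vect{k}}} L^*_{\vect{k}}\right)^T \Delta^{(i)}_{\vect{x}} \le L^*_{\vect{k}}(\vect{x}+\Delta^{(i)}_{\vect{x}},\phi,\vect{w}^{(j)}) - L^*_{\vect{k}}(\vect{x},\phi,\vect{w}^{(j)})$ up to a higher-order term that vanishes in the finite-difference regime indexed by $\zeta \in \mathbb{N}$. Assumption (a) then bounds the first quantity by $\sup_{\vect{\mathcal{X}}} L^*_{\vect{k}} \leq \max_{\Delta^{(i)}_{\vect{x}} \in \vect{\mathcal{X}}} J_{\vect{k}}(\vect{x}+\Delta^{(i)}_{\vect{x}},\phi,\vect{w}^{(j)})$, while assumption (d) gives $-L^*_{\vect{k}}(\vect{x},\phi,\vect{w}^{(j)}) \leq 0$. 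Repeating this reasoning using assumptions (b) and (c) yields the analogous upper bounds with $\phi$ and $\vect{w}^{(j)}$ perturbed, respectively. The constants $\beta_1,\beta_2,\beta_3$ are introduced to absorb the scaling of the Taylor remainder and any slack coming from comparing the supremum of $L^*_{\vect{k}}$ to the maximum of $J_{\vect{k}}$; they can always be chosen large enough in $[0,1]$ (or more broadly in $\mathbb{R}$) for the bound to be valid.

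Assembling the three bounds I get $H_{\vect{k}} \le J_{\vect{k}}(\vect{x},\phi,\vect{w}^{(j)}) + \beta_1 \max_{\Delta_{\vect{x}}} J_{\vect{k}}(\vect{x}+\Delta^{(i)}_{\vect{x}},\phi,\vect{w}^{(j)}) + \beta_2 \max_{\Delta_{\phi}} J_{\vect{k}}(\vect{x},\phi+\Delta^{(i)}_{\phi},\vect{w}^{(j)}) + \beta_3 \max_{\Delta_{\vect{w}}} J_{\vect{k}}(\vect{x},\phi,\vect{w}^{(j)}+\Delta^{(i)}_{\vect{w}})$. Finally I would consolidate the three separate single-coordinate maxima into a single joint maximum over the product search space $\mathcal{W} \times \vect{\Phi} \times \vect{\mathcal{X}}$ by noting that each summand depends on only one of the three perturbations, so an upper bound is obtained by taking the joint max of the whole sum; this produces exactly the right-hand side of \eqref{eq:eq_opt}.

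The main obstacle I anticipate is justifying the finite-difference step rigorously. The partial derivatives are of $L^*_{\vect{k}}$, which is only implicitly defined via the optimization in Proposition \ref{prop:GCL_dynamics}, so replacing $\left(\partial_{\vect{x}_{\vect{k}}} L^*_{\vect{k}}\right)^T \Delta^{(i)}_{\vect{x}}$ with a difference of $L^*$ values requires either the smoothness condition stated in Proposition \ref{prop:GCL_dynamics} and a bounded-remainder argument, or an explicit appeal to the finite-difference discretization that $\zeta \in \mathbb{N}$ is meant to encode. A secondary delicate point is confirming that the same inequality is preserved when the coordinate-wise maxima are replaced by a single joint maximum, which is immediate because each summand is monotone in only one of the three $\Delta$ variables but deserves an explicit line in the written proof.
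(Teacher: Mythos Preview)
Your plan matches the paper's proof almost exactly: the paper also treats each of the three derivative terms separately via a finite-difference (Euler) approximation, bounds the resulting difference $L^*_{\vect{k}}(\cdot+\Delta)-L^*_{\vect{k}}(\cdot)$ by $\sup L^*_{\vect{k}}-\inf L^*_{\vect{k}}$, invokes assumptions (a)--(d) to dominate this by the corresponding $\max J_{\vect{k}}$, and then pulls the three separate maxima into one joint maximum over $\mathcal{W}\times\vect{\Phi}\times\vect{\mathcal{X}}$. The only point where the paper is slightly more concrete than your sketch is the origin of the $\beta_i$: in the paper they arise as the reciprocal $1/\Delta_{\vect{k}}$ of the discretization step in the Euler quotient, rather than from a Taylor-remainder slack, so your written proof should identify them that way.
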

\begin{proof}   The proof can be found in the supplementary material. \end{proof}
Using \cref{prop:finit_app}, the upper bound to our optimization problem is
\begin{align}
 & \underset{ \vect{w}^{(j)}_{\vect{k}} }{min} \; H_{\vect{k}}(\Delta^{(i)}_{\phi}, \Delta^{(i)}_{\vect{x} }, \Delta^{(i)}_{\vect{w}},\vect{x} , \phi  ,\vect{w}^{(j)} ) \leq \underset{ \vect{w}^{(j)}_{\vect{k}} }{min} 
 \underset{\Delta_{\vect{w}_{\vect{k}}}^{(i)} \times  \Delta^{(i)}_{\phi_{\vect{k}}} \times \Delta^{(i)}_{\vect{x}_{\vect{k}}} \in  \mathcal{W}  \times \vect{\Phi} \times \vect{\mathcal{X}}}{max}  \nonumber  \\ & \Big[ J_{\vect{k}}(\vect{x}, \phi , \vect{w}^{(j)} )     + \beta_{1} J_{\vect{k}}(\vect{x}+ \vect{\Delta}_{\vect{x}}^{(i)} , \phi , \vect{w}^{(j)})    + \beta_{2} J_{\vect{k}}(\vect{x}, \phi+\Delta^{(i)}_{\phi} , \vect{w}^{(j)} )  + \beta_3 \\ &  J_{\vect{k}}(\vect{x}, \phi , \vect{w}^{(j)} +\Delta^{(i)}_{\vect{w}} )  \Big]       \leq \underset{ \vect{w}^{(j)}_{\vect{k}} }{min} \underset{\Delta_{\vect{w}_{\vect{k}}}^{(i)} \times  \Delta^{(i)}_{\phi_{\vect{k}}} \times \Delta^{(i)}_{\vect{x}_{\vect{k}}} }{max} \mathcal{H}_{\vect{k}}(\Delta^{(i)}_{\phi}, \Delta^{(i)}_{\vect{x} }, \Delta^{(i)}_{\vect{w}},\vect{x} , \phi  ,\vect{w}^{(j)}).\nonumber
\end{align} 
For notational simplification, we will pool all the maximizing parameters using a block column vector $\vect{u}$ and write \begin{align} min_{\vect{w}^{(j)}_{\vect{k}} } \quad max_{\vect{u}_{\vect{k}}^{(i)}} \quad \mathcal{H}_{\vect{k}}(\vect{u}^{(i)}, \vect{w}^{(j)} ), \label{eq:minmax_prob} \end{align} with   $ \mathcal{H}_{\vect{k}}(\vect{u}^{(i)}, \vect{w}^{(j)} ) = J_{\vect{k}}(\vect{w}^{(j)} )+ \beta_{1} J_{\vect{k}}(\vect{u}_{0}^{(i)}, \vect{w}^{(j)} )+ \beta_{2} J_{\vect{k}}(\vect{u}_{1}^{(i)} , \vect{w}^{(j)} ) + \beta_3 J_{\vect{k}}(\vect{u}_{2}^{(i)}, \vect{w}^{(j)}),$ where $\vect{u}_l$ denotes the $l^{th}$ element in the block-column vector $\vect{u}.$ Generalization is simulated through $\vect{u},$~(the delta's and the first player) and the forgetting through the policy $\vect{w}$~(the second player). The two players, $\vect{u}$ and  $\vect{w}$ chose adversarial strategies and introduce dynamics by increasing and decreasing $\mathcal{H}_{\vect{k}}.$ Many different strategies are possible but, in this work, we choose stochastic gradient ascent-descent. Over different iterations, the two players introduce the dynamics of a game using the gradient of the
$\mathcal{H}_{\vect{k}}$ to either perform ascent updates as in the case of $\vect{u}$ or descent update as in the case of $\vect{w}.$ This push-pull play will converge when the gradient of $\mathcal{H}_{\vect{k}}$ will approach zero at which point, the two players have no incentive to move and an equilibrium point~(saddle point between two players) will be achieved. In the context of GCL, this equilibrium state is known as the balance between forgetting and generalization. Thus, two theoretical questions arise, ``Is there such a balance point?," and ``Can this balance be achieved?" Later in the theoretical analysis section, we demonstrate that the answer to these questions is indeed yes. However, first, we detail an algorithm through which this two player game will be played.

Specifically, we define two datasets: $D^{P}_{k}$, the previous tasks dataset, and $D^{N}_{k}$, the new task dataset. With this setup, we define $\mathcal{W}  \times \vect{\mathcal{X}} \times \vect{\Phi}$ as $\mathcal{U}$ to be the compact search space for $\vect{u}^{(i)}$ and $\mathcal{W}$ as the search space for $\vect{w}^{(j)}.$ The learning problem is 
\begin{equation}
    \begin{aligned} \label{eq: Game_m}
    \underset{ \vect{w}^{(j)} \in \mathcal{W} }{min} \quad \underset{ \vect{u}^{(i)} \in \mathcal{U}  }{max} \quad \mathcal{H}_{\vect{k}}( \vect{u}^{(i)}, \vect{w}^{(j)} ).
    \end{aligned}
\end{equation}
This algorithm follows a two-step strategy and is detailed in \cref{alg1a}.
\begin{algorithm}
     \caption{Graph Continual Learning \label{alg1a}}
  \begin{algorithmic}[1]
\STATE{Initialize ${\vect{w}^{(j)}}, D^{N}_{k}, D^{P}_{k}.$} \\
\WHILE{$k=1,2,3, . K$}
    \STATE{j = 0}  \\
    \WHILE{$j < \rho$}
        \STATE{Fix  $\vect{w}^{(j)}$ } \\
        \WHILE{$i+1 <= \zeta$}
            \STATE{  Sample $\vect{b}_{N} \in D^{N}_{k}, \vect{b}_{P} \in D^{P}_{k}$ and get $\vect{b}_{PN} = \vect{b}_{P} \cup \vect{b}_{N}.$ 
            i = 0.}\\
            \STATE{ Update~$\vect{u}^{(i)}$ through gradient ascent on $\mathcal{H}_{\vect{k}}(\vect{u}^{(i)}, \vect{w}^{(j)} ).$} \\
            \STATE{ i = i+1. }
        \ENDWHILE
        \STATE{ Fix $\vect{u}^{(\zeta)}$}
        \STATE{ Sample $\vect{b}_{N} \in D^{N}_{k}, \vect{b}_{P} \in D^{P}_{k}$ and get $\vect{b}_{PN} = \vect{b}_{P} \cup \vect{b}_{N}.$} \\
        \STATE{ Update~$\vect{w}^{(j)}$ using gradient descent on $\mathcal{H}_{\vect{k}}(\vect{u}^{(\zeta)}, \vect{w}^{(j)} ).$} \\
        \STATE{ j= j+1.}
      \ENDWHILE
    \STATE{Update ${D_{P}}$ with  ${D_{N}}$}
    \ENDWHILE
     \end{algorithmic}
\end{algorithm}

We first update $\vect{u}^{(i)}$ and attain $\vect{u}^{*}$ in $\mathcal{U}.$ With a fixed solution in $\mathcal{U}$, we find $\vect{w}^*$ in $\mathcal{W}$ by updating $\vect{w}^{(j)}.$ With repeated iterations, we converge to the equillibrium point~$(\vect{u}^*, \vect{w}^*).$ The existence and the convergence is guaranteed through theoretical analysis in the following section. 

\section{Theoretical Analysis}\label{sec:theory}
In this section, we will define all the expected values respect to the joint probability measure $P_{\vect{x} \times \phi}$ as described in \cref{defn:task}. To perform continual learning according to \cref{alg1a}, we define $\mathcal{U}$ to be the search space for $\vect{u}^{(i)}$ and $\mathcal{W}$ is the search space for $\vect{w}^{(j)}.$ Then, the learning problem is provided as 
\begin{equation}
    \begin{aligned} \label{eq: Game}
    \underset{ \vect{w}^{(j)} \in \mathcal{W} }{min} \quad \underset{ \vect{u}^{(i)} \in \mathcal{U}  }{max}  \mathcal{H}_{\vect{k}}( \vect{u}^{(i)}, \vect{w}^{(j)} ).
    \end{aligned}
\end{equation}
We will denote $\vect{g}^{(j)}_{\vect{w}}$ and $\vect{g}^{(i)}_{\vect{u}}$ as the derivative of $H_{\vect{k}}( \vect{u}^{(i)}, \vect{w}^{(j)} )$ with respect to $\vect{w}$ and $\vect{u}$ respectively. For a minibatch sampled according to the distribution $P_{\vect{x} \times \phi}$, we will define a minibatch estimate of the gradients as $\hat{\vect{g}}_{\vect{u}}^{(i)}$ and $\hat{\vect{g}}_{\vect{w}}^{(j)}$ respectively.  We will use $\vect{g}^{(j)}_{\vect{w}}(m)$ indicating gradients with respect to the $m^{th}$ datapoint in the mini-batch. At this point, we make the following assumptions 
\begin{assumption} \label{ass:lip_bounded}
The function $J_{\vect{k}}$ is Lipschitz continuous, that is  
\begin{subequations} \begin{align}
    \|\nabla_{ \vect{u}^{(i+1)}} J_{\vect{k}} -\nabla_{ \vect{u}^{(i)} } J_{\vect{k}}  \| &\leq M \|\vect{u}^{(i+1)} - \vect{u}^{(i)}\|      \\
    \|\nabla_{ \vect{w}^{(i+1)}} J_{\vect{k}} -\nabla_{ \vect{w}^{(i)} } J_{\vect{k}}  \| &\leq L_w \|\vect{w}^{(i+1)} - \vect{w}^{(i)}\|  \\
    \forall \vect{u}^{(i+1)}, \vect{u}^{(i)} \in \mathcal{U}, & \forall \vect{w}^{(i+1)}, \vect{w}^{(i)} \in \mathcal{W}. \nonumber
\end{align} \end{subequations} 
Furthermore, the gradient is bounded with respect to all its arguments.
\begin{equation} \begin{aligned}
    \|\nabla_{ \vect{u}_{0}^{(i)}} J_{\vect{k}} \|  \leq G_{\vect{x}} &, \|\nabla_{ \vect{u}_{1}^{(i)}} J_{\vect{k}} \| \leq G_{\phi} 
    \|\nabla_{ \vect{u}_{2}^{(i)}} J_{\vect{k}} \|  \leq G_{\vect{w}} &,   \|\nabla_{ \vect{w}^{(i)}} J_{\vect{k}} \| \leq G \\ 
    \forall \vect{u}^{(i)} \in \mathcal{U}, & \forall  \vect{w}^{(i)} \in \mathcal{W}.
\end{aligned} \end{equation} 
\end{assumption}
Before presenting our results, we will bound the expected value of gradients~($\hat{\vect{g}}_{\vect{u}}^{(i)}, \hat{\vect{g}}_{\vect{w}}^{(i)}$) based on the assumptions described here.
\begin{lemma}
Let  \cref{ass:lip_bounded} be true and let the size of $D^{P}_{k} \cup D^{N}_{k}$, be described by $N$ with the batch size given by $b.$ Assume that a minibatch is obtained by sampling uniformly from the dataset and define 
$H_{\vect{k}}( \vect{u}^{(i)}, \vect{w}^{(j)} ) = [ J_{\vect{k}}(\vect{w}^{(j)} )+ \beta_{1} J_{\vect{k}}(\vect{u}_{0}^{(i)}, \vect{w}^{(j)} )+ \beta_{2} J_{\vect{k}}(\vect{u}_{1}^{(i)} , \vect{w}^{(j)} ) + \beta_3 J_{\vect{k}}(\vect{u}_{2}^{(i)}, \vect{w}^{(j)}) ]$ with $\bar{G}=[ G_{\phi}+ G_{\vect{x}} +  G_{\vect{w}} ].$ Then,  the following inequalities are true.
\begin{align} 
\|\vect{g}^{(i)}_{\vect{u}} \|                         &\leq  \beta \bar{G}   & \| \vect{g}^{(j)}_{\vect{w}} \| &\leq  \Big[(1+3\beta) G   \Big] \nonumber \\ 
\mathbb{E}\left[\| \vect{g}^{(i)}_{\vect{u}} \|\right] &\leq  \beta \frac{b\bar{G}}{N} & \mathbb{E}\left[\| \vect{g}^{(i)}_{\vect{w}} \|\right]  &\leq  \beta \frac{b(1+3\beta) G}{N} \\ 
         Var(\vect{g}^{(i)}_{\vect{u}})  &\leq \frac{2b\beta^2\left(N^2+b^2\right)}{N^3} \bar{G}^2 &  Var(\vect{g}^{(j)}_{\vect{w}}) & \leq \left[\frac{bN^2+b^3}{N^3} \right] \left[ G^2 (1+3\beta)^2\right].  \nonumber
\end{align}
    where $\|.\|$ refers to the $\ell_{2}$ norm. \label{lem:bound_grad}
\end{lemma}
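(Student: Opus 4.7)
The plan is to split the statement into four sub-claims that can be attacked in sequence: the two deterministic norm bounds on $\vect{g}_{\vect{u}}^{(i)}$ and $\vect{g}_{\vect{w}}^{(j)}$, and then the two moment bounds for their minibatch counterparts. Throughout, I will interpret the unsubscripted $\beta$ in the lemma as a common upper bound for $\beta_1,\beta_2,\beta_3 \in [0,1]$.

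First, for the deterministic bounds I would differentiate $H_{\vect{k}}(\vect{u}^{(i)},\vect{w}^{(j)})$ term by term. Because the block vector $\vect{u}=(\vect{u}_0,\vect{u}_1,\vect{u}_2)$ enters only through the $l$th block of a single summand in $H_{\vect{k}}$, the gradient $\vect{g}_{\vect{u}}^{(i)}$ is the block vector whose $l$th block is $\beta_{l+1}\nabla_{\vect{u}_l} J_{\vect{k}}$. Applying the triangle inequality across blocks and invoking the per-argument bounds in \cref{ass:lip_bounded} yields $\|\vect{g}_{\vect{u}}^{(i)}\| \le \beta(G_{\vect{x}}+G_{\phi}+G_{\vect{w}})=\beta\bar{G}$. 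For $\vect{w}$, all four summands of $H_{\vect{k}}$ contribute, so $\vect{g}_{\vect{w}}^{(j)}$ is a sum of four gradients of $J_{\vect{k}}$ in its $\vect{w}$-slot, each bounded by $G$; grouping the $J_{\vect{k}}(\vect{w}^{(j)})$ term with the three $\beta$-weighted ones gives $\|\vect{g}_{\vect{w}}^{(j)}\|\le(1+3\beta)G$.

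Next, for the minibatch first-moment bounds, I would model a minibatch $\vect{b}_{PN}$ of size $b$ drawn from $D^P_k\cup D^N_k$ (of size $N$) as a uniform random subset $B\subset\{1,\dots,N\}$, and adopt the unnormalized per-sample-sum estimator $\hat{\vect{g}} = \frac{1}{N}\sum_{m\in B}\vect{g}(m)$, where $\vect{g}(m)$ is the gradient of the $m$th summand of $H_{\vect{k}}$. By the deterministic step, each $\|\vect{g}(m)\|$ inherits the same bound as the full-batch gradient, so the triangle inequality gives $\|\hat{\vect{g}}_{\vect{u}}^{(i)}\|\le \tfrac{b}{N}\beta\bar{G}$ and $\|\hat{\vect{g}}_{\vect{w}}^{(j)}\|\le\tfrac{b}{N}(1+3\beta)G$ pointwise; taking expectations preserves the inequalities and recovers the stated bounds.

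Finally, for the variance bounds, I would write $\hat{\vect{g}} = \frac{1}{N}\sum_{m=1}^{N} I_m\,\vect{g}(m)$ with indicators $I_m=\mathbf{1}[m\in B]$, and expand
\[
\mathrm{Var}(\hat{\vect{g}}) \;=\; \frac{1}{N^2}\sum_{m,m'}\mathrm{Cov}(I_m,I_{m'})\,\vect{g}(m)^\top\vect{g}(m').
\]
Plugging in the moments of a uniform size-$b$ subset ($\mathrm{Var}(I_m)=\tfrac{b(N-b)}{N^2}$ and small $O(1/N^2)$ off-diagonal covariances), I would separate the diagonal from the off-diagonal contribution, upper-bound each $\vect{g}(m)^\top\vect{g}(m')$ by Cauchy--Schwarz and the per-sample gradient bounds from the first step, and then collect coefficients until they match the announced forms $\tfrac{2b(N^2+b^2)}{N^3}\bar{G}^2$ and $\tfrac{bN^2+b^3}{N^3}(1+3\beta)^2G^2$. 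The main obstacle I anticipate is this last bookkeeping: the particular closed-form prefactors are not directly produced by the standard finite-population variance identity, so I expect to need a deliberately loose triangle-inequality relaxation on the cross terms that keeps both the $b/N$ and the $b^3/N^3$ contributions while discarding a tight covariance cancellation, rather than an exact combinatorial identity.
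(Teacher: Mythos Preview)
Your deterministic norm bounds and first-moment bounds are exactly what the paper does: differentiate $H_{\vect{k}}$ term by term, use $\beta_l\le\beta$ and the per-argument gradient bounds from \cref{ass:lip_bounded}, and for the minibatch estimator sum $b$ contributions each weighted by $1/N$. Nothing to change there.

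The variance step is where you and the paper diverge. You propose the finite-population route with indicators $I_m$ and the covariance expansion $\tfrac{1}{N^2}\sum_{m,m'}\mathrm{Cov}(I_m,I_{m'})\vect{g}(m)^\top\vect{g}(m')$, then Cauchy--Schwarz on the cross terms. That is the textbook way and would give a correct (indeed tighter) bound, but as you anticipated it does not naturally spit out the prefactors $\tfrac{2b(N^2+b^2)}{N^3}$ and $\tfrac{bN^2+b^3}{N^3}$. The paper does something much cruder: it writes the ``variance'' as a batch-sum $\sum_{b}\tfrac{1}{N}\|\vect{g}-\hat{\vect{g}}\|^2$, applies the elementary inequality $\|a-b\|^2\le 2\|a\|^2+2\|b\|^2$ (via Young), and then plugs in the already-established bounds $\|\vect{g}_{\vect{u}}^{(i)}\|\le\beta\bar G$ and $\|\hat{\vect{g}}_{\vect{u}}^{(i)}\|\le\tfrac{b}{N}\beta\bar G$. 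Summing $b$ copies of $\tfrac{2}{N}\bigl(\beta^2+\tfrac{b^2\beta^2}{N^2}\bigr)\bar G^2$ gives exactly $\tfrac{2b\beta^2(N^2+b^2)}{N^3}\bar G^2$, and the $\vect{w}$-variance is handled identically. So the mysterious prefactors are an artefact of this two-term relaxation, not of any sampling combinatorics. Your approach buys a genuinely sharper constant and a cleaner probabilistic interpretation; the paper's approach buys a one-line derivation that matches the stated lemma verbatim. If you want to reproduce the lemma as written, switch to the $\|a-b\|^2\le2\|a\|^2+2\|b\|^2$ shortcut for this step.
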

In this work, the goal is to prove that a equillibrium point for the two player game exists and can be reached. Since, the two player game is sequential, we seek a local min-max point or Stackleberg equillibrium. The following definitions are adapted from \cite{jin2020local} which provides approximate conditions for sequential minmax games as 
\begin{defn}\label{def:stack_eq}.
Let there be two compact sets $\mathcal{U}$ and $\mathcal{W}$ and assume $\mathcal{H}_{\vect{k}}$ to be twice differentiable, then  $(\vect{u}^{*}, \vect{w}^{*}) \in  \mathcal{U} \times \mathcal{W}$ is said to be a local minimax point or at Stackleberg equillibrium  for $\mathcal{H}_{\vect{k}}$, if 
the following is true
\begin{equation} \begin{aligned} \label{cond_stack_eq}
    \norm{ \mathbb{E}[ \mathcal{H}_{\vect{k}}(\vect{u}^{*}, \vect{w}^{*}) - \mathcal{H}_{\vect{k}}(\vect{u}^{(i)}, \vect{w}^{*})] }&\leq \epsilon(\delta_u),       \\
\norm{ \mathbb{E} \left[ \mathcal{H}_{\vect{k}}(\vect{u}^{*}, \vect{w}^{*} ) - \underset{\vect{u}^{(i)} \in \mathcal{\vect{U}} }{max}\mathcal{H}_{\vect{k}}(\vect{u}^{(i)}, \vect{w}^{(j)}) \right] } &\leq \epsilon(\delta_w, \delta_u).
\end{aligned} \end{equation}
for every $(\vect{u}^{(i)}, \vect{w}^{(j)} ) \in   \mathcal{U} \times \mathcal{W} $ such that for any  $\delta_u, \delta_w  \in \mathbb{R}^+$ with $ \mathbb{E}[\|\vect{w}^{(j)} - \vect{w}^{*}] \| \leq \delta_w, \quad \mathbb{E}[\|\vect{u}^{(i)} - \vect{u}^{*}] \| \leq \delta_u, $ with $\epsilon(\delta_u, \delta_w),\epsilon(\delta_u)  \in \mathbb{R^{+}}.$
\end{defn}
In what follows, we will show that a local min-max point~(which is equivalent to the Stackleberg equilibrium in a two-player setting~\cite{jin2020local}) exists~( definition \cref{def:stack_eq}) and that the algorithm converges.

\subsection{Theorem 1, Existence of the minmax point}\label{theory:theorem1}
We will first show the existence of Stackleberg equillibrium. It suffices to show that there exists a $(\vect{u}^{*}, \vect{w}^{*})$ such that the conditions from definition \cref{def:stack_eq} are satisfied. This is given as follows.
\begin{theorem}[Existence of an Equilibrium Point]
For each task $k,$  fix $\vect{w}^{*} \in\mathcal{W}$ and construct $\mathcal{M}= \{\vect{\mathcal{U}},\vect{w}^{*} \}.$  Let assumption \cref{ass:lip_bounded} be true, define a dataset $D_{P} \cup D_{N}$ of size $N>0$ and sample uniformly a mini-batch  of size $b.$ Next, define
$ \mathcal{H}_{\vect{k}}( \vect{u}^{(i)}, \vect{w}^{(j)} ) =  J_{\vect{k}}(\vect{w}^{(j)} )+ \beta_{1} J_{\vect{k}}(\vect{u}_{0}^{(i)}, \vect{w}^{(j)} ) + \beta_{2} J_{\vect{k}}(\vect{u}_{1}^{(i)} , \vect{w}^{(j)} ) + \beta_3 J_{\vect{k}}(\vect{u}_{2}^{(i)}, \vect{w}^{(j)}) $ with $\beta_1, \beta_2, \beta_3 \leq \beta > 0.$ Let the inequalities from \cref{lem:bound_grad} be true. Under assumption that $\alpha^{(i)}_{u}>0,  b>0, \beta>0, N>0,$ then, there conditions in \cref{cond_stack_eq} are satisfied with  \begin{subequations}
    \begin{align}  \epsilon(\delta_u) & = \frac{M+1}{2}\delta_u^2 + \bar{G}^2 \left( \frac{1}{2}\left( \frac{b\bar{G}}{N}\right)^2 + \frac{2b\beta^2\left(N^2+b^2\right)}{N^3} \right), \\
    \epsilon(\delta_u, \delta_w) &= (\frac{L_w+1}{2})\delta_w^2 +  G^2 \left( \frac{\left[ (1+3\beta)^2\right]}{2} +  \left[\frac{bN^2+b^3}{N^3} \right] \left[ (1+3\beta)^2\right] \right) 
\\ &+ \frac{M+1}{2}\delta_u^2 + \bar{G}^2 \left( \frac{1}{2}\left( \frac{b}{N}\right)^2 + \frac{2b\beta^2\left(N^2+b^2\right)}{N^3} \right) \nonumber.\end{align}
\end{subequations} 
Here $\| \vect{u}^{(i)} - \vect{u}^{*}\| \leq \delta_u, \forall i$  and $\| \vect{w}^{(j)} - \vect{w}^{*}\| \leq \delta_w, \forall j.$ Furthermore, there exists a $(\vect{u}^{*}, \vect{w}^{*}) \in \mathcal{M} \cup \mathcal{N}$ such that  $(\vect{u}^{*}, \vect{w}^{*})$ is a local minimax point or a Stackleberg equillibrium point according to \cref{def:stack_eq}.
\end{theorem}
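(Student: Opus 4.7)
The plan is to establish the theorem in two logically independent steps: first, argue the \emph{existence} of a Stackelberg point $(\vect{u}^{*}, \vect{w}^{*}) \in \mathcal{U}\times\mathcal{W}$ using the compactness hypotheses and the fact that $\mathcal{H}_{\vect{k}}$ inherits continuity from \cref{ass:lip_bounded}; second, derive the quantitative $\epsilon(\delta_u)$ and $\epsilon(\delta_u,\delta_w)$ bounds in \cref{cond_stack_eq} by combining a second-order Taylor expansion (with Lipschitz-remainder) around $(\vect{u}^{*}, \vect{w}^{*})$ with the expected-gradient and variance bounds of \cref{lem:bound_grad}. The existence part is lightweight: since $\mathcal{W}$ is compact and $\mathcal{H}_{\vect{k}}(\cdot,\vect{w})$ is continuous in $\vect{u}$ on the compact set $\mathcal{U}$, the Weierstrass extreme-value theorem furnishes $\vect{u}^{*}(\vect{w}) := \arg\max_{\vect{u}\in\mathcal{U}} \mathcal{H}_{\vect{k}}(\vect{u},\vect{w})$, and then the minimizer $\vect{w}^{*}$ of $\vect{w}\mapsto \mathcal{H}_{\vect{k}}(\vect{u}^{*}(\vect{w}),\vect{w})$ again exists by compactness (plus a standard Berge/maximum-theorem argument for continuity of the inner value). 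Setting $\vect{u}^{*} := \vect{u}^{*}(\vect{w}^{*})$ yields a candidate Stackelberg point.

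For the first inequality in \cref{cond_stack_eq}, I would fix $\vect{w}^{*}$ and Taylor-expand $\mathcal{H}_{\vect{k}}(\vect{u}^{(i)},\vect{w}^{*})$ about $\vect{u}^{*}$. The Lipschitz condition on $\nabla_{\vect{u}} J_{\vect{k}}$ from \cref{ass:lip_bounded} (with constant $M$) gives the remainder bound
\begin{equation*}
\bigl|\mathcal{H}_{\vect{k}}(\vect{u}^{*},\vect{w}^{*}) - \mathcal{H}_{\vect{k}}(\vect{u}^{(i)},\vect{w}^{*})\bigr|
\;\leq\; \bigl|\langle \vect{g}^{(i)}_{\vect{u}},\, \vect{u}^{*}-\vect{u}^{(i)}\rangle\bigr| + \tfrac{M}{2}\|\vect{u}^{*}-\vect{u}^{(i)}\|^2 .
\end{equation*}
Taking expectations over the minibatch, applying Young's inequality $|\langle a,b\rangle|\le \tfrac12\|a\|^2+\tfrac12\|b\|^2$, and using $\mathbb{E}\|\hat{\vect{g}}_{\vect{u}}^{(i)}\|^2 = \|\mathbb{E}\hat{\vect{g}}_{\vect{u}}^{(i)}\|^2 + \mathrm{Var}(\hat{\vect{g}}_{\vect{u}}^{(i)})$, I substitute the bounds $\mathbb{E}\|\hat{\vect{g}}_{\vect{u}}\|\le \beta b\bar{G}/N$ and $\mathrm{Var}(\hat{\vect{g}}_{\vect{u}}) \le 2b\beta^{2}(N^{2}+b^{2})\bar{G}^{2}/N^{3}$ from \cref{lem:bound_grad}. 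Combining the constant from Young's inequality with the $M/2$ Lipschitz constant yields the coefficient $(M+1)/2$ on $\delta_u^{2}$, and collecting the minibatch terms reproduces exactly the stated $\epsilon(\delta_u)$.

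For the second inequality, I let both arguments vary and split via the triangle inequality:
\begin{equation*}
\bigl|\mathcal{H}_{\vect{k}}(\vect{u}^{*},\vect{w}^{*}) - \textstyle\max_{\vect{u}}\mathcal{H}_{\vect{k}}(\vect{u}^{(i)},\vect{w}^{(j)})\bigr|
\;\leq\; \bigl|\mathcal{H}_{\vect{k}}(\vect{u}^{*},\vect{w}^{*}) - \mathcal{H}_{\vect{k}}(\vect{u}^{*},\vect{w}^{(j)})\bigr|
 + \bigl|\mathcal{H}_{\vect{k}}(\vect{u}^{*},\vect{w}^{(j)}) - \mathcal{H}_{\vect{k}}(\vect{u}^{(i)},\vect{w}^{(j)})\bigr|,
\end{equation*}
which (up to replacing the max by its value at the optimal inner play) splits into a $\vect{w}$-direction term and a $\vect{u}$-direction term. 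For the $\vect{w}$ term I repeat the Taylor-plus-Young's argument with the $L_w$ Lipschitz constant, picking up $(L_w+1)/2\cdot\delta_w^{2}$, and substitute the gradient/variance bounds $\|\vect{g}_{\vect{w}}^{(j)}\|\le (1+3\beta)G$ and $\mathrm{Var}(\vect{g}_{\vect{w}}^{(j)})\le (bN^{2}+b^{3})(1+3\beta)^{2}G^{2}/N^{3}$ from \cref{lem:bound_grad}. For the $\vect{u}$ term I reuse the bound from the first step. Summing the two pieces produces exactly the stated $\epsilon(\delta_u,\delta_w)$.

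The step I expect to be the main obstacle is not the algebra but the \emph{existence half}: making precise that the candidate $(\vect{u}^{*},\vect{w}^{*})$ constructed by Weierstrass actually satisfies \cref{def:stack_eq} for \emph{every} $(\vect{u}^{(i)},\vect{w}^{(j)})$ in the prescribed neighborhoods, rather than merely on an inner-max slice. The subtlety is that after swapping $\vect{u}^{(i)}$ for its maximizer, the $\vect{w}$-displacement $\vect{w}^{(j)}-\vect{w}^{*}$ must still control the residual through $L_w$-smoothness uniformly in $\vect{u}$; this uses boundedness of $\mathcal{U}$ together with the Lipschitz hypotheses. Once this uniformity is established, the two displayed bounds above give precisely the $\epsilon(\delta_u)$ and $\epsilon(\delta_u,\delta_w)$ claimed, completing the verification that the Weierstrass point is a Stackelberg equilibrium in the sense of \cref{def:stack_eq}.
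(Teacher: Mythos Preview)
Your proposal is essentially the same as the paper's proof. Both arguments derive the two conditions of \cref{def:stack_eq} by applying the descent lemma (Lipschitz-remainder Taylor expansion), splitting the resulting inner product via Young's inequality $|\langle a,b\rangle|\le \tfrac12\|a\|^2+\tfrac12\|b\|^2$ so that the $(M+1)/2$ and $(L_w+1)/2$ coefficients appear, taking conditional expectation, invoking $E[X^2]=(E[X])^2+\mathrm{Var}(X)$, and then substituting the gradient and variance bounds of \cref{lem:bound_grad}.

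There are two cosmetic differences worth noting. For the second inequality, the paper performs a single joint smoothness expansion of $\mathcal{H}_{\vect{k}}(\vect{u}^{*},\vect{w}^{*})$ around $(\vect{u}^{(\zeta)},\vect{w}^{(j)})$ in both variables simultaneously (picking up the $L_w$ and $M$ remainders at once), whereas you propose a triangle-inequality split into a pure $\vect{w}$-variation term and a pure $\vect{u}$-variation term, handled separately. Both routes produce the identical $\epsilon(\delta_u,\delta_w)$ since the cross-terms vanish under Young's inequality either way. Second, your Weierstrass/Berge argument for the \emph{existence} of $(\vect{u}^{*},\vect{w}^{*})$ is actually more explicit than what the paper does: the paper simply fixes $\vect{w}^{*}\in\mathcal{W}$, shows the two $\epsilon$-bounds hold for any such choice, and then asserts that the resulting pair is a local minimax point according to \cref{def:stack_eq}, without a separate compactness argument. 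Your concern about uniformity of the $L_w$-smoothness bound in $\vect{u}$ is not an issue in the paper's version precisely because the joint expansion handles both displacements in one step.
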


To show a local min-max point, we make the assumption that the two players are initialized close to the equilibrium point. This assumption is due to the lack of convexity in the learning problem. Since there is no unique equilibrium point in the nonconvex case, the best we can claim is that one converges to a local minimum. We note, however, that local minima are typically good in the sense of performance for neural networks and that  any initialization strategy such as the one in \cite{glorot2010understanding} can facilitate this local minimum. Next, we show that our algorithm converges.

\subsection{Theorem 2, Convergence to the equillibrium point}\label{theory:theorem2}
The proof of this theorem requires us to first prove that the maximizing player converges. Next, we show that, provided the maximizing player provides a strategy, the minimizing player converges. 
In our algorithm, we perform $\zeta$ gradient ascent updates for each $\vect{w}^{(j)}.$ Therefore, we will first show that the with many gradient ascent steps, our gradient goes to zero. 
\begin{theorem}[Gradient of $\mathcal{H}_{\vect{k}}$ with respect to $\vect{u}^{(i)}$ converges to zero] \label{lem: convergence_max}
For each task $k,$  fix $\vect{w}^{(j)} \in\mathcal{W}$ and construct $\mathcal{M}= \{\vect{w}^{(j)}, \vect{\mathcal{U}} \}.$ Let \cref{ass:lip_bounded} be true and sample uniformly a minibatch of size $b$ from the dataset $D$ of size $N$. Define 
$ \mathcal{H}_{\vect{k}}( \vect{u}^{(i)}, \vect{w}^{(j)} ) =  J_{\vect{k}}(\vect{w}^{(j)} )+ \beta_{1} J_{\vect{k}}(\vect{u}_{0}^{(i)}, \vect{w}^{(j)} ) + \beta_{2} J_{\vect{k}}(\vect{u}_{1}^{(i)} , \vect{w}^{(j)} ) + \beta_3 J_{\vect{k}}(\vect{u}_{2}^{(i)}, \vect{w}^{(j)}) $ with $\beta_1, \beta_2, \beta_3 \leq \beta > 0.$ Let the inequalities from \cref{lem:bound_grad} be true. Choose $\alpha^{(i)}_{u} = \frac{\alpha_{u}}{\sqrt{\zeta}},$ then $\sum_{i} \alpha^{(i)}_{u} = \sum_{i} \frac{\alpha_{u}}{ \sqrt{\zeta}} = \alpha_{u} \sqrt{\zeta}.$ Similarly, $ \sum_{i} (\alpha^{(i)}_{u})^2 =  \alpha^2_{u}$ such that
$\sum_{i} (\alpha^{(i)}_{u}-\frac{M (\alpha^{(i)}_{u})^2 }{2}) = \frac{ 2\alpha_{u} \sqrt{\zeta} - M\alpha^2_{u} }{2} = S_n$. Now, denote $\Delta_{(i)} = \mathcal{H}_{\vect{k}}(\vect{u}^{(i+1)}, \vect{w}^{(j)})-\mathcal{H}_{\vect{k}}(\vect{u}^{*}, \vect{w}^{(j)}),$ then the following is true  
\[ \underset{i=0,\cdots, \zeta}{min} E\left[ \| \vect{g}^{(i)}_{\vect{u}} \|^2 \right] \leq  \frac{2 \mathbb{E}[\| \vect{u}^{(\zeta)} - \vect{u}^{*}\|] }{2\alpha_{u} \sqrt{\zeta} - M\alpha^2_{u} } + \frac{2 M(\alpha_{u})^2 b\beta^2\bar{G}^2 )}{N\left(2\alpha_{u} \sqrt{\zeta} - M\alpha^2_{u}\right) }+\frac{2M(\alpha_{u})^2(b^3\beta^2 \bar{G}^2 )}{N^3\left( 2\alpha_{u} \sqrt{\zeta} - M\alpha^2_{u} \right) }.\] Provided $M\alpha^2_{u} << \alpha_{u} \sqrt{\zeta}$, we have $\lim_{\zeta \rightarrow \infty } \underset{i=0,\cdots, \zeta}{min} E\left[ \| \vect{g}^{(i)}_{\vect{u}} \|^2 \right]   \rightarrow 0$ with the rate $\frac{1}{\sqrt{\zeta}}.$
\end{theorem}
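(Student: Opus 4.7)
The statement is a standard non-convex stochastic gradient convergence result, dressed up with the explicit constants from \cref{lem:bound_grad}. My plan is to (i) write a one-step ascent inequality using Lipschitz smoothness of $\mathcal{H}_{\vect{k}}$ in $\vect{u}$, (ii) take conditional expectation to convert the stochastic minibatch gradient $\hat{\vect{g}}^{(i)}_{\vect{u}}$ into the true gradient $\vect{g}^{(i)}_{\vect{u}}$ at the cost of a variance term, (iii) telescope over $i=0,\dots,\zeta$, and (iv) plug in the chosen step-size schedule $\alpha^{(i)}_{u}=\alpha_{u}/\sqrt{\zeta}$ and the $\min\le\text{average}$ trick to extract the $1/\sqrt{\zeta}$ rate.

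\textbf{Step 1 (one-step inequality).} Fix $\vect{w}^{(j)}$ and view $\mathcal{H}_{\vect{k}}(\cdot,\vect{w}^{(j)})$ as a function of $\vect{u}$ only. By \cref{ass:lip_bounded}, each of the four $J_{\vect{k}}$ summands is $M$-smooth in its $\vect{u}$-argument, so $\mathcal{H}_{\vect{k}}(\cdot,\vect{w}^{(j)})$ is smooth with constant at most $M$ (absorbing the $\beta_i\le\beta$ factors into the gradient bounds of \cref{lem:bound_grad}). The ascent update $\vect{u}^{(i+1)}=\vect{u}^{(i)}+\alpha^{(i)}_{u}\hat{\vect{g}}^{(i)}_{\vect{u}}$ combined with the smoothness inequality yields
\begin{equation*}
\mathcal{H}_{\vect{k}}(\vect{u}^{(i+1)},\vect{w}^{(j)}) \;\ge\; \mathcal{H}_{\vect{k}}(\vect{u}^{(i)},\vect{w}^{(j)}) + \alpha^{(i)}_{u}\langle \vect{g}^{(i)}_{\vect{u}},\hat{\vect{g}}^{(i)}_{\vect{u}}\rangle - \tfrac{M}{2}(\alpha^{(i)}_{u})^{2}\|\hat{\vect{g}}^{(i)}_{\vect{u}}\|^{2}.
\end{equation*}

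\textbf{Step 2 (take expectations).} Conditioning on the iterate $\vect{u}^{(i)}$ and using unbiasedness of uniform minibatch sampling, $\mathbb{E}[\hat{\vect{g}}^{(i)}_{\vect{u}}\mid\vect{u}^{(i)}]=\vect{g}^{(i)}_{\vect{u}}$, so the cross term becomes $\alpha^{(i)}_{u}\|\vect{g}^{(i)}_{\vect{u}}\|^{2}$. For the quadratic term, I use $\mathbb{E}\|\hat{\vect{g}}^{(i)}_{\vect{u}}\|^{2}=\|\vect{g}^{(i)}_{\vect{u}}\|^{2}+\mathrm{Var}(\hat{\vect{g}}^{(i)}_{\vect{u}})$ and substitute the variance bound from \cref{lem:bound_grad}, namely $\mathrm{Var}(\vect{g}^{(i)}_{\vect{u}})\le \tfrac{2b\beta^{2}(N^{2}+b^{2})}{N^{3}}\bar G^{2}$. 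Rearranging gives, after taking total expectation,
\begin{equation*}
\bigl(\alpha^{(i)}_{u}-\tfrac{M}{2}(\alpha^{(i)}_{u})^{2}\bigr)\,\mathbb{E}\|\vect{g}^{(i)}_{\vect{u}}\|^{2} \;\le\; \mathbb{E}[\mathcal{H}_{\vect{k}}(\vect{u}^{(i+1)})-\mathcal{H}_{\vect{k}}(\vect{u}^{(i)})] + \tfrac{M}{2}(\alpha^{(i)}_{u})^{2}\Bigl(\tfrac{b\beta^{2}\bar G^{2}}{N}+\tfrac{b^{3}\beta^{2}\bar G^{2}}{N^{3}}\Bigr),
\end{equation*}
where I have split the variance bound into the two summands that appear in the stated inequality.

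\textbf{Step 3 (telescope and rate).} Summing the previous display for $i=0,\dots,\zeta$, the right-hand side telescopes to $\mathbb{E}[\mathcal{H}_{\vect{k}}(\vect{u}^{(\zeta)})-\mathcal{H}_{\vect{k}}(\vect{u}^{(0)})]\le \mathbb{E}\|\vect{u}^{(\zeta)}-\vect{u}^{*}\|$ after a final smoothness/Lipschitz step to relate the gap in $\mathcal{H}_{\vect{k}}$ values to the iterate distance from the equilibrium. With the constant schedule $\alpha^{(i)}_{u}=\alpha_{u}/\sqrt{\zeta}$, I compute $\sum_{i}\alpha^{(i)}_{u}=\alpha_{u}\sqrt{\zeta}$ and $\sum_{i}(\alpha^{(i)}_{u})^{2}=\alpha_{u}^{2}$, hence $S_n=\sum_{i}\bigl(\alpha^{(i)}_{u}-\tfrac{M}{2}(\alpha^{(i)}_{u})^{2}\bigr)=\tfrac{2\alpha_{u}\sqrt{\zeta}-M\alpha_{u}^{2}}{2}$. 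Applying $\min_{i}\mathbb{E}\|\vect{g}^{(i)}_{\vect{u}}\|^{2}\le \tfrac{1}{S_n}\sum_{i}\bigl(\alpha^{(i)}_{u}-\tfrac{M}{2}(\alpha^{(i)}_{u})^{2}\bigr)\mathbb{E}\|\vect{g}^{(i)}_{\vect{u}}\|^{2}$ yields exactly the bound in the statement. Under the stated regime $M\alpha_{u}^{2}\ll \alpha_{u}\sqrt{\zeta}$, the denominator behaves like $\alpha_{u}\sqrt{\zeta}$ while the numerator is $O(1)$, giving the $\mathcal{O}(1/\sqrt{\zeta})$ decay.

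\textbf{Main obstacle.} The only non-routine point is that the bound involves $\mathbb{E}\|\vect{u}^{(\zeta)}-\vect{u}^{*}\|$ rather than a clean $\mathcal{H}$-value gap; I need the boundedness of $\mathcal{H}_{\vect{k}}$ on the compact set $\mathcal{U}$, together with the gradient bound $\|\vect{g}^{(i)}_{\vect{u}}\|\le\beta\bar G$ from \cref{lem:bound_grad}, to legitimately replace the telescoped value-gap with $\mathbb{E}\|\vect{u}^{(\zeta)}-\vect{u}^{*}\|$ as stated. Everything else is bookkeeping on the constants $\beta,\bar G,b,N,M$ so that the two variance summands separate cleanly into the $b/N$ and $b^{3}/N^{3}$ terms displayed in the theorem.
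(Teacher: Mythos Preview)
Your proposal is correct and follows essentially the same route as the paper: the paper also starts from the $M$-smoothness ascent inequality, substitutes the update $\vect{u}^{(i+1)}=\vect{u}^{(i)}+\alpha^{(i)}_{u}\hat{\vect{g}}^{(i)}_{\vect{u}}$, takes conditional expectation using unbiasedness and the variance identity, introduces $\Delta^{i}=\mathcal{H}_{\vect{k}}(\vect{u}^{(i+1)},\vect{w}^{(j)})-\mathcal{H}_{\vect{k}}(\vect{u}^{*},\vect{w}^{(j)})$ to telescope, plugs in the schedule $\alpha^{(i)}_{u}=\alpha_{u}/\sqrt{\zeta}$, and applies the $\min\le$ average trick, finally invoking \cref{lem:bound_grad} to convert the residual $\mathbb{E}[\Delta^{\zeta}]$ into $\mathbb{E}\|\vect{u}^{(\zeta)}-\vect{u}^{*}\|$. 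The only cosmetic difference is that the paper adds and subtracts $\mathcal{H}_{\vect{k}}(\vect{u}^{*},\vect{w}^{(j)})$ before telescoping rather than after, which is immaterial.
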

With the result that the gradient will converge to zero for the maximizing player, we are now ready to show the convergence of our algorithm in the sense of gradients approaching zero. At which point, the two players will have no incentive to change. Therefore, the Stackleberg equillibrium is attained. For the following result, we will assume that, for each $j$ the maximizing player has already played and provides with a $\vect{u}^{(\zeta)}$ as given by the preceeding theorem.
\begin{theorem}[Convergence in gradients] \label{theorem: grad_max}
For each task $k,$  construct $\mathcal{N}= \{\vect{\mathcal{U}}, \vect{\mathcal{W}} \}.$  Let \cref{ass:lip_bounded} be true and define a dataset $D$ of size $N>0.$ Assume that a minibatch of size $b$ is obtained by uniformly sampling from $D$ and define 
$ \mathcal{H}_{\vect{k}}( \vect{u}^{(i)}, \vect{w}^{(j)} ) =  J_{\vect{k}}(\vect{w}^{(j)} )+ \beta_{1} J_{\vect{k}}(\vect{u}_{0}^{(i)}, \vect{w}^{(j)} ) + \beta_{2} J_{\vect{k}}(\vect{u}_{1}^{(i)} , \vect{w}^{(j)} ) + \beta_3 J_{\vect{k}}(\vect{u}_{2}^{(i)}, \vect{w}^{(j)}) $ with $\beta_1, \beta_2, \beta_3 \leq \beta > 0$ Consider the updates for $\vect{u}^{(i)}$ as $\alpha^{(i)}_{u} \vect{\hat{g}}^{(i)}_{\vect{u}}$ and the updates for updates for $\vect{w}^{(j)}$ as $\alpha^{(j)}_{w} \vect{\hat{g}}^{(i)}_{\vect{w}}$ and let the inequalities from \cref{lem:bound_grad} provides the bounds on the variance and expected values of these gradients. By \cref{theorem: grad_max}, we obtain that the maximizing player $\vect{u}^{(i)}$ converges to $\vect{u}^{(\zeta)}$ Furthermore, assume that $\alpha^{(i)}_{u}>0,  b>0, \beta>0, N>0, \norm{\vect{u}^{(\zeta)}-\vect{u}^{*}}^2 \leq \delta_w^2,$ and that $\underset{j = 1, 2, 3,   \cdots \rho}{max} Var(\vect{g}^{(j)}_{\vect{w}} )$ and $\underset{j = 1, 2, 3,   \cdots \rho}{max} Var(\vect{g}^{(j)}_{\vect{w}} )$ are upper bounded by the bound provided by \cref{lem:bound_grad}. Furthermore, choose, $\alpha^{(i)}_{w} = \frac{\alpha_{w}}{ \sqrt{\rho}},$ then $\sum_{i} (\alpha^{(i)}_{w} = \sum_{i} \frac{\alpha_{w}}{\sqrt{\rho}} = \alpha_{w} \sqrt{\rho}.$ Similarly, $ \sum_{i} (\alpha^{(i)}_{w})^2 =  \alpha^2_{w}.$ Then the minimum value of the gradient is bounded as 
\begin{equation}  \begin{aligned}
   \underset{j = 1, 2, 3,  \cdots \rho}{min}  \mathbb{E}\left[  \| \vect{g}^{(j)}_{\vect{w}} \|^2\right]  &\leq  \frac{ 2\rho \mathbb{E} \left[\delta_u^2\right] (M+1) + 2(1+3\beta)G \delta_{w}2}{2\alpha_{w} \sqrt{\rho} - L_w\alpha^2_{w} } + \frac{L ( \alpha_{\vect{w}})^2 b}{ N (2 \alpha_{w} \sqrt{\rho} - L\alpha^2_{w} )} \\ & +  \frac{ \beta  \rho b^2\bar{G}^2}{N^2 (2 \alpha_{w} \sqrt{\rho} - L_w\alpha^2_{w}) } + \frac{G^2 (1+3\beta)^2 L_w (\alpha_{\vect{w}})^2 b^3}{(N^3 2\alpha_{w} \sqrt{\rho} -L_w\alpha^2_{w} ) }. 
\end{aligned} \end{equation}
where $G$ and $\bar{G}$ are provided by \cref{lem:bound_grad} with 
and the gradient converges asymptotically to zero with the rate $\frac{1}{\sqrt{\rho}}$ under the assumption that $2\alpha_{w} \sqrt{\rho} >>L_w\alpha^2_{w}$ . 
\end{theorem}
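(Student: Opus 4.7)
The plan is to mimic the classical non-convex SGD convergence argument, but carried out on the outer (minimizing) iterate $\vect{w}^{(j)}$ while treating the inner (maximizing) player's output $\vect{u}^{(\zeta)}$ as a perturbed surrogate for $\vect{u}^{*}$. Concretely, I would start from the $L_w$-smoothness of $\mathcal{H}_{\vect{k}}$ in its second argument, which follows directly from the second inequality in \cref{ass:lip_bounded}, and apply the quadratic descent lemma
\[
\mathcal{H}_{\vect{k}}(\vect{u}^{(\zeta)},\vect{w}^{(j+1)}) \le \mathcal{H}_{\vect{k}}(\vect{u}^{(\zeta)},\vect{w}^{(j)}) - \alpha_w^{(j)} \langle \vect{g}_{\vect{w}}^{(j)}, \hat{\vect{g}}_{\vect{w}}^{(j)}\rangle + \frac{L_w (\alpha_w^{(j)})^2}{2}\|\hat{\vect{g}}_{\vect{w}}^{(j)}\|^2.
\]
Taking a conditional expectation over the minibatch and invoking the mean/variance bounds on $\hat{\vect{g}}_{\vect{w}}^{(j)}$ from \cref{lem:bound_grad} lets me replace the inner product by a multiple of $\|\vect{g}_{\vect{w}}^{(j)}\|^2$ (plus a lower-order term that scales with $b/N$) and the squared-norm term by $\mathbb{E}[\|\hat{\vect{g}}_{\vect{w}}^{(j)}\|^2] = \text{Var}(\hat{\vect{g}}_{\vect{w}}^{(j)}) + \|\mathbb{E}[\hat{\vect{g}}_{\vect{w}}^{(j)}]\|^2$, both of which are controlled by the $(1+3\beta)G$ bound.

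Next I would rearrange, sum the resulting per-iteration inequality over $j=0,\ldots,\rho-1$, and telescope the left-hand side into $\mathcal{H}_{\vect{k}}(\vect{u}^{(\zeta)},\vect{w}^{(\rho)}) - \mathcal{H}_{\vect{k}}(\vect{u}^{(\zeta)},\vect{w}^{(0)})$, which I bound by moving to the common reference point $\mathcal{H}_{\vect{k}}(\vect{u}^{*},\vect{w}^{*})$. This last bookkeeping step is where the coupling with the inner player enters: using $M$-Lipschitzness of $\nabla_{\vect{u}}\mathcal{H}_{\vect{k}}$ together with $\|\vect{u}^{(\zeta)}-\vect{u}^{*}\| \le \delta_u$ (guaranteed by \cref{lem: convergence_max}) and $\|\vect{w}^{(j)}-\vect{w}^{*}\|\le \delta_w$ gives
\[
|\mathcal{H}_{\vect{k}}(\vect{u}^{(\zeta)},\vect{w}^{(j)}) - \mathcal{H}_{\vect{k}}(\vect{u}^{*},\vect{w}^{*})| \;\le\; \tfrac{M+1}{2}\delta_u^2 + \tfrac{L_w+1}{2}\delta_w^2 + \text{const},
\]
which supplies the $\rho(M+1)\delta_u^2$ and $(1+3\beta)G\,\delta_w^2$ numerator terms in the claimed bound after summing.

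Plugging in the prescribed step schedule $\alpha_w^{(j)} = \alpha_w/\sqrt{\rho}$, so that $\sum_j \alpha_w^{(j)} = \alpha_w\sqrt{\rho}$ and $\sum_j (\alpha_w^{(j)})^2 = \alpha_w^2$, and lower-bounding the linear-minus-quadratic coefficient by $S_\rho := \alpha_w\sqrt{\rho} - L_w\alpha_w^2/2$ (positive under the assumption $2\alpha_w\sqrt{\rho}\gg L_w\alpha_w^2$), I obtain an upper bound of the form
\[
\frac{1}{\rho}\sum_{j=0}^{\rho-1}\mathbb{E}\!\left[\|\vect{g}_{\vect{w}}^{(j)}\|^2\right] \;\le\; \frac{A(\delta_u,\delta_w) + B\,\alpha_w^2 (b/N + b^3/N^3)}{2\alpha_w\sqrt{\rho} - L_w\alpha_w^2},
\]
and then use $\min_j \le \mathrm{avg}_j$ to conclude. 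Collecting constants in the form prescribed by the theorem statement and letting $\rho\to\infty$ yields the $\mathcal{O}(1/\sqrt{\rho})$ rate.

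The main obstacle I expect is the coupling between the two players: the inner maximizer only delivers an approximate best response $\vect{u}^{(\zeta)}$ rather than the true $\vect{u}^{*}$, so the telescoping argument is applied to a slightly wrong potential. I would handle this by carrying the error $\delta_u$ explicitly through the Lipschitz chain, which is precisely why the $M$-Lipschitz constant and the factor $\rho(M+1)\delta_u^2$ appear in the numerator. A secondary technical point is verifying that the minibatch gradient's bias term (which scales like $b/N$ rather than vanishing) is absorbed into the variance-style terms on the right-hand side; \cref{lem:bound_grad} was specifically tailored to supply exactly these $b/N$ and $b^3/N^3$ scalings, so the accounting should match the stated bound up to constants.
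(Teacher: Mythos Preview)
Your proposal follows the same non-convex SGD template the paper uses: descent lemma, unbiased-gradient/variance bounds from \cref{lem:bound_grad}, telescoping sum, and the $\alpha_w/\sqrt{\rho}$ schedule. The one structural difference is the anchor point of the smoothness expansion. You expand $\mathcal{H}_{\vect{k}}(\vect{u}^{(\zeta)},\vect{w}^{(j+1)})$ around $(\vect{u}^{(\zeta)},\vect{w}^{(j)})$ using only $L_w$-smoothness in $\vect{w}$, and defer the $\vect{u}$-coupling to the endpoints of the telescoped sum. The paper instead expands around $(\vect{u}^{*},\vect{w}^{(j)})$, i.e.\ it applies the two-variable quadratic upper bound in both $\vect{u}$ and $\vect{w}$ at every outer step, so the terms $\langle \vect{g}_{\vect{u}}^{(\zeta)},\vect{u}^{(\zeta)}-\vect{u}^{*}\rangle$ and $\tfrac{M}{2}\|\vect{u}^{(\zeta)}-\vect{u}^{*}\|^2$ appear inside each of the $\rho$ summands. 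This is exactly why the numerator carries $\rho(M+1)\delta_u^2$: it is accumulated per iteration, not picked up once at the boundary as your sketch suggests.

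The paper's choice also sidesteps a subtlety your version would have to address: in \cref{alg1a} the inner loop is rerun for each $j$, so $\vect{u}^{(\zeta)}$ is really $\vect{u}^{(\zeta)}_j$ and the differences $\mathcal{H}_{\vect{k}}(\vect{u}^{(\zeta)}_j,\vect{w}^{(j+1)})-\mathcal{H}_{\vect{k}}(\vect{u}^{(\zeta)}_j,\vect{w}^{(j)})$ do not telescope directly. By anchoring each step at the fixed $\vect{u}^{*}$ and paying the $\delta_u$ price per iteration (via Young's inequality on $\langle \vect{g}_{\vect{u}}^{(\zeta)},\vect{u}^{(\zeta)}-\vect{u}^{*}\rangle$), the paper defines $\Delta^{j}=\mathcal{H}_{\vect{k}}(\vect{u}^{(\zeta)},\vect{w}^{(j)})-\mathcal{H}_{\vect{k}}(\vect{u}^{*},\vect{w}^{*})$ and telescopes against the fixed reference. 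Your endpoint-only treatment would give a nominally tighter bound (no factor $\rho$ on $\delta_u^2$) \emph{if} $\vect{u}^{(\zeta)}$ were held fixed across outer iterations, but to match the algorithm as stated you would need to insert the same per-step $\vect{u}$-correction the paper uses.
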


Note that the convergence again depends on how effectively the parameters are initialized. Moreover, the equilibrium is not exact but  approximate; that is, the pair $(\vect{u}^{(i)}, \vect{w}^{(j)})$ reaches within a ball around a local equilibrium point $(\vect{u}^{*}, \vect{w}^{*}).$ The size of this ball is dependent on the batch size, learning rate $\alpha_w$, size of $D^{P}_{k}$, and  number of updates $\zeta$ and  $\rho$. These ideas cater to the usual intuition that the larger the dataset, the better the convergence. Similarly, a large number of updates lead to convergence, and better initialization always allows a network to approach a better minimum. The theorems and proofs presented in this section are the first convergence results for using stochastic gradient ascent-descent strategies in the GCL literature.

\section{Experiments}\label{sec: results}
We consider the Cora, CiteSeer, and Reddit datasets~\cite{zhou2021overcoming} for vertex  classification problems and consider Mutag and Proteins~\cite{morris2020tudataset} for graph classification. We compare our method with the state-of-the-art experience replay-driven method in the GCL literature ~\cite{zhou2021overcoming}. Note that \cite{zhou2021overcoming} reports results only on vertex  classification problems. We adopted the experimental setting and datasets from the current state-of-the-art papers \cite{zhou2021overcoming} and \cite{liu2021overcoming}. In this way we can ensure a fair comparison between two experience-replay-based methods. Furthermore, we fix the size of the memory buffer to be 500. Given the lack of large CL benchmarks for graphs, we conducted a unique large-scale hyperparameter search (with 1,000 high-performing hyperparameter configurations) to confirm the robustness of our approach. Moreover, we conducted ablation studies to establish the effectiveness of different components of our method. With all these experimental results, we seek to mitigate the dearth of large standard GCL benchmarks for empirical comparison. We compared our method with other methods on vertex classification datasets. We utilized graph classification datasets to study and analyze the stability of our approach to different hyperparameters. All experiments were conducted in Python 3.4 using the pytorch $1.7.1$ library with the NVIDIA-A100 GPU.

\begin{table}[tbhp] 
\scriptsize
    \begin{center}
        \caption{Vertex classification problem}
        \subfloat[Performance mean~(PM) \label{tab:vertexPM_class}]{
    \begin{tabular}{c|c|c|c}
    \toprule
     &\multicolumn{1}{c}{\textbf{Cora}}& \multicolumn{1}{|c}{\textbf{CiteSeer}}   &\multicolumn{1}{|c}{\textbf{Reddit}}   \\ \hline
                            &PM$\uparrow$        &PM$\uparrow$.    &PM$\uparrow$  \\
                              \cline{2-4}
        DeepWalk            &85.63       &64.79     &76.93  \\ 
        node2Vec            &85.99       &65.18     &78.24\\
        GraphSage           &94.15       &81.26     &95.01\\
        GIN                 &90.17       &74.92     &93.75\\
        GCN                 &93.62       &80.63     &94.43  \\
        SGC                 &93.06       &78.18     &94.01  \\
        GAT                 &94.19       &81.48     &93.84 \\ \hline \hline
        ER-GAT-MF           &94.15       &80.03     &94.18  \\
        ER-GAT-MF*          &94.23       &81.83     &94.63 \\
        ER-GAT-CM           &93.98       &78.78     &93.33  \\ 
        ER-GAT-CM*          &94.25       &80.86     &94.23  \\ 
        ER-GAT-IM           &$\mathbf{95.66}$  &80.85  &$\mathbf{95.36}$   \\ \hline
        Ours                &$91.51$ &$\mathbf{90.34}$  &$94.32$   \\ 
    \bottomrule 
    \end{tabular} }
    \subfloat[Forgetting metric~(FM) \label{tab:vertexFM_class} ]{
    \begin{tabular}{c|c|c}
    \toprule
     \multicolumn{1}{c}{\textbf{Cora}}& \multicolumn{1}{|c}{\textbf{CiteSeer}}   &\multicolumn{1}{|c}{\textbf{Reddit}}   \\ \hline
                        FM$\downarrow$        &FM$\downarrow$    &FM$\downarrow$  \\ \cline{1-3}
                34.51          &25.92              &33.24  \\ 
               35.46          &24.87              &34.66  \\
                  37.73          &28.06              &40.06 \\
                   33.81          &27.42              &36.28  \\
                      31.90          &25.47              &35.17   \\
                   33.93          &28.31              &38.59  \\
                 30.84          &23.73              &32.79 \\ \hline \hline
             22.49          &17.96              &26.44  \\
              21.88          &17.83              &23.54 \\
               22.14          &18.03              &26.17  \\ 
        21.03          &17.86              &23.15  \\ 
          21.14          &17.08              &23.09 \\ \hline
          $\mathbf{7.58}$ &$\mathbf{3.64}$  &$\mathbf{14.21}$ \\ 
    \bottomrule 
    \end{tabular} }
    \end{center}
\end{table}

\textbf{Metrics:} We used the same metrics as in \cite{zhou2021overcoming} for comparison: performance mean~(PM) and forgetting mean~(FM). These metrics use either accuracy ($acc$) or micro-F1 score ($f_1$) based on the dataset. Whenever a new task was observed, we recorded two quantities: (i) \textit{task $acc$ or $f_1$} of the model on the new task and (ii) \textit{forgetting $acc$ or $f_1$}---the difference in the $acc$ or $f_1$ of the model before and after the new task was observed. Once all the tasks are observed, FM is observed as the average \textit{forgetting $acc$ or $f_1$}, and PM is observed as the average \textit{task $acc$ or $f_1$}. For PM, the higher score is better; for FM, the lower score is better. Note, however, note that the value of PM is not upper bounded by the FM value since the PM values are the measure of pure generalization, whereas FM is the metric of both generalization and forgetting. A methodology is high performing if it achieves low FM and high PM. 

\textbf{Vertex  classification:} The baselines for the vertex classification problem are directly taken from \cite{zhou2021overcoming}. They are  Deepwalk~\cite{perozzi2014deepwalk}, node2Vec~\cite{grover2016node2vec}, graph convolutional networks~\cite{kipf2016semi}~(GCNs), GraphSAGE~\cite{hamilton2017inductive}, Graph Attention~\cite{velivckovic2017graph}~(GAT), Simple Graph Convolutions~\cite{wu2019simplifying}~(SGC), and  Graph Isomorphism Network~\cite{xu2018powerful}~(GIN). Furthermore, the work in \cite{zhou2021overcoming} introduces five new models:  ER-GAT-MF, ER-GAT-MF*, ER-GAT-CM, ER-GAT-CM*, and ER-GAT-IM, where MF is the mean of the attributes, MF* is the mean of embeddings, CM is the  attribute space coverage maximization, CM is  the embedding space maximization, and IM is  the influence maximization. As in \cite{zhou2021overcoming} we constructed three 2-way tasks, namely, two classes per task for the Cora and CiteSeer datasets. For Reddit, we constructed eight 5-way tasks. For the network, we utilized two layers of GAT with two layers of dropouts, similar to what is used in \cite{zhou2021overcoming}. We used the Adam optimizer learning rate of $10^{-03}$ for gradient descent and $10^{-07}$ for gradient ascent with $\rho = 1000$ and $\zeta=10$; and we utilized the 80-20 training-testing split. We use double precision for all our simulations and therefore, even $10^{-07}$ introduces changes in the weights. We summarize these results in \cref{tab:vertexFM_class} and \cref{tab:vertexPM_class}. We evaluate FM and PM on $acc$ for Cora while evaluating these metrics on $f_1$ score for Reddit and CiteSeer, as in \cite{zhou2021overcoming}. We report the mean and standard deviation of these metrics over  100 runs, where a distinct random seed was utilized for each run, and mark the best scores in bold.

Except for our results, all other numbers are taken directly from \cite{zhou2021overcoming} where no standard deviation numbers or results on PubMed were reported. The results show that our approach achieves superior performance in the FM when compared with all methods discussed in \cite{zhou2021overcoming}. To quantify, for Cora, our method obtains an FM of $7.58$, which is 67\% improvement over ER-GAT-IM with $21.14$: $\left((21.14-7.58)/21.14) \times 100\right)$.  Similarly, 78\% and 44\% improvements are observed in CiteSeer and Reddit datasets, respectively. 

Our method also does  well on the PM scale: the PM values achieved by our method are either comparable to or better than all the others. These results are  reflected in the last two rows of \cref{tab:vertexPM_class}. Earlier, we claimed that a good GCL methodology must achieve a balance between generalization and forgetting. A low FM value coupled with a high PM value supports this claim and supports the narrative of Theorem 1, where it was shown that such a balance point exists. 

\textbf{Stability analysis:} Next we demonstrate the stability of our approach to supporting the claims of Theorem 2. We utilized the \textit{graph classification} datasets Mutag and Protein for this study. In both these cases, we used the 60-20-20 split for training-testing and validation. We are  able to report scores only for our method in  \cref{tab:graph_classification} since the only method available for comparison~\cite{zhou2021overcoming} does not report any $acc/f_1$ on graph classification problems. In the first row of \cref{tab:graph_classification}, we report the performance using 1-FM that is achieved by  training a model on all the tasks together~(assuming all the data from these tasks is available); we call this \textit{joint training}.  For any continual learning approach, the accuracy achieved by joint training is an absolute upper bound. In what follows we study the impact of hyperparameters on the effectiveness of our proposed GCL method. To that end, we utilized DeepHyper~(DH)~\cite{egele2021agebo}---a scalable software package for hyperparameter tuning. Leveraging DH, we ran a hyperparameter search where we selected hyperparameter configurations~($nlays$--number of layers, $drop$-- dropout rate, $hc$-- number of hidden channels in graph attention layers with  $\alpha, \rho$ and $\zeta$) that improve the objective~$(FM).$ Once DH has run for a sufficient number of iterations, we selected  the hyperparameters that provide top 30\% quantile of objective values (FM) distribution.

\begin{wrapfigure}[20]{r}{0.60\textwidth}
\includegraphics[width = 0.20\columnwidth,height=5.7cm]{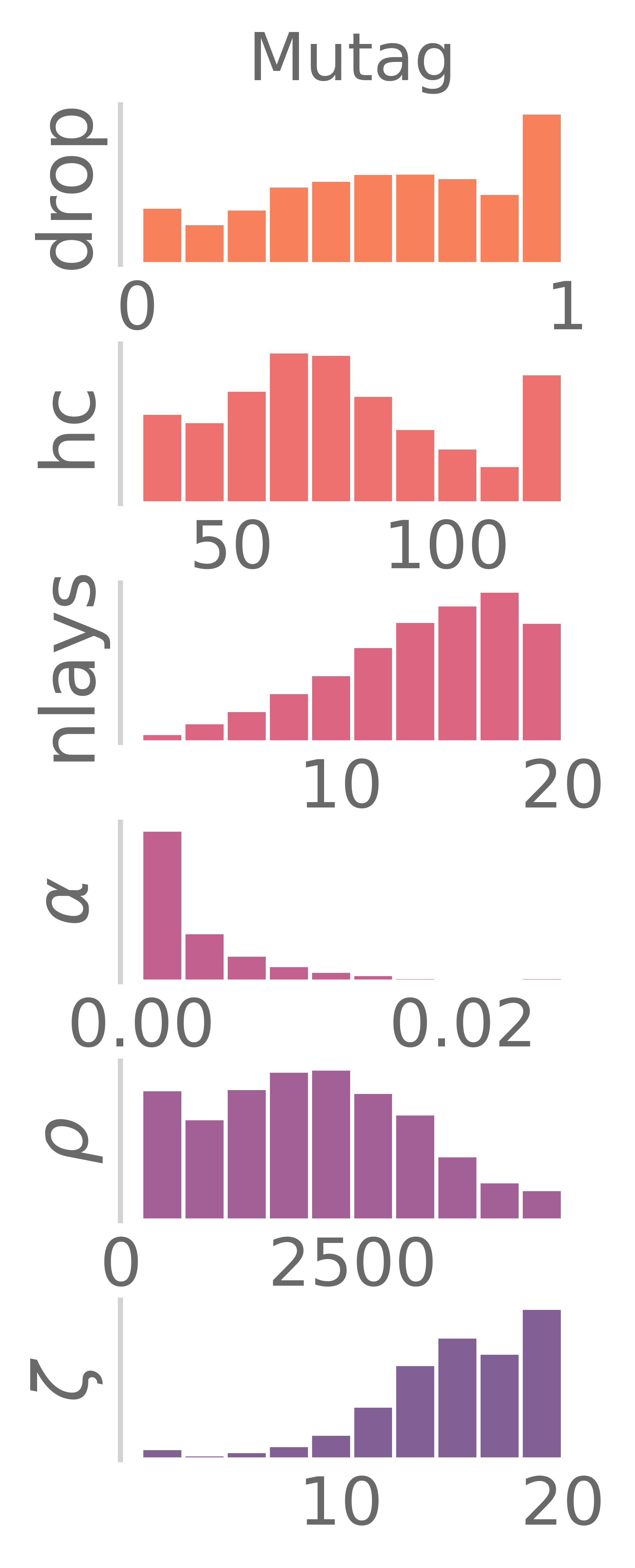}
\includegraphics[width = 0.20\columnwidth,height=5.7cm]{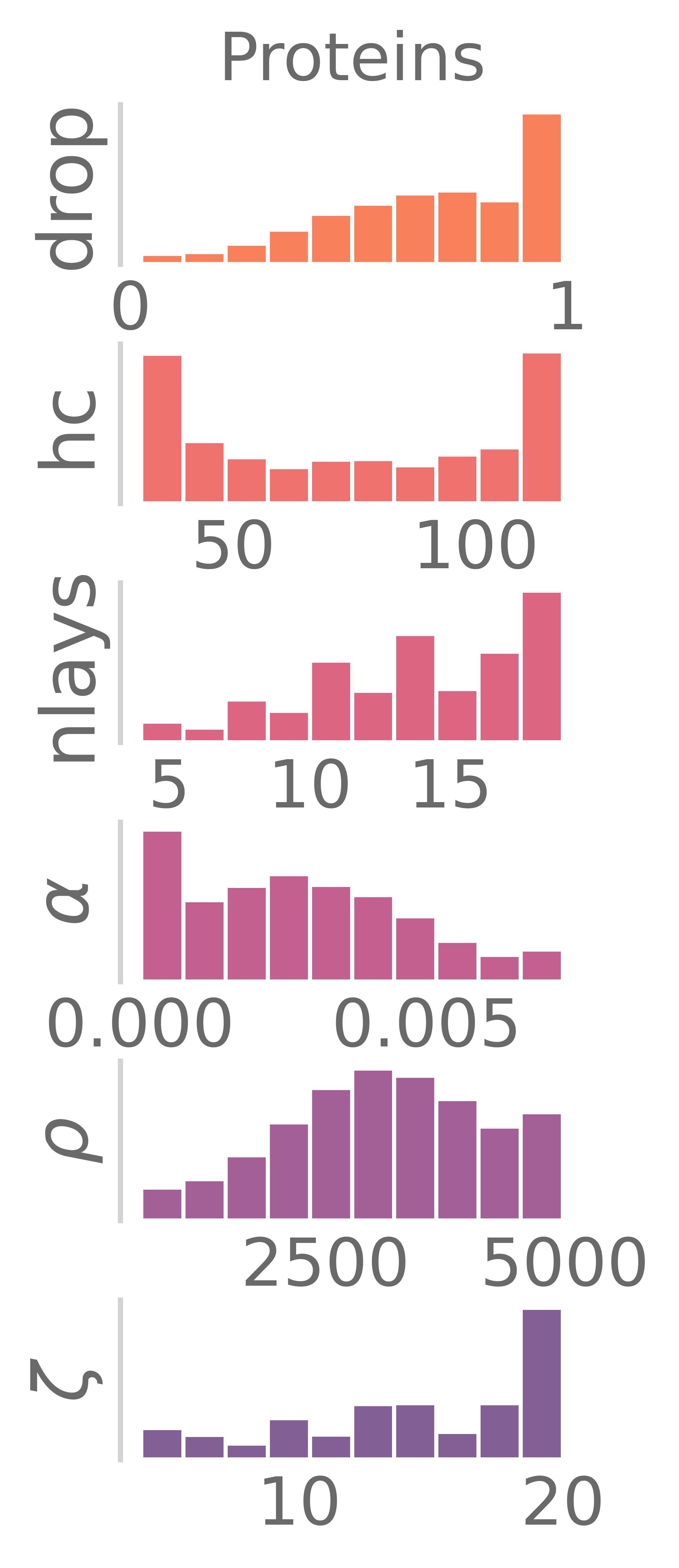}
\includegraphics[width = 0.20\columnwidth,height=5.7cm]{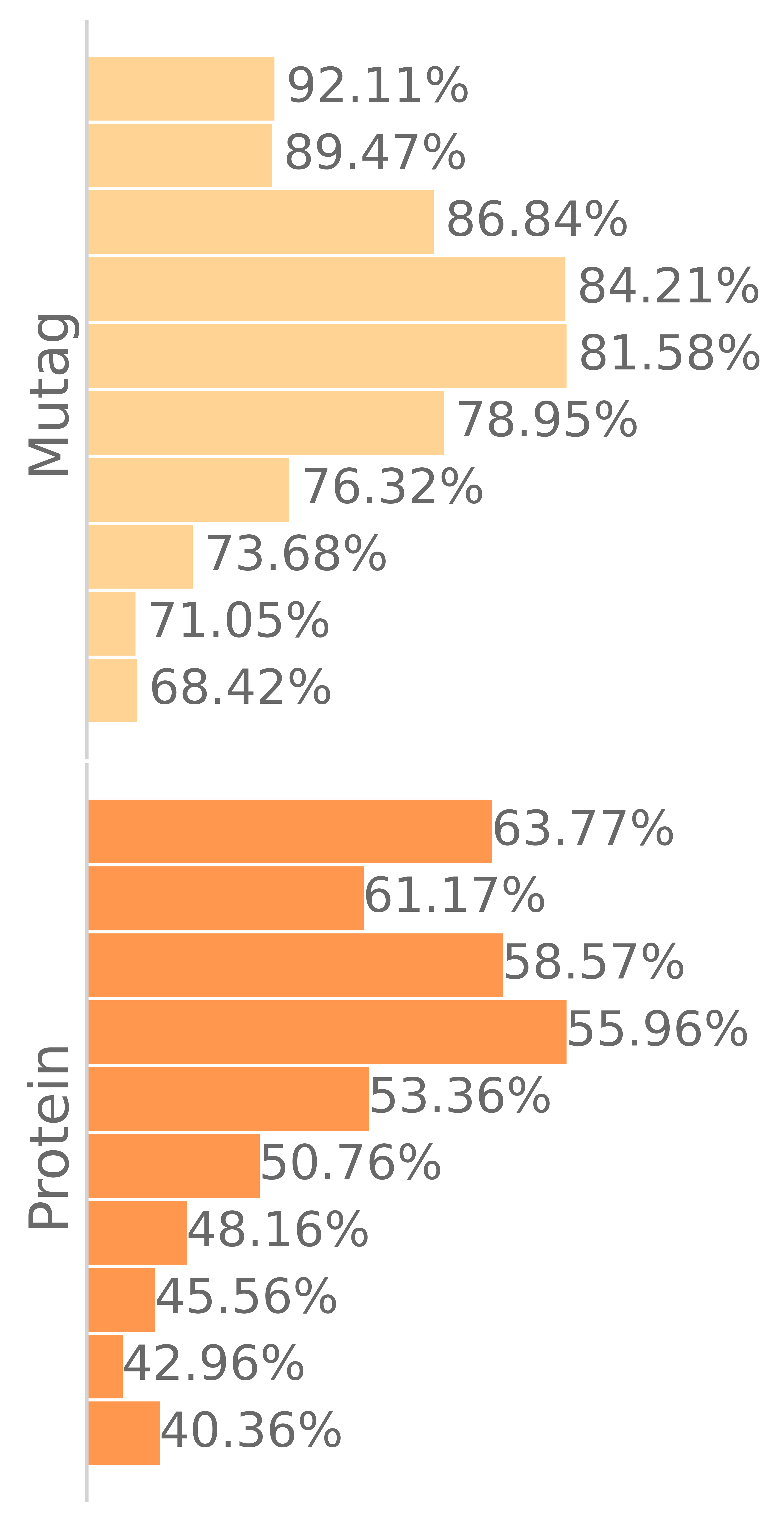}
\caption{(left)~Histograms of different hyperparameter configurations found by DeepHyper and (right)~the corresponding FM distribution.\label{fig:hist}}
\end{wrapfigure}
To observe the variance of the objective values corresponding to these hyperparemters selected by DH, we seek to understand what are the different sets of hyperparameters that provide good FM for these datasets. Typically, if DH has been executed for a large number of iterations, the hyperparameters providing the top 30\% quantile FM value would be high performing~(will provide extremely high values of FM). However, running DH for such a long number of iterations requires considerable high-performance computing  resources. To alleviate this need, we  fit a Gaussian copula~\cite{7796926}~(GC) to mimic the distribution of these hyperparameters~(those providing the top 30\% quantile in FM) and sample 1,000 new high-performing hyperparameters. We then evaluated all of them in the CL setting in parallel with a distinct random seed.  


In Table~\cref{tab:graph_classification} we record the best, worst, and  $mean\pm std$ values obtained by these 1,000 models. Furthermore we jointly trained a model using the hyperparameter configuration that provided the best FM values according to DH;  the score is recorded  in the first row in \cref{tab:graph_classification}. Note that for both  datasets, the best accuracy is close to the upper bound achieved by jointly training a model, while the worst $acc$ are  farther away from the mean. 

In \cref{fig:hist}~(right), we record the histogram of different FM values obtained by our approach when DH is utilized to find appropriate hyperparameters. Note  that for both  datasets, the FM values are skewed toward the mean~(the mean is coinciding with the mode). In the left and middle of \cref{fig:hist}, we plot the histograms of different hyperparameter values found by DH.  In addition to indicating  to the practitioner  which hyperparameters are good for using our methodology, they  illustrate several other insights. The histogram of hyperparameters for each dataset is distinct,  and the histograms for these hyperparameters are broad. For instance, the values of $\rho$ range for Mutag vary between 1 and 4000. Theses observations, together with the fact that the mean FM is skewed toward the mode, indicate that our approach is very stable to different hyperparameter configurations and different initializations of weights. In other words, even with varying hyperparameters and initial values of weights, our approach provides reasonable FMs. 

\begin{wraptable}[14]{l}{0.40\textwidth}
\scriptsize
\begin{center}
 \begin{tabular}{c|c|c}
    \toprule &\textbf{Mutag} &\textbf{Proteins}\\ \cline{2-3}                          &(1-FM)$\uparrow$        &(1-FM)$\uparrow$  \\ \cline{2-3}
     Joint training        &$93$        &$66$    \\ \hline \hline 
     Best                 &$94$        &$66$    \\
     Worst              &$68$        &$40$   \\
     Mean(std)            &$83\pm 6.1$ &$56 \pm 7$  \\
     \bottomrule
\end{tabular}
\caption{Scores on graph classification. The best, minimum, and mean(std) are evaluated based on DeepHyper's hyperparameter search.  \textit{Joint training} refers to the process of training a model on all the tasks together. \label{tab:graph_classification}}

\end{center}
\end{wraptable} Since the histogram of a particular hyperparameter for the two datasets is very distinct,  
 hyperparameter configurations cannot be commonly set for different datasets. Furthermore,  our results follow several commonly held notions with respect to setting hyperparameters. A larger number of layers appear to  favor the top 30\% quantile of FM. Furthermore, larger $\rho$ and large $\zeta$ are favored  (mean around $3000$ in both cases). The intuition is that within the two-player game, more  updates for both the outer loop and the inner loop lead to better convergence~(supports the validity of Theorem~2 result where asymptotic convergence is guaranteed). Peculiarly, larger dropout values are also favored. More investigation is required to analyze this behavior. A larger number of layers also appear to provide better performance, which also adheres to the intuition that better performance for deeper architectures.

\textbf{Ablation study:} Our cost function $\mathcal{H}_{\vect{k}}( \vect{u}^{(i)}, \vect{w}^{(j)} )$, which summarizes the game of two players $\vect{u}^{(i)}$ and $\vect{w}^{(j)}$, comprises four terms as in \cref{eq:eq_opt}. Here, the first term summarizes the forgetting and generalization cost. The second and third terms quantify the impact of simulated change in graphs. The fourth term quantifies the impact of simulated change in the parameters of the model. In this study we provide insight into the contribution of these different terms on the performance of a model~(a GAT with two dropout layers) for the Mutag dataset. Specifically, we initialize a model using high-performing hyperparameters~(chosen through DH) and perform three experiments. We perform CL on Mutag and record the FM score over 10 runs~(each run is executed with distinct random seed) when (1) all terms in $\mathcal{H}_{\vect{k}}( \vect{u}^{(i)}, \vect{w}^{(j)} )$ are utilized; (2) first, second, and third terms in  $\mathcal{H}_{\vect{k}}( \vect{u}^{(i)}, \vect{w}^{(j)} )$ are utilized; and (3) just the first term in $\mathcal{H}_{\vect{k}}( \vect{u}^{(i)}, \vect{w}^{(j)} )$ is utilized.

While the second scenario is equivalent to the case when we switch off the game, the third scenario is equivalent to regular experience replay strategy. We observe that the best performance on the Mutag dataset is observed when all the terms in the cost are utilized and the two-player game is played. We attain a performance of 89\%. When the game is switched off, we suffer a 14\% deterioration, and the deterioration further increases by switching off the regularization term where a 21\% deterioration in performance is observed. The study proves that the game actually does contribute to improved performance in the graph continual learning scenario.

\section{Conclusion}
We presented a new theoretical framework for graph continual learning. We modeled the stochastic process underlying the GCL problem as a vertex edge random graph. We formulated the GCL problem from an adaptive dynamic programming viewpoint and derived a partial differential equation to model the dynamics of GCL. We developed a theoretically sound two-player game-driven methodology for the GCL setting. We demonstrated that our proposed method achieved 44\% improvement compared with the state of the art on vertex classification benchmarks. We presented an ablation study, wherein we showed that the game performance improves  by 21\%. With a large-scale analysis we confirmed that our approach is stable to a variety of hyperparameters. 

Our future work will include (1) GCL for spatial-temporal data, (2) GCL for other classes of non-Euclidean data, (3) development of  problem-agnostic representation learning for GCL, and (4) applications to molecule property prediction tasks.


\section{Acknowledgement}
This work is funded by the Department of Energy under
the Integrated Computational and Data Infrastructure (ICDI) for Scientific Discovery, grant DE-SC0022328. 
We also acknowledge the support by the U.S. Department of Energy for the SCIDAC5-RAPIDS institute and the 
DOE Early Career Research Program award. This research used resources of the Argonne Leadership Computing Facility, 
which is a DOE Office of Science User Facility supported under Contract DE-AC02- 06CH11357.
\pagebreak



\section{Preliminaries}
The GCL task is given as
\begin{defn}[GCL task in the continuous sense]\label{defn:task_SM} For $t, \Delta t \in \vect{\Omega}$, define the interval $[t, t + \Delta t]$ and let $(\mathcal{G}_{V}(t), P_{\vect{x}(t) \times  \phi(t)})$  represent a VERG associated with GCL. Denote the GNN model as $g(., ., \vect{w}(t)): \vect{\Omega} \rightarrow \mathbb{R}^{n}$ with a loss function given as $\ell: \mathbb{R}^n \rightarrow \mathbb{R}.$ Let $J(\vect{x}([t, t+\Delta t]), \phi([t, t+\Delta t]), \vect{w}([t, t+\Delta t])) = \int_{\tau = t}^{t+\Delta t} \ell(\vect{x}_{\vect{V}}(\tau), \phi_{\vect{V}}(\tau), \vect{w}(\tau))]$ be the forgetting and generalization cost over the interval $[t, t+\Delta t].$  Then, a GCL task $\mathcal{T}([t, t+\Delta t])$ is described by the tuple \begin{multline} \Big( \vect{x}([t, t+\Delta t]), \phi([t, t+\Delta t]),  J(\vect{x}([t, t+\Delta t]), \phi([t, t+\Delta t]), \vect{w}([t, t+\Delta t]))  \Big)\end{multline} with $ \vect{x}([t, t+\Delta t]) = \{\vect{x}_{\vect{V}}(\tau) \forall \tau \in [t, t+\Delta t] \}_{\vect{V}(t)}$ and  $\phi([t, t+\Delta t]) = \{\vect{\phi}_{\vect{V}}(\tau) \forall \tau \in [t, t+\Delta t] \}_{\vect{V}(t)}$
\end{defn}
For simplicity of notations, we will denote the task as  $\mathcal{T}_{[ t, t+\Delta t]}$ which is described by the tuple  $\left( (\vect{x}, \phi)_{[ t, t+\Delta t]}, J_{[t, t+\Delta t]} (\vect{x}, \phi, \vect{w}) \right)$ where the subscript indicates the interval over which the task is defined. This notation, easily extends to a collection of tasks. For instance, all the tasks in the interval $[0,t]$ are collectively provided by $\mathcal{T}_{[ 0, t]}.$ As we use $\vect{t}$ to represent the interval $[0,t],$ it follows that $\mathcal{T}_{[ 0, t]}$ is rewritten as $\mathcal{T}_{\vect{t}}$ which represents all tasks in the interval $[0,t]$ where $\mathcal{T}_{\vect{t}} = \left( (\vect{x}, \phi)_{\vect{t}}, J_{\vect{t}} (\vect{x}, \phi, \vect{w}) \right) $ with $ J_{\vect{t}} (\vect{x}, \phi, \vect{w}) =  \int_{\tau = 0}^{t} \ell(\vect{x}(\tau), \phi(\tau), \vect{w}(\tau)).$ This notation naturally extends to the case when $\vect{\Omega}$ is comprised of discrete instance as well. In this case, we will set $\Delta t =1$ and replace $t$ by $k$ such that $[0,t] = [0,k] = [0,1,2,3,\cdots, k] = \vect{k}$ Furthermore, the collection of all tasks in the interval $[0,k]$ is given by $\mathcal{T}_{\vect{k}} = \left( (\vect{x}, \phi)_{\vect{k}},\ell_{\vect{k}} (\vect{x}_{\vect{V}}, \phi_{\vect{V}}, \vect{w}) \right)$ with $J_{\vect{k}}(\vect{x} , \phi, \vect{w})= \sum_{\tau = 0}^{k} \ell(\vect{x} (\tau), \phi (\tau), \vect{w}(\tau))$ where $g(\vect{x}(k), \phi(k))$ is the parametric map and $\ell(\vect{x} (k), \phi (k), \vect{w}(k))$ is the corresponding loss with the VERG defined by the probability space $(\mathcal{G}(k), P_{\vect{x}(k) \times  \phi(k)})), k \in \vect{\Omega}.$  

\newpage
\section{Dynamical System Modelling}
\begin{proposition}[Dynamics of the GCL problem]
Define a domain $\vect{\Omega}$ with $t \in \vect{\Omega}$ and a vertex set $\vect{V}(t):  \vect{\Omega} \rightarrow  \boldsymbol{\mathcal{V}} | \vect{V}(t) \subset \boldsymbol{\mathcal{V}} $ Then, define the CL task as in  \cref{defn:task_SM}. For each $\vect{t} \subset \vect{\Omega}$ let
\begin{equation}
    \begin{aligned}
      L^*_{\vect{t}} =  \underset{ \vect{w}(\vect{t}) }{min} \int_{\tau=t}^{\Omega}   J_{\boldsymbol{\tau} }(\vect{x}, \phi , \vect{w} )
    \end{aligned}
\end{equation}
as the GCL problem. Assume $  J_{\vect{t}}(\vect{x}, \phi , \vect{w} )$ to be smooth with respect to all its arguments. Under the assumption that $R(\vect{t})$ denotes all the higher order terms in a Taylor series expansion, the following is true
\begin{equation}
    \begin{aligned}
       - \left( \partial_{ \vect{t} } L^*_{\vect{t}} \right)^T \Delta_{\vect{t}} & =  \underset{ \vect{w}( \vect{t} ) }{min} \Big[ J_{\vect{t}}(\vect{x}, \phi , \vect{w} ) 
       + \left( \partial_{\vect{x}_{\vect{t}}} L^*_{\vect{t}} \right)^T \Delta_{\vect{x} }  \\ 
       & + \left( \partial_{ \phi_{\vect{t}} } L^*_{\vect{t}} \right)^T \Delta_{ \phi} 
       + \left( \partial_{\vect{w}_{\vect{t}} } L^*_{\vect{t}} \right)^T \Delta_{\vect{w} } + R_{\vect{t}}, \Big]
    \end{aligned}
\end{equation}
where $\Delta{x}$ refers to the first derivative of $x$ with respect to $\vect{t}.$
\end{proposition}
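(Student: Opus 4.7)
The plan is to derive the HJB-type equation by splitting the optimal value functional over an incremental task interval, applying Bellman's principle of optimality to the tail, and then performing a first-order Taylor expansion in the remaining arguments. The structure follows the classic derivation for continuous-time optimal control, but with the state enlarged from a vector trajectory to the vertex-edge random graph trajectory $(\vect{x}_{\vect{V}}, \phi_{\vect{V}})$ governed by VERG plus the policy parameter $\vect{w}(\vect{t})$.

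First, I would partition the integration domain: for any small $\Delta_{\vect{t}}$, write
\begin{equation*}
L^*_{\vect{t}} = \underset{\vect{w}(\vect{t})}{\min}\Bigg[\int_{\tau=t}^{t+\Delta t} J_{\boldsymbol{\tau}}(\vect{x}, \phi, \vect{w})\, d\tau + \int_{\tau=t+\Delta t}^{\Omega} J_{\boldsymbol{\tau}}(\vect{x}, \phi, \vect{w})\, d\tau \Bigg].
\end{equation*}
The key structural step is to invoke Bellman's principle of optimality on the second integral: since the inner optimization over $\vect{w}$ on the tail interval $[t+\Delta t,\Omega]$ is independent of the choices on $[t,t+\Delta t]$ once we have arrived at time $t+\Delta t$ with the induced VERG state, the tail reduces to $L^*_{\vect{t}+\Delta \vect{t}}$. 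Approximating the leading integral over the small interval by $J_{\vect{t}}(\vect{x},\phi,\vect{w})$ (absorbing the higher-order remainder into $R_{\vect{t}}$) produces the recursion $L^*_{\vect{t}} = \min_{\vect{w}(\vect{t})} [J_{\vect{t}}(\vect{x},\phi,\vect{w}) + L^*_{\vect{t}+\Delta \vect{t}}]$.

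Next I would Taylor expand $L^*_{\vect{t}+\Delta \vect{t}}$ around the point $\vect{t}$, treating $\vect{t}$, $\vect{x}_{\vect{t}}$, $\phi_{\vect{t}}$, and $\vect{w}_{\vect{t}}$ as the independent arguments on which $L^*$ depends. By the smoothness hypothesis on $J_{\vect{t}}$ (which propagates to $L^*_{\vect{t}}$ as an infimum of a smooth integrand over a compact parameter set $\mathcal{W}$), the expansion gives
\begin{equation*}
L^*_{\vect{t}+\Delta \vect{t}} = L^*_{\vect{t}} + (\partial_{\vect{t}} L^*_{\vect{t}})^T \Delta_{\vect{t}} + (\partial_{\vect{x}_{\vect{t}}} L^*_{\vect{t}})^T \Delta_{\vect{x}} + (\partial_{\phi_{\vect{t}}} L^*_{\vect{t}})^T \Delta_{\phi} + (\partial_{\vect{w}_{\vect{t}}} L^*_{\vect{t}})^T \Delta_{\vect{w}} + R_{\vect{t}},
\end{equation*}
with $R_{\vect{t}}$ collecting the higher-order terms as stipulated. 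Substituting this into the Bellman recursion, observing that $L^*_{\vect{t}}$ and $(\partial_{\vect{t}} L^*_{\vect{t}})^T \Delta_{\vect{t}}$ do not depend on the instantaneous policy variable $\vect{w}(\vect{t})$ and may therefore be pulled out of the $\min$, and then canceling $L^*_{\vect{t}}$ on both sides and isolating the $-(\partial_{\vect{t}} L^*_{\vect{t}})^T \Delta_{\vect{t}}$ term on the left produces exactly the claimed identity.

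The main obstacle I anticipate is justifying Bellman's principle in this non-Euclidean stochastic graph setting; the compact neighborhood topology on $\boldsymbol{\mathcal{V}}$ and the measurability of $\vect{x}$ and $\phi$ as stochastic processes (Definition \ref{def:stp}) are what make the tail value $\int_{t+\Delta t}^{\Omega} J_{\boldsymbol{\tau}}$ a well-defined measurable functional of the state at time $t+\Delta t$. A subtler point is the partial derivative $\partial_{\vect{x}_{\vect{t}}}$, which must be interpreted as a Gateaux/neighborhood derivative on the VERG-induced state, with $\Delta_{\vect{x}}$ being the induced increment of the vertex stochastic process on $[t,t+\Delta t]$; once this functional-derivative interpretation is fixed, the remaining algebra is routine. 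The smoothness assumption and the definition of $R_{\vect{t}}$ as the higher-order remainder of the Taylor expansion sidestep any issue of differentiability failure and reduce the proof to the two mechanical steps above.
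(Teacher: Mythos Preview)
Your proposal is correct and follows essentially the same route as the paper: split the integral at $t+\Delta t$, invoke Bellman's principle to identify the tail as $L^*_{\vect{t}+\Delta\vect{t}}$, Taylor expand that tail value around $(\vect{t},\vect{x}_{\vect{t}},\phi_{\vect{t}},\vect{w}_{\vect{t}})$, substitute, cancel $L^*_{\vect{t}}$, and move the $(\partial_{\vect{t}}L^*_{\vect{t}})^T\Delta_{\vect{t}}$ term to the left. Your explicit remark that this term is independent of the instantaneous policy and can therefore be pulled out of the $\min$ is a clarification the paper leaves implicit, and your caveats about interpreting $\partial_{\vect{x}_{\vect{t}}}$ as a functional derivative on the VERG state are well taken but not pursued in the paper either.
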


\begin{proof}
Let the GCL problem be given as 
\begin{equation}
    \begin{aligned}
     L^*_{\vect{t}}  = \underset{ \vect{w}_{\vect{t}}  }{min} \int_{\tau = t}^{\Omega}   J_{\tau}(\vect{x}, \phi , \vect{w} ).
    \end{aligned}
\end{equation}
Split the integral with the time interval $,t$rewrite the optimization problem as
\begin{equation}
    \begin{aligned}
        L_{\vect{t}}^{*} &=& \Big[ \underset{ \vect{w}_{\vect{t}}}{min}  \quad  J_{\vect{t}}(\vect{x}, \phi , \vect{w} ) + \int_{\tau = t+ \Delta t}^{\Omega} \underset{ \vect{w}_{\tau} }{min}  \quad  J_{\vect{\tau}}(\vect{x}, \phi , \vect{w} ) \Big]
    \end{aligned}
\end{equation}
Using the policy $\vect{w}_{\vect{t}} $ if we begin at $\vect{t},$ $L_{\vect{t}}^{*}$ provides the optimal cost over the complete interval. Since, $\underset{ \vect{w}_{\vect{t}}  }{min} \int_{ t}^{\Omega} $ $J_{\vect{t}}(\vect{x}, \phi , \vect{w} ).$ is $L_{\vect{t}}^{*}$, then it stands to reason that  $ \underset{ \vect{w}_{\vect{t}}  }{min} $ $\int_{\tau = t + \Delta t}^{\Omega}   J_{\vect{t}}(\vect{x}, \phi , \vect{w} ).$ is the optimal cost after $\vect{t}.$ That is, it is the optimal cost for the interval $\vect{\Omega}^C$ while following policy $\vect{w}(\vect{t}),$  which we denote as $L^*_{\vect{t}^+}$
This provides
\begin{equation}
    \begin{aligned} \label{eq:eq_split}
        L^*_{\vect{t}} =  \underset{ \vect{w}_{\vect{t}} }{min}  \Big[ J_{\vect{t}}(\vect{x}, \phi , \vect{w} ) +
        L^*_{\vect{t}^+}\Big].
    \end{aligned}
\end{equation}
Now, all the information about the future must be approximated using $\vect{t}.$ To do this approximation, we will write the  Taylor series expansion of $L^*_{\vect{t}^{+}}$ around $\vect{t}.$ Since \[V^{*}: \vect{\Omega} \rightarrow (\mathcal{X}, \Phi) \rightarrow \vect{G} \xrightarrow[]{\vect{w}} \mathbb{R}.\]  Taylor series expression is expanded around $$(\vect{t}, \vect{w}(\vect{t}), \vect{x}(\vect{t}), \phi( \vect{t}) ).$$ to obtain
\begin{multline}
       L^*_{\vect{t}^{+}}
       = L^*_{\vect{t}} + \left( \partial_{ \vect{t} }L^*_{\vect{t}} \right)^T \Delta_{\vect{t}}  + \left( \partial_{\vect{x}_{\vect{t}} } L^*_{\vect{t}} \right)^T \Delta_{\vect{x} } \\
       + \left( \partial_{ \phi_{\vect{t}} } L^*_{\vect{t}} \right)^T \Delta_{ \phi} \
       + \left( \partial_{\vect{w}_{\vect{t}}} L^*_{\vect{t}} \right)^T \Delta_{\vect{w}}
       + \cdots,
\end{multline}
Substitution into \eqref{eq:eq_split} reveals
\begin{multline}
        L^*_{\vect{t}} =  \underset{ \vect{w}_{\vect{t}} }{min} \Big[  J_{\vect{t}}(\vect{x}, \phi , \vect{w} ) +  L^*(\vect{t}) + \left( \partial_{ \vect{t} }L^*(\vect{t}) \right)^T \Delta_{\vect{t}} + \left( \partial_{\vect{x}_{\vect{t}} } L^*_{\vect{t}}\right)^T \Delta_{\vect{x} } \\ + \left( \partial_{ \phi_{\vect{t}} } L^*_{\vect{t}} \right)^T \Delta_{ \phi }+ \left( \partial_{\vect{w}_{\vect{t}}} L^*_{\vect{t}} \right)^T \Delta_{\vect{w} } + \cdots,  \Big]
\end{multline}
We will now cancel the common terms and write
\begin{align}
       0 &=  \underset{ \vect{w}_{\vect{t}} }{min} \Big[ J_{\vect{t}}(\vect{x}, \phi , \vect{w} )  + \left( \partial_{ \vect{t} }L^*_{\vect{t}} \right)^T \Delta_{\vect{t}} 
       + \left( \partial_{\vect{x}_{\vect{t}}} L^*_{\vect{t}} \right)^T \Delta_{\vect{x} } \nonumber \\ &+ \left( \partial_{ \phi_{\vect{t}} } L^*_{\vect{t}} \right)^T \Delta_{ \phi} 
       + \left( \partial_{\vect{w}_{\vect{t}} } L^*_{\vect{t}} \right)^T \Delta_{\vect{w} } + R_{\vect{t}}, \Big]
\end{align}
where $R(\vect{t})$ summarizes all the terms in the $\cdots$, i.e, the higher order terms from the Taylor series. Moving terms in the equation provides graph PDE as 
\begin{align}
          - \left( \partial_{ \vect{t} }L^*(\vect{t}) \right)^T \Delta_{\vect{t}} & =  \underset{ \vect{w}( \vect{t} ) }{min} \Big[ J_{\vect{t}}(\vect{x}, \phi , \vect{w} ) 
       + \left( \partial_{\vect{x}_{\vect{t}}} L^*_{\vect{t}} \right)^T \Delta_{\vect{x} } \nonumber \\ & + \left( \partial_{ \phi_{\vect{t}} } L^*_{\vect{t}} \right)^T \Delta_{ \phi} 
       + \left( \partial_{\vect{w}_{\vect{t}} } L^*_{\vect{t}} \right)^T \Delta_{\vect{w} } + R_{\vect{t}}, \Big]
    \end{align}
\end{proof}

\newpage
\section{Discrete Time Approximation}
\begin{proposition}
Let $\vect{k} \in \vect{\Omega}$ and define $\mathcal{W}, \vect{\mathcal{X}}, \vect{\Phi}$ such that $ \Delta_{\vect{w}}^{(i)} \times  \Delta^{(i)}_{\phi} \times \Delta^{(i)}_{\vect{x}} \in  \mathcal{W}  \times \vect{\Phi} \times \vect{\mathcal{X}}$ and assume that
\begin{subequations} \begin{align}
    sup_{\vect{\mathcal{X}} } L^*_{\vect{k}} &\leq inf_{\vect{\mathcal{X}}} J_{\vect{k}}(\vect{x} , \phi , \vect{w}^{(j)} ) \leq max_{\vect{\Delta}^{(i)}_{\vect{x}_{\vect{k}}} \in \vect{\mathcal{X}} } J_{\vect{k}}(\vect{x}+ \vect{\Delta}_{\vect{x}}^{(i)} , \phi , \vect{w}^{(j)} )      \\
    sup_{\vect{\Phi}} L^*_{\vect{k}} &\leq  inf_{\vect{\Phi}} J_{\vect{k}}(\vect{x}, \phi , \vect{w}^{(j)} ) \leq max_{\Delta^{(i)}_{\phi_{\vect{k}}} \in  \vect{\Phi} } J_{\vect{k}}(\vect{x} , \phi + \Delta^{(i)}_{\phi}  , \vect{w}^{(j)}) \\
     sup_{\mathcal{W}} L^*_{\vect{k}} &\leq inf_{\mathcal{W}} J_{\vect{k}}(\vect{x}, \phi , \vect{w}^{(j)} ) \leq max_{\Delta^{(i)}_{\vect{w}} \in \mathcal{W}} J_{\vect{k}}(\vect{x} , \phi , \vect{w}^{(j)} +\Delta^{(i)}_{\vect{w}}) \\ 
     inf_{\vect{\mathcal{X}} } L^*_{\vect{k}} &\geq 0,  inf_{\vect{\Phi}} L^*_{\vect{k}} \geq 0, 
     inf_{\mathcal{W}} L^*_{\vect{k}} \geq 0. \end{align} \end{subequations}
Then, define $\mathcal{H}_{}(\Delta^{(i)}_{\phi}, \Delta^{(i)}_{\vect{x} },\vect{x} , \phi  ,\vect{w}^{(j)} ) =  J_{\vect{k}}(\vect{x}, \phi , \vect{w}^{(j)} ) + \left( \partial_{\vect{x}_{\vect{k}} } L^*_{\vect{k}}\right)^T \Delta^{(i)}_{\vect{x}_{\vect{k}}}     \\ + \left( \partial_{ \phi_{\vect{k}} } L^*_{\vect{k}} \right)^T \Delta_{ \phi_{\vect{k}} } + \left( \partial_{\vect{w}^{(j)}_{\vect{k}}} L^*_{\vect{k}} \right)^T \Delta^{(i)}_{\vect{w}_{\vect{k}}}$ and the following approximation is true
\begin{equation}
	\begin{aligned}
	& H_{\vect{k}}(\Delta^{(i)}_{\phi}, \Delta^{(i)}_{\vect{x} },  \Delta^{(i)}_{\vect{w} },\vect{x} , \phi  ,\vect{w}^{(j)} )  \leq max_{\Delta_{\vect{w}_{\vect{k}}}^{(i)} \times  \Delta^{(i)}_{\phi_{\vect{k}}} \times \Delta^{(i)}_{\vect{x}_{\vect{k}}} \in  \mathcal{W}  \times \vect{\Phi} \times \vect{\mathcal{X}}} \\  
      &\Big[ J_{\vect{k}}(\vect{x}, \phi , \vect{w}^{(j)} ) + \beta_{1} J_{\vect{k}}(\vect{x}+ \vect{\Delta}_{\vect{x}}^{(i)} , \phi , \vect{w}^{(j)} )  + \beta_{2} J_{\vect{k}}(\vect{x}, \phi+\Delta^{(i)}_{\phi} , \vect{w}^{(j)} )  \\ 
      &+ \beta_3 J_{\vect{k}}(\vect{x}, \phi , \vect{w}^{(j)} +\Delta^{(i)}_{\vect{w}} ),  \Big] 
	\end{aligned}
\end{equation}
where $\beta_k  \in \mathbb{R} \cup [0,1], \forall k$ and $\zeta \in \mathbb{N}$ indicates finite difference updates.
\end{proposition}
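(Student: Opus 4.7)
The plan is to replace each linear term $(\partial_{\bullet} L^*_{\vect{k}})^T \Delta^{(i)}_{\bullet}$ in $\mathcal{H}_{\vect{k}}$ by a finite difference of $L^*_{\vect{k}}$, then convert the resulting $L^*_{\vect{k}}$ values into $J_{\vect{k}}$ values via the sandwich inequalities in the hypotheses, and finally bundle everything under a single max over the compact product domain. The index $\zeta \in \mathbb{N}$ is interpreted as the finite-difference resolution so that residual Taylor terms can be controlled.

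First, I would invoke smoothness of $L^*_{\vect{k}}$ (inherited from smoothness of $J$ and the value-function structure derived in Proposition~2.6) to write the three first-order Taylor expansions
\begin{align*}
L^*_{\vect{k}}(\vect{x}+\Delta^{(i)}_{\vect{x}},\phi,\vect{w}^{(j)}) &= L^*_{\vect{k}}(\vect{x},\phi,\vect{w}^{(j)}) + \bigl(\partial_{\vect{x}_{\vect{k}}} L^*_{\vect{k}}\bigr)^T \Delta^{(i)}_{\vect{x}} + r_1, \\
L^*_{\vect{k}}(\vect{x},\phi+\Delta^{(i)}_{\phi},\vect{w}^{(j)}) &= L^*_{\vect{k}}(\vect{x},\phi,\vect{w}^{(j)}) + \bigl(\partial_{\phi_{\vect{k}}} L^*_{\vect{k}}\bigr)^T \Delta^{(i)}_{\phi} + r_2, \\
L^*_{\vect{k}}(\vect{x},\phi,\vect{w}^{(j)}+\Delta^{(i)}_{\vect{w}}) &= L^*_{\vect{k}}(\vect{x},\phi,\vect{w}^{(j)}) + \bigl(\partial_{\vect{w}_{\vect{k}}} L^*_{\vect{k}}\bigr)^T \Delta^{(i)}_{\vect{w}} + r_3,
\end{align*}
where each $r_\ell$ is a second-order Taylor remainder. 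Solving each identity for the directional derivative term and substituting into $\mathcal{H}_{\vect{k}}(\Delta^{(i)}_{\phi},\Delta^{(i)}_{\vect{x}},\vect{x},\phi,\vect{w}^{(j)})$ rewrites it as $J_{\vect{k}}(\vect{x},\phi,\vect{w}^{(j)})$ plus perturbed-minus-unperturbed differences of $L^*_{\vect{k}}$, minus three remainders.

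Second, I would apply the assumed sandwich inequalities termwise. For the perturbed $L^*_{\vect{k}}$ values, $\sup_{\vect{\mathcal{X}}} L^*_{\vect{k}} \leq \max_{\Delta \in \vect{\mathcal{X}}} J_{\vect{k}}(\vect{x}+\Delta,\phi,\vect{w}^{(j)})$ (and the analogous bounds over $\vect{\Phi}$ and $\mathcal{W}$) lets me replace each $L^*_{\vect{k}}(\vect{x}+\Delta^{(i)}_{\vect{x}},\cdot)$, $L^*_{\vect{k}}(\cdot,\phi+\Delta^{(i)}_{\phi},\cdot)$, and $L^*_{\vect{k}}(\cdot,\cdot,\vect{w}^{(j)}+\Delta^{(i)}_{\vect{w}})$ by the corresponding $\max_{\Delta}$ of $J_{\vect{k}}$ after perturbation. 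For the subtracted unperturbed $L^*_{\vect{k}}(\vect{x},\phi,\vect{w}^{(j)})$ terms, the hypothesis $\inf L^*_{\vect{k}} \geq 0$ ensures that dropping them only increases the upper bound. The three scalar coefficients that arise from these substitutions, together with absorption of the Taylor remainders $r_1, r_2, r_3$ (uniformly small over the compact sets $\mathcal{W}\times\vect{\Phi}\times\vect{\mathcal{X}}$ as the finite-difference index $\zeta$ grows), define $\beta_1, \beta_2, \beta_3 \in \mathbb{R}\cup[0,1]$ as stated.

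Third, since each perturbed $J_{\vect{k}}$ term has already been upper bounded by a $\max$ over its own perturbation in its own coordinate, and since a sum of coordinate-wise maxima is at most the joint max over the product compact set $\mathcal{W}\times\vect{\Phi}\times\vect{\mathcal{X}}$, I can push the three separate maxes inside one joint $\max$ sign over $\Delta^{(i)}_{\vect{w}}\times\Delta^{(i)}_{\phi}\times\Delta^{(i)}_{\vect{x}}$. Adding the untouched leading term $J_{\vect{k}}(\vect{x},\phi,\vect{w}^{(j)})$ delivers exactly the displayed right-hand side of \cref{eq:eq_opt}.

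The main obstacle I anticipate is the clean absorption of the Taylor remainders $r_1,r_2,r_3$ into the $\beta_\ell$ coefficients uniformly in $(\vect{x},\phi,\vect{w}^{(j)})$; this requires that the finite-difference step indexed by $\zeta$ be small enough that the second-order terms are dominated by the first-order terms across the compact domains, which is where compactness of $\mathcal{W},\vect{\mathcal{X}},\vect{\Phi}$ together with smoothness of $L^*_{\vect{k}}$ is essential. A secondary subtlety is that the statement allows $\beta_\ell \in \mathbb{R}\cup[0,1]$, so $\beta_\ell$ need not be in $[0,1]$; this flexibility is precisely what accommodates the residual normalization constants from the Taylor step and the sandwich inequalities.
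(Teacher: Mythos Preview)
Your proposal is correct and follows essentially the same route as the paper: replace each directional-derivative term by a finite difference of $L^*_{\vect{k}}$, use the sandwich hypotheses (together with $\inf L^*_{\vect{k}}\ge 0$) to convert $L^*_{\vect{k}}$-differences into $\max J_{\vect{k}}$ terms, and then merge the three separate maxes into one joint max over $\mathcal{W}\times\vect{\Phi}\times\vect{\mathcal{X}}$.

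The only noteworthy difference is how the constants $\beta_\ell$ arise. The paper does not carry Taylor remainders $r_\ell$; it treats the Euler first-difference as an equality, so that $(\partial_{\vect{x}}L^*_{\vect{k}})^T\Delta^{(i)}_{\vect{x}}=(L^*_{\vect{k}}(\vect{x}+\Delta^{(i)}_{\vect{x}})-L^*_{\vect{k}}(\vect{x}))/\Delta_{\vect{k}}$, bounds the numerator by $\sup L^*_{\vect{k}}-\inf L^*_{\vect{k}}$, and then sets $\beta_1:=1/\Delta_{\vect{k}}$ directly (and analogously for $\beta_2,\beta_3$). Your version instead keeps explicit second-order remainders and absorbs them into the $\beta_\ell$ via compactness, which is more careful than the paper's own argument but not required at the paper's level of rigor. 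Either way the displayed bound follows.
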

\begin{proof}
\begin{remark}
    Note that the value function  is a function of all the arguments in the ODE. That is, $\Delta^{(i)}_{\phi}, \Delta^{(i)}_{\vect{x} },  \Delta^{(i)}_{\vect{w} },\vect{x} , \phi  ,\vect{w}^{(j)}$ but the arguments are not explicitly denoted for notational simplicity. Just for this  proposition, we shall indicate the arguments when required to describe the dependence through which the first derivatives shall exist. After the following proposition, we go back to the original notation of  $L_{\vect{k}}^{*}.$  
\end{remark}

By Euler's approximation, we obtain
\begin{equation}
	\begin{aligned}
     \left( \partial_{\vect{x}_{\vect{k}} } L^*_{\vect{k}}\right)^T \Delta^{(i)}_{\vect{x}}   &= \left( L^*_{\vect{k}}(\vect{x} + \Delta_{\vect{x}}^{(i)}) - L^*_{\vect{k}}(\vect{x}) \right) \frac{\left(\Delta_{\vect{x}}^{(i)}  \right)  }{\left( \Delta_{\vect{x}}^{(i)} \right)^T }\\
     &=\frac{L^*_{\vect{k}}(\vect{x} + \Delta_{\vect{x}}^{(i)}) - L^*_{\vect{k}} }{\Delta_{\vect{k}}}  \\
     &\leq \frac{sup_{\vect{\mathcal{X}}} L^*_{\vect{k}}  - inf_{\vect{\mathcal{X}}} L^*_{\vect{k}} }{\Delta_{\vect{k}}} \\
     & \leq \frac{1}{\Delta_{\vect{k}}} max_{\vect{\Delta}_{\vect{x}}^{(i)} \in \vect{\mathcal{X}} } J_{\vect{k}}(\vect{x}+ \vect{\Delta}_{\vect{x}}^{(i)} , \phi , \vect{w}^{(j)} ) \\
     & \leq \beta_{1} max_{\vect{\Delta}_{\vect{x}}^{(i)} \in \vect{\mathcal{X}} } J_{\vect{k}}(\vect{x}+ \vect{\Delta}_{\vect{x}}^{(i)} , \phi , \vect{w}^{(j)} ), \forall \beta_1 \in \mathbb{R}^+
	\end{aligned}
\end{equation}
Where the fourth and fifth inequality follows from assumption and $^{\dag} $ indicates the psuedo inverse. Similarly, we may write
\begin{equation} \begin{aligned}
  \left( \partial_{ \phi_{\vect{k}} } L^*_{\vect{k}} \right)^T \Delta_{ \phi }  &\leq  \beta_{2} max_{\vect{\Delta}^{(i)}_{\phi_{\vect{k}}} \in \vect{\Phi} } J_{\vect{k}}(\vect{x}, \phi+\Delta^{(i)}_{\phi}, \vect{w}^{(j)} )     \\
   \left( \partial_{\vect{w}^{(j)}_{\vect{k}}} L^*_{\vect{k}} \right)^T \Delta^{(i)}_{\vect{w}} &\leq \beta_3 max_{\vect{\Delta}_{\vect{w}_{\vect{k}}}^{(i)} \in \mathcal{W} } J_{\vect{k}}(\vect{x}, \phi , \vect{w}^{(j)} +\Delta^{(i)}_{\vect{w}} ) 
\end{aligned} \end{equation} 
with $\beta_2, \beta_3 \in \mathbb{R}^+,$ we obtain
\begin{equation} \begin{aligned}
     H_{\vect{k}}(\Delta^{(i)}_{\phi}, \Delta^{(i)}_{\vect{x} },\vect{x} , \phi  ,\vect{w}^{(j)} )  &\leq J_{\vect{k}}(\vect{x}, \phi , \vect{w}^{(j)} )     \\ & + \beta_{1} max_{\vect{\Delta}_{\vect{x}}^{(i)} \in \vect{\mathcal{X}} } J_{\vect{k}}(\vect{x}+ \vect{\Delta}_{\vect{x}}^{(i)} , \phi , \vect{w}^{(j)} )      \\ & + \beta_{2} max_{\vect{\Delta}_{\phi}^{(i)} \in \vect{\Phi} } J_{\vect{k}}(\vect{x}, \phi+\Delta^{(i)}_{\phi} , \vect{w}^{(j)} )  \\ &+ \beta_3 max_{\vect{\Delta}_{\vect{w}}^{(i)} \in \mathcal{W} } J_{\vect{k}}(\vect{x}, \phi , \vect{w}^{(j)} +\Delta^{(i)}_{\vect{w}} )  \nonumber
\end{aligned} \end{equation} 
Pulling the maximization outside provides the result.
\begin{equation} \begin{aligned}
     & H_{\vect{k}}(\Delta^{(i)}_{\phi}, \Delta^{(i)}_{\vect{x} },  \Delta^{(i)}_{\vect{w} },\vect{x} , \phi  ,\vect{w}^{(j)} )  \leq max_{\Delta_{\vect{w}_{\vect{k}}}^{(i)} \times  \Delta^{(i)}_{\phi_{\vect{k}}} \times \Delta^{(i)}_{\vect{x}_{\vect{k}}} \in  \mathcal{W}  \times \vect{\Phi} \times \vect{\mathcal{X}}}      \\ & \Bigg[ J_{\vect{k}}(\vect{x}, \phi , \vect{w}^{(j)} ) + \beta_{1} J_{\vect{k}}(\vect{x}+ \vect{\Delta}_{\vect{x}}^{(i)} , \phi , \vect{w}^{(j)} )       \\ &  + \beta_{2} J_{\vect{k}}(\vect{x}, \phi+\Delta^{(i)}_{\phi} , \vect{w}^{(j)} )  + \beta_3 J_{\vect{k}}(\vect{x}, \phi , \vect{w}^{(j)} +\Delta^{(i)}_{\vect{w}} )  \Bigg] 
\end{aligned} \end{equation} 
\end{proof}

\newpage
\section{Theoretical Analysis}
First, we will show that the gradients are bounded under the following assumption 
\begin{assumption} \label{ass:lip_bounded_SM}
The function $J_{\vect{k}}$ is Lipschitz continuous, that is  
\begin{subequations} \begin{align}
    \|\nabla_{ \vect{u}^{(i+1)}} J_{\vect{k}} -\nabla_{ \vect{u}^{(i)} } J_{\vect{k}}  \| &\leq M \|\vect{u}^{(i+1)} - \vect{u}^{(i)}\|      \\
    \|\nabla_{ \vect{w}^{(i+1)}} J_{\vect{k}} -\nabla_{ \vect{w}^{(i)} } J_{\vect{k}}  \| &\leq L_w\|\vect{w}^{(i+1)} - \vect{w}^{(i)}\|  \\
    \forall \vect{u}^{(i+1)}, \vect{u}^{(i)} \in \mathcal{U}, & \forall \vect{w}^{(i+1)}, \vect{w}^{(i)} \in \mathcal{W}. \nonumber
\end{align} \end{subequations} 
Furthermore, the gradient is bounded with respect to all its arguments.
\begin{equation} \begin{aligned}
    \|\nabla_{ \vect{u}_{0}^{(i)}} J_{\vect{k}} \|  \leq G_{\vect{x}} &, \|\nabla_{ \vect{u}_{1}^{(i)}} J_{\vect{k}} \| \leq G_{\phi} 
    \|\nabla_{ \vect{u}_{2}^{(i)}} J_{\vect{k}} \|  \leq G_{\vect{w}} &,   \|\nabla_{ \vect{w}^{(i)}} J_{\vect{k}} \| \leq G \\ 
    \forall \vect{u}^{(i)} \in \mathcal{U}, & \forall  \vect{w}^{(i)} \in \mathcal{W}.
\end{aligned} \end{equation} 
\end{assumption}
Thus providing the following lemma
\begin{lemma}
Let assumption \ref{ass:lip_bounded_SM} be true and let the size of $D^{P}_{k} \cup D^{N}_{k}$, be described by $N$ with the batch size given by $b.$ Assume that a minibatch is obtained by sampling uniformly from the dataset and define 
$H_{\vect{k}}( \vect{u}^{(i)}, \vect{w}^{(j)} ) = [ J_{\vect{k}}(\vect{w}^{(j)} )+ \beta_{1} J_{\vect{k}}(\vect{u}_{0}^{(i)}, \vect{w}^{(j)} )+ \beta_{2} J_{\vect{k}}(\vect{u}_{1}^{(i)} , \vect{w}^{(j)} ) + \beta_3 J_{\vect{k}}(\vect{u}_{2}^{(i)}, \vect{w}^{(j)}) ]$ with $\bar{G}=[ G_{\phi}+ G_{\vect{x}} +  G_{\vect{w}} ].$ Then,  the following inequalities are true.
\begin{align} 
\|\vect{g}^{(i)}_{\vect{u}} \|                         &\leq  \beta \bar{G}   & \| \vect{g}^{(j)}_{\vect{w}} \| &\leq  \Big[(1+3\beta) G   \Big] \nonumber \\ 
\mathbb{E}\left[\| \vect{g}^{(i)}_{\vect{u}} \|\right] &\leq  \beta \frac{b\bar{G}}{N} & \mathbb{E}\left[\| \vect{g}^{(i)}_{\vect{w}} \|\right]  &\leq  \beta \frac{b(1+3\beta) G}{N} \\ 
         Var(\vect{g}^{(i)}_{\vect{u}})  &\leq \frac{2b\beta^2\left(N^2+b^2\right)}{N^3} \bar{G}^2 &  Var(\vect{g}^{(j)}_{\vect{w}}) & \leq \left[\frac{bN^2+b^3}{N^3} \right] \left[ G^2 (1+3\beta)^2\right].  \nonumber
\end{align}
 \end{lemma}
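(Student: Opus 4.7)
My plan is to attack this lemma in three stages, treating the deterministic gradient-norm bounds, the expected minibatch-gradient bounds, and the variance bounds in sequence, since each later stage reuses the arithmetic of the earlier one.

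First, the deterministic bounds. I would expand $\vect{g}^{(i)}_{\vect{u}} = \nabla_{\vect{u}} \mathcal{H}_{\vect{k}}( \vect{u}^{(i)}, \vect{w}^{(j)} )$ by linearity of the gradient on the block decomposition $\vect{u} = (\vect{u}_0, \vect{u}_1, \vect{u}_2)$. Because $\mathcal{H}_{\vect{k}} = J_{\vect{k}}(\vect{w}^{(j)}) + \beta_1 J_{\vect{k}}(\vect{u}_0^{(i)}, \vect{w}^{(j)}) + \beta_2 J_{\vect{k}}(\vect{u}_1^{(i)}, \vect{w}^{(j)}) + \beta_3 J_{\vect{k}}(\vect{u}_2^{(i)}, \vect{w}^{(j)})$, the partial with respect to each block picks out only the corresponding coupling term, and the triangle inequality together with \cref{ass:lip_bounded} yields $\|\vect{g}^{(i)}_{\vect{u}}\| \leq \beta_1 G_{\vect{x}} + \beta_2 G_{\phi} + \beta_3 G_{\vect{w}} \leq \beta \bar{G}$. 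The same template, differentiating every term with respect to $\vect{w}^{(j)}$, produces $\|\vect{g}^{(j)}_{\vect{w}}\| \leq G + (\beta_1 + \beta_2 + \beta_3) G \leq (1+3\beta) G$.

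Second, the expectation bounds. I would model the minibatch as uniform sampling without replacement of a set $S \subset \{1,\dots,N\}$ with $|S|=b$, and write the minibatch gradient as $\hat{\vect{g}}_{\vect{u}}^{(i)} = \tfrac{1}{N}\sum_{m \in S}\vect{g}^{(i)}_{\vect{u}}(m)$ consistent with the paper's convention (this is what makes the $b/N$ factor appear). Let $Z_m = \mathbf{1}\{m \in S\}$, so $\mathbb{E}[Z_m] = b/N$. Then $\mathbb{E}[\hat{\vect{g}}_{\vect{u}}^{(i)}] = \tfrac{1}{N}\sum_{m=1}^{N} (b/N)\vect{g}^{(i)}_{\vect{u}}(m)$, and taking norms, applying the triangle inequality, and using the per-sample bound $\|\vect{g}^{(i)}_{\vect{u}}(m)\| \leq \beta \bar{G}$ gives $\mathbb{E}[\|\hat{\vect{g}}_{\vect{u}}^{(i)}\|] \leq \beta \tfrac{b \bar{G}}{N}$. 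The analogous computation with the per-sample bound $(1+3\beta) G$ yields the $\vect{w}$-bound.

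Third, and where I expect the real bookkeeping, the variance bounds. I would use $\operatorname{Var}(\hat{\vect{g}}) = \mathbb{E}[\|\hat{\vect{g}}\|^{2}] - \|\mathbb{E}[\hat{\vect{g}}]\|^{2}$ and compute $\mathbb{E}[\|\hat{\vect{g}}\|^{2}] = \tfrac{1}{N^{2}}\sum_{m,m'} \mathbb{E}[Z_m Z_{m'}]\, \langle \vect{g}(m), \vect{g}(m')\rangle$, splitting into diagonal ($m=m'$) and off-diagonal ($m\neq m'$) parts. Under sampling without replacement $\mathbb{E}[Z_m] = b/N$ and $\mathbb{E}[Z_m Z_{m'}] = b(b-1)/(N(N-1))$ for $m \neq m'$; bounding each inner product crudely by $\bar{G}^{2}$ (or $G^{2}(1+3\beta)^{2}$ in the $\vect{w}$ case) and combining with $\|\mathbb{E}[\hat{\vect{g}}]\|^{2} \leq b^{2}\bar{G}^{2}/N^{2}$, one gets a bound of the shape $\tfrac{b N^{2} + c\, b^{3}}{N^{3}} \cdot \bar{G}^{2}$. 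Matching the stated constant $2 b \beta^{2}(N^{2}+b^{2})/N^{3}$ (and analogously $(bN^{2}+b^{3})/N^{3}$ in the $\vect{w}$ case) should then be a matter of grouping terms and using $b \le N$; the hardest part is keeping the $\beta$-factors and the without-replacement correction terms organized so that no stray cross-term is dropped. Once the three stages are stacked, the six inequalities follow.
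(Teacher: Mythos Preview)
Your first two stages match the paper's proof almost exactly: expand $\mathcal{H}_{\vect{k}}$, differentiate block-by-block, apply the triangle inequality and the bounds of Assumption~\ref{ass:lip_bounded_SM}, then pass to expectations by summing the per-sample bound over the $b$ draws with weight $1/N$. Nothing to add there.

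The variance stage is where you diverge. You propose the textbook route: write $\hat{\vect g}=\tfrac1N\sum_m Z_m\,\vect g(m)$, compute $\mathbb E[\|\hat{\vect g}\|^2]$ by splitting into diagonal and off-diagonal terms via the sampling covariances $\mathbb E[Z_mZ_{m'}]$, and then subtract $\|\mathbb E[\hat{\vect g}]\|^2$. This is correct and in fact gives a tighter bound than the one stated; your worry about ``matching the stated constant'' is resolved simply by relaxing your estimate upward. The paper, by contrast, does not touch the sampling covariance structure at all. It writes the variance as (its version of) $\mathbb E\bigl[\|\vect g-\hat{\vect g}\|^2\bigr]$, applies the elementary inequality $\|a-b\|^2\le 2\|a\|^2+2\|b\|^2$ (what it calls Young's inequality), and then plugs in the deterministic bound $\|\vect g\|\le\beta\bar G$ together with the already-derived expectation bound $\tfrac{b\beta\bar G}{N}$ for the minibatch term. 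That is how the factor $2\beta^2\bigl(1+\tfrac{b^2}{N^2}\bigr)$ appears in one line, summed over the $b$ batch elements to produce $\tfrac{2b\beta^2(N^2+b^2)}{N^3}\bar G^2$; the $\vect w$ case is identical with $(1+3\beta)G$ in place of $\beta\bar G$. So the paper's argument is a two-line crude bound rather than a second-moment computation; your route is longer but cleaner, and either suffices for the lemma.
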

\begin{proof}
We begin by stating the cost function as 
\begin{equation} \begin{aligned}
    H_{\vect{k}}( \vect{u}^{(i)}, \vect{w}^{(j)} ) &= \Big[ J_{\vect{k}}(\vect{w}^{(j)} )+ \beta_{1} J_{\vect{k}}(\vect{u}_{0}^{(i)}, \vect{w}^{(j)} ) \\ &+ \beta_{2} J_{\vect{k}}(\vect{u}_{1}^{(i)} , \vect{w}^{(j)} ) + \beta_3 J_{\vect{k}}(\vect{u}_{2}^{(i)}, \vect{w}^{(j)})  \Big]
\end{aligned} \end{equation} 
Taking derivative with respect to $\vect{u}$ both sides provides 
\begin{subequations} \begin{align}
    \vect{g}^{(i)}_{\vect{u}} & =  \nabla_{\vect{u}^{(i)}} H_{\vect{k}}( \vect{u}^{(i)}, \vect{w}^{(j)} ) \\ & = \nabla_{\vect{u}^{(i)}} \Big[ J_{\vect{k}}(\vect{w}^{(j)} )+ \beta_{1} J_{\vect{k}}(\vect{u}_{0}^{(i)}, \vect{w}^{(j)} )  + \beta_{2} J_{\vect{k}}(\vect{u}_{1}^{(i)} , \vect{w}^{(j)} ) \\ &+ \beta_3 J_{\vect{k}}(\vect{u}_{2}^{(i)}, \vect{w}^{(j)})\Big], \nonumber
\end{align} \end{subequations} 
which leads to
\begin{subequations} \begin{align}
    \vect{g}^{(i)}_{\vect{u}} &  = \nabla_{\vect{u}^{(i)}} \Big[ \beta_{1} J_{\vect{k}}(\vect{u}_{0}^{(i)}, \vect{w}^{(j)} )    + \beta_{2} J_{\vect{k}}(\vect{u}_{1}^{(i)} , \vect{w}^{(j)} ) \\ 
    &+ \beta_3 J_{\vect{k}}(\vect{u}_{2}^{(i)}, \vect{w}^{(j)})   \Big] \nonumber \\ 
    &  =  \Big[ \beta_{1} \nabla_{\vect{u}^{(i)}} J_{\vect{k}}(\vect{u}_{0}^{(i)}, \vect{w}^{(j)} )    + \beta_{2} \nabla_{\vect{u}^{(i)}} J_{\vect{k}}(\vect{u}_{1}^{(i)} , \vect{w}^{(j)} ) \\ &+ \beta_3 \nabla_{\vect{u}^{(i)}} J_{\vect{k}}(\vect{u}_{2}^{(i)}, \vect{w}^{(j)})   \Big] \nonumber 
\end{align} \end{subequations} 
whence the boundedness assumption on the gradients along with the $\beta_1, \beta_2, \beta_3 \leq \beta$ provides
\begin{equation} \begin{aligned}
    \| \vect{g}^{(i)}_{\vect{u}} \| & \leq  \beta \bar{G} \\
    \mathbb{E}\left[\| \vect{g}^{(i)}_{\vect{u}} \|\right] &\leq  \beta \sum_{b} \frac{1}{N} \bar{G} \leq \beta  \frac{b\bar{G}}{N}
\end{aligned} \end{equation} 
Similarly, we obtain 
\begin{equation} \begin{aligned}
    \| \vect{g}^{(j)}_{\vect{w}} \| \leq  \Big[(1+3\beta) G    \Big]
\end{aligned} \end{equation} 
with 
\begin{equation} \begin{aligned}
    E\left[\| \vect{g}^{(j)}_{\vect{w}} \|\right] & \leq  \sum_{b} \frac{1}{N} \left[(1+3\beta) G    \right]
                                       & \leq  \frac{b}{N} \left[(1+3\beta) G    \right]
\end{aligned} \end{equation} 
where $b$ refers to the batch size and $N$ refers to the total number of samples with $\vect{g}^{(j)}_{\vect{w}}(j)$ indicating gradient with respect to the $j^{th}$ datapoint in the batch. The estimators of variance is provided as
\begin{equation} \begin{aligned}
   Var(\vect{g}^{(i)}_{\vect{u}}) & = \sum_b \frac{1}{N} \|\vect{g}^{(i)}_{\vect{u}} - \hat{\vect{g}}^{(i)}_{\vect{u}} \|^2  \\
   & \leq \sum_b \frac{1}{N} \left[ \|\vect{g}^{(i)}_{\vect{u}} \|^2 + \|\hat{\vect{g}}^{(i)}_{\vect{u}} \|^2  + 2 \|\vect{g}^{(i)}_{\vect{u}}\| \|\hat{\vect{g}}^{(i)}_{\vect{u}} \|  \right]   \\
   & \leq \sum_b \frac{1}{N} \left[ 2\|\vect{g}^{(i)}_{\vect{u}} \|^2 + 2\|\hat{\vect{g}}^{(i)}_{\vect{u}} \|^2\right]  \\
    & \leq  \sum_b \frac{2}{N}  \left[ (\beta)^2 + (\frac{b \beta}{N})^2 \right]\bar{G}^2\\ 
    & \leq \sum_b \frac{2\beta^2}{N} \left(1+\frac{b^2}{N^2}\right) \bar{G}^2 \\
    & \leq \frac{2b\beta^2\left(N^2+b^2\right)}{N^3} \bar{G}^2,
\end{aligned} \end{equation} 
where the third inequality is obtained by applying the Young's inequality. Similarly, we obtain
\begin{equation} \begin{aligned}
   Var(\vect{g}^{(j)}_{\vect{w}}) & = \sum_b \frac{1}{N} \|\vect{g}^{(j)}_{\vect{w}} - \hat{\vect{g}}^{(i)}_{\vect{w}} \|^2  \\ & \leq \sum_b \frac{1}{N}  \|\vect{g}^{(j)}_{\vect{w}} \|^2 + \|\hat{\vect{g}}^{(i)}_{\vect{w}} \|^2  +  2 \| \hat{\vect{g}}^{(i)}_{\vect{w}}\| \|\vect{g}^{(j)}_{\vect{w}}\|     \\
   & \leq \sum_b \frac{1}{N} \left[ 2\|\vect{g}^{(j)}_{\vect{w}} \|^2 + 2\|\hat{\vect{g}}^{(i)}_{\vect{w}} \|^2  \right]   \\
   & \leq \sum_{b} \frac{1}{N}  \left[ G^2 (1+3\beta)^2 +\frac{b^2 G^2 (1+3\beta)^2}{N^2} \right] \\
   & \leq \left[\frac{bN^2+b^3}{N^3} \right] \left[ G^2 (1+3\beta)^2\right]
\end{aligned} \end{equation} 
\end{proof}


\newpage 
In this work, the goal is to prove that a equillibrium point for the two player game exists and can be reached. Since, the two player game is sequential, we seek a local min-max point or Stackleberg equillibrium. The following definitions are adapted from \cite{jin2020local} which provides approximate conditions for sequential minmax games as 
\begin{defn}\label{def:stack_eq_SM}.
Let there be two compact sets $\mathcal{U}$ and $\mathcal{W}$ and assume $\mathcal{H}_{\vect{k}}$ to be twice differentiable, then  $(\vect{u}^{*}, \vect{w}^{*}) \in  \mathcal{U} \times \mathcal{W}$ is said to be a local minimax point or at Stackleberg equillibrium  for $\mathcal{H}_{\vect{k}}$, if 
the following is true
\begin{equation} \begin{aligned} \label{cond_stack_eq_SM}
    \norm{ \mathbb{E}[ \mathcal{H}_{\vect{k}}(\vect{u}^{*}, \vect{w}^{*}) - \mathcal{H}_{\vect{k}}(\vect{u}^{(i)}, \vect{w}^{*})] }&\leq \epsilon(\delta_u),       \\
\norm{ \mathbb{E} \left[ \mathcal{H}_{\vect{k}}(\vect{u}^{*}, \vect{w}^{*} ) - \underset{\vect{u}^{(i)} \in \mathcal{\vect{U}} }{max}\mathcal{H}_{\vect{k}}(\vect{u}^{(i)}, \vect{w}^{(j)}) \right] } &\leq \epsilon(\delta_w, \delta_u).
\end{aligned} \end{equation}
for every $(\vect{u}^{(i)}, \vect{w}^{(j)} ) \in   \mathcal{U} \times \mathcal{W} $ such that for any  $\delta_u, \delta_w  \in \mathbb{R}^+$ with $ \mathbb{E}[\|\vect{w}^{(j)} - \vect{w}^{*}] \| \leq \delta_w, \quad \mathbb{E}[\|\vect{u}^{(i)} - \vect{u}^{*}] \| \leq \delta_u, $ with $\epsilon(\delta_u, \delta_w),\epsilon(\delta_u)  \in \mathbb{R^{+}}.$
\end{defn}
In what follows, we will show that a local min-max point~(which is equivalent to the Stackleberg equilibrium in a two-player setting~\cite{jin2020local}) exists~( definition \cref{def:stack_eq_SM}) and that the algorithm converges.

\section{Proof of Theorem 1, Existence of the minmax point}
\begin{theorem*}[Existence of an Equilibrium Point]
For each task $k,$  fix $\vect{w}^{*} \in\mathcal{W}$ and construct $\mathcal{M}= \{\vect{\mathcal{U}},\vect{w}^{*} \}.$  Let assumption \cref{ass:lip_bounded_SM} be true, define a dataset $D_{P} \cup D_{N}$ of size $N>0$ and sample uniformly a mini-batch  of size $b.$ Next, define
$ \mathcal{H}_{\vect{k}}( \vect{u}^{(i)}, \vect{w}^{(j)} ) =  J_{\vect{k}}(\vect{w}^{(j)} )+ \beta_{1} J_{\vect{k}}(\vect{u}_{0}^{(i)}, \vect{w}^{(j)} ) + \beta_{2} J_{\vect{k}}(\vect{u}_{1}^{(i)} , \vect{w}^{(j)} ) + \beta_3 J_{\vect{k}}(\vect{u}_{2}^{(i)}, \vect{w}^{(j)}) $ with $\beta_1, \beta_2, \beta_3 \leq \beta > 0.$ Let the inequalities from \cref{lem:bound_grad} be true. Under assumption that $\alpha^{(i)}_{u}>0,  b>0, \beta>0, N>0,$ then, there conditions in \cref{cond_stack_eq_SM} are satisfied with  \begin{subequations}
    \begin{align}  \epsilon(\delta_u) & = \frac{M+1}{2}\delta_u^2 + \bar{G}^2 \left( \frac{1}{2}\left( \frac{b\bar{G}}{N}\right)^2 + \frac{2b\beta^2\left(N^2+b^2\right)}{N^3} \right), \\
    \epsilon(\delta_u, \delta_w) &= (\frac{L+1}{2})\delta_w^2 +  G^2 \left( \frac{\left[ (1+3\beta)^2\right]}{2} +  \left[\frac{bN^2+b^3}{N^3} \right] \left[ (1+3\beta)^2\right] \right) 
\\ &+ \frac{M+1}{2}\delta_u^2 + \bar{G}^2 \left( \frac{1}{2}\left( \frac{b}{N}\right)^2 + \frac{2b\beta^2\left(N^2+b^2\right)}{N^3} \right) \nonumber.\end{align}
\end{subequations} 
Here $\| \vect{u}^{(i)} - \vect{u}^{*}\| \leq \delta_u, \forall i$  and $\| \vect{w}^{(j)} - \vect{w}^{*}\| \leq \delta_w, \forall j.$ Furthermore, there exists a $(\vect{u}^{*}, \vect{w}^{*}) \in \mathcal{M} \cup \mathcal{N}$ such that  $(\vect{u}^{*}, \vect{w}^{*})$ is a local minimax point or a Stackleberg equillibrium point according to definition \cref{def:stack_eq_SM}
\end{theorem*}
\begin{proof}
The proof of this theorem has two specific inequalities to prove. These inequilities make up the condition that guarantees Stackleberg equilibrium as detailed in definition \ref{def:stack_eq_SM}. We prove the first inequality, i.e., $\mathbb{E} \left[ \mathcal{H}_{\vect{k}}(\vect{u}^{(i)}, \vect{w}^{*})  -\mathcal{H}_{\vect{k}}(\vect{u}^{*}, \vect{w}^{*}) \right] \leq \epsilon(\delta_u)$
and determine the bound for $\epsilon(\delta_u)$ through the next lemma.
\begin{lemma*}
For each task $k,$  fix $\vect{w}^{*} \in\mathcal{W}$ and construct $\mathcal{M}= \{\vect{\mathcal{U}},\vect{w}^{*} \}.$ Let assumption \cref{ass:lip_bounded_SM} be true and sample uniformly a minibatch of size $b$ from the dataset $D = D_{P} \cup D_{N}$ of size $N$. Define 
$ \mathcal{H}_{\vect{k}}( \vect{u}^{(i)}, \vect{w}^{(j)} ) =  J_{\vect{k}}(\vect{w}^{(j)} )+ \beta_{1} J_{\vect{k}}(\vect{u}_{0}^{(i)}, \vect{w}^{(j)} ) + \beta_{2} J_{\vect{k}}(\vect{u}_{1}^{(i)} , \vect{w}^{(j)} ) + \beta_3 J_{\vect{k}}(\vect{u}_{2}^{(i)}, \vect{w}^{(j)}) $ with $\beta_1, \beta_2, \beta_3 \leq \beta > 0.$ Then, $  \norm{ \mathbb{E} \left[ \mathcal{H}_{\vect{k}}(\vect{u}^{*}, \vect{w}^{*}) - \mathcal{H}_{\vect{k}}(\vect{u}^{(i)}, \vect{w}^{*}) \right] } \leq  \epsilon(\delta_u)$ with \begin{align} \epsilon(\delta_u) = \frac{M+1}{2}\delta_u^2 + \bar{G}^2 \left( \frac{2b\beta^2}{N}  + \frac{1 b^2}{2N^2}+ + \frac{2b^3\beta^2}{N^3} \right).\end{align}
\end{lemma*}
\begin{proof}
For each task $k,$  fix $\vect{w}^{*} \in\mathcal{W}$ and construct $\mathcal{M}= \{\vect{\mathcal{U}},\vect{w}^{*} \}.$ Therefore, for $ (\vect{u}^{ (i+1 )}, \vect{w}^{*}) , ( \vect{u}^{(i)}, \vect{w}^{*}) \in \mathcal{M}$  assuming L-smoothness of $\mathcal{H}_{\vect{k}}( \vect{u}^{(i)}, \vect{w}^{*} ) $ we write
\begin{subequations}
 \begin{align}
 \mathcal{H}_{\vect{k}}(\vect{u}^{*}, \vect{w}^{*})    
  &\leq  \mathcal{H}_{\vect{k}}(\vect{u}^{(i)}, \vect{w}^{*}) + \langle \vect{g}^{(i)}_{\vect{u}}, \vect{u}^{*} -\vect{u}^{(i)} \rangle  \\ & +   \frac{M}{2}\| \vect{u}^{(i)} - \vect{u}^{*}\|^2, \nonumber \\
  &\leq \mathcal{H}_{\vect{k}}(\vect{u}^{(i)}, \vect{w}^{*}) + \frac{1}{2}\| \vect{g}^{(i)}_{\vect{u}}\|^2  \\ & +  \frac{M+1}{2}\|\vect{u}^{(i)} - \vect{u}^{*}\|^2,  \Bigg| \text{\textit{By Young's Inequality}} \nonumber
\end{align}
\end{subequations}
 Take conditional expectation that conditioned on $\vect{u}^{(i)}$ to obtain
\begin{subequations} \begin{align}
  \mathbb{E} \left[ \mathcal{H}_{\vect{k}}(\vect{u}^{*}, \vect{w}^{*})     | \vect{u}^{(i)} \right]
  & \leq \mathbb{E} \left[  \mathcal{H}_{\vect{k}}(\vect{u}^{(i)}, \vect{w}^{*})  | \vect{u}^{(i)} \right] 
  \nonumber \\ &+ \mathbb{E} \left[ \frac{1}{2}\| \vect{g}^{(i)}_{\vect{u}}\|^2 | \vect{u}^{(i)} \right] 
  \\ &+  \mathbb{E} \left[ \frac{M+1}{2}\|\vect{u}^{(i)} - \vect{u}^{*}\|^2| \vect{u}^{(i)} \right],  
  \nonumber \\ & \quad \quad \Bigg| \text{\textit{$Var(x) = E[x^2] - E[x]^2$ }} \nonumber
  \\ \nonumber \\ 
  &\leq  \mathbb{E} \left[  \mathcal{H}_{\vect{k}}(\vect{u}^{(i)}, \vect{w}^{*})  | \vect{u}^{(i)} \right]  + \frac{1}{2}\| \mathbb{E} \left[  \vect{g}^{(i)}_{\vect{u}}  | \vect{u}^{(i)} \right] \|^2 
  \\&+   \frac{M+1}{2}\| \vect{u}^{(i)} - \vect{u}^{*}\|^2 + Var\left(  \vect{g}^{(i)}_{\vect{u}}  | \vect{u}^{(i)} \right),  \nonumber
\end{align} \end{subequations} 
Integrate out the $\vect{u}^{(i)} $ by law of total expectation, substitute the bounds and rearrange to write
\begin{equation} \begin{aligned}
  \mathbb{E} \left[ \mathcal{H}_{\vect{k}}(\vect{u}^{*}, \vect{w}^{*}) - \mathcal{H}_{\vect{k}}(\vect{u}^{(i)}, \vect{w}^{*}) \right]
  &\leq  \frac{1}{2}\left( \frac{b\bar{G}}{N}\right)^2 +   \frac{M+1}{2}\| \vect{u}^{(i)} - \vect{u}^{*}\|^2 \\& + \frac{2b\beta^2\left(N^2+b^2\right)}{N^3} \bar{G}^2. \\ \\ 
\end{aligned} \end{equation} 
Under the assumption that $\| \vect{u}^{(i)} - \vect{u}^{*}\| \leq \delta_u,$ we obtain
\begin{equation} \begin{aligned}
  \norm{ \mathbb{E} \left[ \mathcal{H}_{\vect{k}}(\vect{u}^{*}, \vect{w}^{*}) - \mathcal{H}_{\vect{k}}(\vect{u}^{(i)}, \vect{w}^{*}) \right] }
  &\leq  \epsilon(\delta_u), \\ \\ 
\end{aligned} \end{equation} 
with \[ \epsilon(\delta_u) = \frac{M+1}{2}\delta_u^2 + \bar{G}^2 \left( \frac{2b\beta^2}{N}  + \frac{1 b^2}{2N^2}+ + \frac{2b^3\beta^2}{N^3} \right).\]
\end{proof}

Now, we prove the second inequality which is 
\[ \norm{ \mathbb{E} \left[ \mathcal{H}_{\vect{k}}(\vect{u}^{*}, \vect{w}^{*} ) - \underset{\vect{u}^{(i)} \in \mathcal{\vect{U}} }{max}\mathcal{H}_{\vect{k}}(\vect{u}^{(i)}, \vect{w}^{(j)}) \right]} \leq \epsilon(\delta_w, \delta_u)\]
\begin{lemma*}
For each task $k,$  fix $\vect{w}^{*} \in\mathcal{W}$ and construct $\mathcal{M}= \{\vect{\mathcal{U}},\vect{w}^{*} \}.$ Let assumption 2 be true and sample uniformly a minibatch of size $b$ from the dataset $D$ of size $N$. Define 
$ \mathcal{H}_{\vect{k}}( \vect{u}^{(i)}, \vect{w}^{(j)} ) =  J_{\vect{k}}(\vect{w}^{(j)} )+ \beta_{1} J_{\vect{k}}(\vect{u}_{0}^{(i)}, \vect{w}^{(j)} ) + \beta_{2} J_{\vect{k}}(\vect{u}_{1}^{(i)} , \vect{w}^{(j)} ) + \beta_3 J_{\vect{k}}(\vect{u}_{2}^{(i)}, \vect{w}^{(j)}) $ with $\beta_1, \beta_2, \beta_3 \leq \beta > 0.$ Let the inequalities from Lemma \ref{lem:bound_grad} be true. Under assumption that $\alpha^{(i)}_{u}>0,  b>0, \beta>0, N>0$ and  
\begin{align} \underset{\begin{array}{c} \vect{u}^{(i)} \in \mathcal{U} \\ \| \vect{u}^{(i)} - \vect{u}^{*}\| \leq \delta_u \forall i \end{array}}{max}  \mathcal{H}_{\vect{k}}(\vect{u}^{(i)}, \vect{w}^{(j)})  = \mathcal{H}_{\vect{k}}(\vect{u}^{(\zeta)}, \vect{w}^{(j)}). \end{align}
Then, the second inequality is true with \begin{align} \epsilon(\delta_u, \delta_w) &= (\frac{L+1}{2})\delta_w^2 +  (1+3\beta)^2 G^2 \left( \frac{1}{2} +  \frac{b}{N} + \frac{b^3}{N^3}  \right) + \frac{M+1}{2}\delta_u^2 \\ & + \bar{G}^2 \left( \frac{2b\beta^2}{N}  + \frac{1 b^2}{2N^2}+ + \frac{2b^3\beta^2}{N^3} \right). \nonumber  \end{align}
\end{lemma*}
\begin{proof}
For each task $k,$ construct $\mathcal{N}= \{\mathcal{U}, \vect{\mathcal{W}}\}.$ Therefore, for $ (\vect{u}^{(\zeta)}, \vect{w}^{(j)}) ,$  $ (\vect{u}^{*}, \vect{w}^{*}) \in \mathcal{N}$  assuming L-smoothness of $\mathcal{H}_{\vect{k}}(\vect{u}^{*}, \vect{w}^{(j)})$ and under the assumption  we write
\begin{equation} \begin{aligned}
 \mathcal{H}_{\vect{k}}(\vect{u}^{*}, \vect{w}^{*}) &\leq  \mathcal{H}_{\vect{k}}(\vect{u}^{(\zeta)}, \vect{w}^{(j)})   
       + \langle \vect{g}^{(j)}_{\vect{w}}, \vect{w}^{*}-\vect{w}^{ (j)} \rangle +   \frac{L}{2}\| \vect{w}^{(j)} - \vect{w}^{*}\|^2 
   \\ &+ \langle \vect{g}^{(\zeta)}_{\vect{u}}, \vect{u}^{*}-\vect{u}^{(\zeta)} \rangle +   \frac{M}{2}\| \vect{u}^{(\zeta)} - \vect{u}^{*}\|^2, \\ \\ 
      &\leq \mathcal{H}_{\vect{k}}(\vect{u}^{*}, \vect{w}^{*}) + \frac{1}{2}\| \vect{g}^{(i)}_{\vect{w}}\|^2 +  \frac{L+1}{2}\|\vect{w}^{(i)} - \vect{w}^{*}\|^2 
      \\ & + \frac{1}{2}\| \vect{g}^{(\zeta)}_{\vect{u}}\|^2 +  \frac{M+1}{2}\|\vect{u}^{(\zeta)} - \vect{u}^{*}\|^2, \Bigg| \text{\textit{By Young's Inequality}}  \\
\end{aligned} \end{equation} 
where the second inequality is achieved through Young's inequality. Take conditional expectation that conditioned on $\vect{w}^{(j)}$ and $\vect{u}^{(\zeta)}$  to obtain
\begin{equation} \begin{aligned}
\mathcal{H}_{\vect{k}}(\vect{u}^{*}, \vect{w}^{*}) &\leq \mathbb{E} \left[  \mathcal{H}_{\vect{k}}(\vect{u}^{(\zeta)}, \vect{w}^{(j)})  | \vect{w}^{(j)}, \vect{u}^{(\zeta)} \right]  + \mathbb{E} \left[ \frac{1}{2}\| \vect{g}^{(j)}_{\vect{w}}\|^2 | \vect{w}^{(j)}, \vect{u}^{(\zeta)}  \right] 
  \\&+  \mathbb{E} \left[ \frac{L+1}{2}\|\vect{w}^{(j)} - \vect{w}^{*}\|^2| \vect{w}^{(j)} \right] + \mathbb{E} \left[ \frac{1}{2}\| \vect{g}^{(\zeta)}_{\vect{u}}\|^2| \vect{w}^{(j)}, \vect{u}^{(\zeta)}  \right] 
  \\ & + \mathbb{E} \left[ \frac{M+1}{2}\|\vect{u}^{(\zeta)} - \vect{u}^{*}\|^2|  \vect{u}^{(\zeta)}  \right] , 
  \\ \\ 
  &\leq \mathbb{E} \left[  \mathcal{H}_{\vect{k}}(\vect{u}^{(\zeta)}, \vect{w}^{(j)})  | \vect{w}^{(j)}, \vect{u}^{(\zeta)} \right] \\& +  \frac{1}{2}\| \mathbb{E} \left[ \vect{g}^{(j)}_{\vect{w}}| \vect{w}^{(j)}, \vect{u}^{(\zeta)}  \right] \|^2 \\&  +  \frac{1}{2}\| \mathbb{E} \left[ \vect{g}^{(\zeta)}_{\vect{u}}| \vect{w}^{(j)}, \vect{u}^{(\zeta)}  \right] \|^2  \\ &  + \mathbb{E} \left[ \frac{L+1}{2}\|\vect{w}^{(j)} - \vect{w}^{*}\|^2| \vect{w}^{(j)} \right] 
   \\&  + \mathbb{E} \left[ \frac{M+1}{2}\|\vect{u}^{(\zeta)} - \vect{u}^{*}\|^2|  \vect{u}^{(\zeta)}  \right] \\ & + Var \left(\vect{g}^{(j)}_{\vect{w}}| \vect{w}^{(j)}, \vect{u}^{(\zeta)}  \right) \\&  + Var \left( \vect{g}^{(\zeta)}_{\vect{u}}| \vect{w}^{(j)}, \vect{u}^{(\zeta)}  \right) \quad \Bigg| \text{ Since \textit{$Var(x) = E[x^2] - E[x]^2$ }} 
\end{aligned} \end{equation} 
Integrate out the $\vect{w}^{(j)}$ and $\vect{u}^{(\zeta)}$ to get by law of total expectation,
\begin{equation} \begin{aligned}
\mathbb{E} \left[  \mathcal{H}_{\vect{k}}(\vect{u}^{*}, \vect{w}^{*})\right] - \mathbb{E} \left[  \mathcal{H}_{\vect{k}}(\vect{u}^{(\zeta)}, \vect{w}^{(j)}) \right]
  &\leq  \frac{1}{2}\| \mathbb{E} \left[ \vect{g}^{(j)}_{\vect{w}}  \right] \|^2  \\&  +  \frac{1}{2}\| \mathbb{E} \left[ \vect{g}^{(\zeta)}_{\vect{u}}\right] \|^2  \\ &  + \mathbb{E} \left[ \frac{L+1}{2}\|\vect{w}^{(j)} - \vect{w}^{*}\|^2\right]  \\&  + \mathbb{E} \left[ \frac{M+1}{2}\|\vect{u}^{(\zeta)} - \vect{u}^{*}\|^2 \right]  
  \\ & + Var \left(\vect{g}^{(j)}_{\vect{w}} \right) + Var \left( \vect{g}^{(\zeta)}_{\vect{u}} \right)
\end{aligned} \end{equation}
Since, $\mathbb{E} \left[  \mathcal{H}_{\vect{k}}(\vect{u}^{(\zeta)}, \vect{w}^{(j)}) \right] =  \underset{\begin{array}{c} \vect{u}^{(i)} \in \mathcal{U} \\ \| \vect{u}^{(i)} - \vect{u}^{*}\| \leq \delta_u \end{array}}{max}  \mathbb{E} \left[  \mathcal{H}_{\vect{k}}(\vect{u}^{(i)}, \vect{w}^{(j)}) \right].$ We obtain by substituting the bounds.
\begin{equation} \begin{aligned}
  &  \mathbb{E} \left[  \mathcal{H}_{\vect{k}}(\vect{u}^{*}, \vect{w}^{*})\right] - \underset{\begin{array}{c} \vect{u}^{(i)} \in \mathcal{U} \\ \| \vect{u}^{(i)} - \vect{u}^{*}\| \leq \delta_u \end{array}}{max} \mathbb{E} \left[  \mathcal{H}_{\vect{k}}(\vect{u}^{(i)}, \vect{w}^{(j)}) \right]\\ & \leq (\frac{L+1}{2})\delta_w^2 +  G^2 \left( \frac{\left[ (1+3\beta)^2\right]}{2} +  \left[\frac{bN^2+b^3}{N^3} \right] \left[ (1+3\beta)^2\right] \right) 
  \\ & \frac{M+1}{2}\delta_u^2 + \bar{G}^2 \left( \frac{1}{2}\left( \frac{b}{N}\right)^2 + \frac{2b\beta^2\left(N^2+b^2\right)}{N^3} \right)\\ \\ 
\end{aligned} \end{equation} 
which leads into
\begin{equation} \begin{aligned}
  & \norm{ \mathbb{E} \left[   \mathcal{H}_{\vect{k}}(\vect{u}^{*}, \vect{w}^{*}) - \underset{\begin{array}{c} \vect{u}^{(i)} \in \mathcal{U} \\ \| \vect{u}^{(i)} - \vect{u}^{*}\| \leq \delta_u \end{array}}{max}  \mathcal{H}_{\vect{k}}(\vect{u}^{(i)}, \vect{w}^{(j)})  \right] }\leq \epsilon(\delta_w, \delta_u)
\end{aligned} \end{equation} 
with \begin{align}
    \epsilon(\delta_u, \delta_w) &= (\frac{L+1}{2})\delta_w^2 +  (1+3\beta)^2 G^2 \left( \frac{1}{2} +  \frac{b}{N} + \frac{b^3}{N^3}  \right)  \nonumber \\ &+ \frac{M+1}{2}\delta_u^2 + \bar{G}^2 \left( \frac{2b\beta^2}{N}  + \frac{1 b^2}{2N^2}+ + \frac{2b^3\beta^2}{N^3} \right).
\end{align} 
We have pulled the expected value outside the maximum under the assumption that the function is smooth.
\end{proof}
By the two inequalities proved in the preceeding lemmas, we obtain for  $(\vect{u}^{*}, \vect{w}^{*}) \in \mathcal{M} \cup \mathcal{N}.$ 
\begin{equation} \begin{aligned}
   \norm{ \mathbb{E}  \left[   \mathcal{H}_{\vect{k}}(\vect{u}^{*}, \vect{w}^{*}) - \underset{\begin{array}{c} \vect{u}^{(i)} \in \mathcal{U} \\ \| \vect{u}^{(i)} - \vect{u}^{*}\| \leq \delta_u \end{array}}{max}  \mathcal{H}_{\vect{k}}(\vect{u}^{(i)}, \vect{w}^{(j)})  \right] } & \leq \epsilon(\delta_w, \delta_u) \\ \\ 
    \norm{ \mathbb{E} \left[ \mathcal{H}_{\vect{k}}(\vect{u}^{*}, \vect{w}^{*}) - \mathcal{H}_{\vect{k}}(\vect{u}^{(i)}, \vect{w}^{*}) \right] } & \leq  \epsilon(\delta_u), 
\end{aligned} \end{equation} 
Then, $(\vect{u}^{*}, \vect{w}^{*})$ is a local minimax point according to definition \ref{def:stack_eq_SM}. Thus, we conclude the proof of Theorem 1.
\end{proof}

\newpage
\section{Proof of Theorem 2, Convergence to the equilibrium point}
The proof of this theorem requires us to first prove that the maximizing player converges. Next, we show that, provided the maximizing player provides a strategy, the minimizing player converges. 
In our algorithm, we perform $\zeta$ gradient ascent updates for each $\vect{w}^{(j)}.$ Therefore, we will first show that the with many gradient ascent steps, our gradient goes to zero. 
\begin{theorem}[Gradient of $\mathcal{H}_{\vect{k}}$ with respect to $\vect{u}^{(i)}$ converges to zero] \label{theorem: grad_max_SM}
For each task $k,$  fix $\vect{w}^{(j)} \in\mathcal{W}$ and construct $\mathcal{M}= \{\vect{w}^{(j)}, \vect{\mathcal{U}} \}.$ Let assumption 2 be true and sample uniformly a minibatch of size $b$ from the dataset $D$ of size $N$. Define 
$ \mathcal{H}_{\vect{k}}( \vect{u}^{(i)}, \vect{w}^{(j)} ) =  J_{\vect{k}}(\vect{w}^{(j)} )+ \beta_{1} J_{\vect{k}}(\vect{u}_{0}^{(i)}, \vect{w}^{(j)} ) + \beta_{2} J_{\vect{k}}(\vect{u}_{1}^{(i)} , \vect{w}^{(j)} ) + \beta_3 J_{\vect{k}}(\vect{u}_{2}^{(i)}, \vect{w}^{(j)}) $ with $\beta_1, \beta_2, \beta_3 \leq \beta > 0.$ Let the inequalities from Lemma \ref{lem:bound_grad} be true. Choose $\alpha^{(i)}_{u} = \frac{\alpha_{u}}{\sqrt{\zeta}},$ then $\sum_{i} \alpha^{(i)}_{u} = \sum_{i} \frac{\alpha_{u}}{ \sqrt{\zeta}} = \alpha_{u} \sqrt{\zeta}.$ Similarly, $ \sum_{i} (\alpha^{(i)}_{u})^2 =  \alpha^2_{u}$ such that
$\sum_{i} (\alpha^{(i)}_{u}-\frac{M (\alpha^{(i)}_{u})^2 }{2}) = \frac{ 2\alpha_{u} \sqrt{\zeta} - M\alpha^2_{u} }{2} = S_n$. Now, denote $\Delta_{(i)} = \mathcal{H}_{\vect{k}}(\vect{u}^{(i+1)}, \vect{w}^{(j)})-\mathcal{H}_{\vect{k}}(\vect{u}^{*}, \vect{w}^{(j)}),$ then the following is true  
\[ \underset{i=0,\cdots, \zeta}{min} E\left[ \| \vect{g}^{(i)}_{\vect{u}} \|^2 \right] \leq  \frac{2 \mathbb{E}[\| \vect{u}^{(\zeta)} - \vect{u}^{*}\|] }{2\alpha_{u} \sqrt{\zeta} - M\alpha^2_{u} } + \frac{2 M(\alpha_{u})^2 b\beta^2\bar{G}^2 )}{N\left(2\alpha_{u} \sqrt{\zeta} - M\alpha^2_{u}\right) }+\frac{2M(\alpha_{u})^2(b^3\beta^2 \bar{G}^2 )}{N^3\left( 2\alpha_{u} \sqrt{\zeta} - M\alpha^2_{u} \right) }. .\] Provided $M\alpha^2_{u} << \alpha_{u} \sqrt{\zeta}$, we have $\lim_{\zeta \rightarrow \infty } \underset{i=0,\cdots, \zeta}{min} E\left[ \| \vect{g}^{(i)}_{\vect{u}} \|^2 \right]   \rightarrow 0$ with the rate $\frac{1}{\sqrt{\zeta}}.$
\end{theorem}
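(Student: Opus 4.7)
The plan is to run the standard $M$-smooth stochastic-gradient-ascent analysis on $\vect{u}^{(i)}$ and then average over the $\zeta$ inner updates so that the smallest gradient norm is controlled by a telescoped function-value gap plus a variance term. First I would write the update $\vect{u}^{(i+1)} = \vect{u}^{(i)} + \alpha^{(i)}_u \hat{\vect{g}}^{(i)}_{\vect{u}}$ and, using $M$-smoothness of $\mathcal{H}_{\vect{k}}$ in $\vect{u}$ (inherited from \cref{ass:lip_bounded}), the ascent inequality
\begin{equation*}
\mathcal{H}_{\vect{k}}(\vect{u}^{(i+1)},\vect{w}^{(j)}) \;\geq\; \mathcal{H}_{\vect{k}}(\vect{u}^{(i)},\vect{w}^{(j)}) + \alpha^{(i)}_u \langle \vect{g}^{(i)}_{\vect{u}}, \hat{\vect{g}}^{(i)}_{\vect{u}}\rangle - \tfrac{M(\alpha^{(i)}_u)^2}{2}\|\hat{\vect{g}}^{(i)}_{\vect{u}}\|^2.
\end{equation*}

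Next I would take conditional expectation with respect to the mini-batch given $\vect{u}^{(i)}$, using unbiasedness $\mathbb{E}[\hat{\vect{g}}^{(i)}_{\vect{u}}\mid \vect{u}^{(i)}] = \vect{g}^{(i)}_{\vect{u}}$ and the decomposition $\mathbb{E}[\|\hat{\vect{g}}^{(i)}_{\vect{u}}\|^2] = \|\vect{g}^{(i)}_{\vect{u}}\|^2 + \mathrm{Var}(\hat{\vect{g}}^{(i)}_{\vect{u}})$, to obtain the one-step recursion
\begin{equation*}
\bigl(\alpha^{(i)}_u - \tfrac{M(\alpha^{(i)}_u)^2}{2}\bigr)\,\mathbb{E}\|\vect{g}^{(i)}_{\vect{u}}\|^2 \;\leq\; \mathbb{E}\bigl[\mathcal{H}_{\vect{k}}(\vect{u}^{(i+1)},\vect{w}^{(j)}) - \mathcal{H}_{\vect{k}}(\vect{u}^{(i)},\vect{w}^{(j)})\bigr] + \tfrac{M(\alpha^{(i)}_u)^2}{2}\,\mathrm{Var}(\hat{\vect{g}}^{(i)}_{\vect{u}}).
\end{equation*}
Summing this recursion over $i = 0,\ldots,\zeta-1$ telescopes the right-hand side into $\mathbb{E}[\mathcal{H}_{\vect{k}}(\vect{u}^{(\zeta)},\vect{w}^{(j)}) - \mathcal{H}_{\vect{k}}(\vect{u}^{(0)},\vect{w}^{(j)})]$ plus a cumulative variance term. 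Because $\vect{u}^{*}$ is a maximizer, $\mathcal{H}_{\vect{k}}(\vect{u}^{(\zeta)},\vect{w}^{(j)}) - \mathcal{H}_{\vect{k}}(\vect{u}^{(0)},\vect{w}^{(j)}) \leq \mathcal{H}_{\vect{k}}(\vect{u}^{*},\vect{w}^{(j)}) - \mathcal{H}_{\vect{k}}(\vect{u}^{(0)},\vect{w}^{(j)})$, and I would bound this gap by $\|\vect{u}^{(\zeta)} - \vect{u}^{*}\|$ using the bounded-gradient part of \cref{ass:lip_bounded} (the mean-value inequality $|\mathcal{H}_{\vect{k}}(\vect{u}) - \mathcal{H}_{\vect{k}}(\vect{u}^{*})| \leq \sup \|\nabla_{\vect{u}}\mathcal{H}_{\vect{k}}\| \cdot \|\vect{u}-\vect{u}^{*}\|$ gives a Lipschitz-in-$\vect{u}$ bound), which yields the first term in the claimed inequality.

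Then I would lower-bound the left-hand side by $S_n \cdot \min_{i} \mathbb{E}\|\vect{g}^{(i)}_{\vect{u}}\|^2$ with $S_n = \sum_i (\alpha^{(i)}_u - M(\alpha^{(i)}_u)^2/2)$ and divide. Plugging in the stepsize schedule $\alpha^{(i)}_u = \alpha_u/\sqrt{\zeta}$ gives $\sum_i \alpha^{(i)}_u = \alpha_u\sqrt{\zeta}$ and $\sum_i (\alpha^{(i)}_u)^2 = \alpha_u^2$, so $2S_n = 2\alpha_u\sqrt{\zeta} - M\alpha_u^2$ reproduces the denominator. For the variance contribution I would insert the bound $\mathrm{Var}(\vect{g}^{(i)}_{\vect{u}}) \leq \tfrac{2b\beta^2(N^2+b^2)}{N^3}\bar{G}^2 = \tfrac{2b\beta^2\bar{G}^2}{N} + \tfrac{2b^3\beta^2\bar{G}^2}{N^3}$ from \cref{lem:bound_grad}, which, after multiplication by $M\alpha_u^2/(2S_n \cdot \zeta)\cdot\zeta = M\alpha_u^2/(2S_n)$, produces exactly the second and third terms of the stated bound.

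Finally, the asymptotic claim follows by inspection: under $M\alpha_u^2 \ll \alpha_u\sqrt{\zeta}$, the denominator scales like $\alpha_u\sqrt{\zeta}$ while the numerator is $O(1)$ in the distance term and $O(\alpha_u^2)$ in the variance terms, so each summand decays like $1/\sqrt{\zeta}$. The main obstacle I anticipate is the step where the telescoped gap $\mathcal{H}_{\vect{k}}(\vect{u}^{*},\vect{w}^{(j)}) - \mathcal{H}_{\vect{k}}(\vect{u}^{(0)},\vect{w}^{(j)})$ must be converted into the iterate-distance expression $\mathbb{E}[\|\vect{u}^{(\zeta)}-\vect{u}^{*}\|]$ that appears on the right-hand side of the theorem; the unsquared norm there is slightly unusual and will require combining smoothness, the bounded-gradient assumption, and the fact that the sequence lies in the compact set $\mathcal{U}$ so that $\|\vect{u}^{(0)}-\vect{u}^{*}\|$ and $\|\vect{u}^{(\zeta)}-\vect{u}^{*}\|$ are commensurate up to constants absorbable into the $\alpha_u$ prefactor. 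Aside from that bookkeeping, the argument is a standard non-convex SGD averaging calculation specialized to ascent.
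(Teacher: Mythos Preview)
Your proposal follows essentially the same route as the paper's proof: the $M$-smooth ascent inequality, conditional expectation using unbiasedness together with $\mathbb{E}\|\hat{\vect{g}}\|^2 = \|\vect{g}\|^2 + \mathrm{Var}(\hat{\vect{g}})$, telescoping over $i$, substitution of the stepsize schedule $\alpha_u/\sqrt{\zeta}$, and insertion of the variance bound from \cref{lem:bound_grad}. The paper also converts the telescoped function-value gap into $\mathbb{E}[\|\vect{u}^{(\zeta)}-\vect{u}^*\|]$ by a one-line appeal to \cref{lem:bound_grad}, so the step you flagged as the main obstacle is handled there just as loosely as you anticipated.
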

\begin{proof}
For each task $k,$  fix $\vect{w}^{(j)} \in\mathcal{W}$ and construct $\mathcal{M}= \{\vect{w}^{(j)}, \vect{\mathcal{U}} \}.$ Therefore, for $ (\vect{u}^{ (i+1 )}, \vect{w}^{(j)}) , ( \vect{u}^{(i)}, \vect{w}^{(j)}) \in \mathcal{M}$  assuming L-smoothness of $\mathcal{H}_{\vect{k}}( \vect{u}^{(i)}, \vect{w}^{(j)} ) $ we write
\begin{equation} \begin{aligned}
  \mathcal{H}_{\vect{k}}(\vect{u}^{(i+1)}, \vect{w}^{(j)})  \geq \mathcal{H}_{\vect{k}}(\vect{u}^{(i)}, \vect{w}^{(j)}) + \langle \vect{g}^{(i)}_{\vect{u}}, \vect{u}^{ (i+1 )}-\vect{u}^{ (i)} \rangle -   \frac{M}{2}\| \vect{u}^{(i+1)} - \vect{u}^{(i)}\|^2, 
\end{aligned} \end{equation} 
where we get upon substitution of the update rule $\alpha^{(i)}_{u} \vect{\hat{g}}^{(i)}_{\vect{u}}$
\begin{equation} \begin{aligned} 
  \mathcal{H}_{\vect{k}}(\vect{u}^{(i+1)}, \vect{w}^{(j)})  & \geq  \mathcal{H}_{\vect{k}}(\vect{u}^{(i)}, \vect{w}^{(j)}) + \alpha^{(i)}_{u} \langle \vect{g}^{(i)}_{\vect{u}}, \vect{\hat{g}}^{(i)}_{\vect{u}} \rangle -   \frac{M ( \alpha^{(i)}_{u} )^2}{2}\|\vect{\hat{g}}^{(i)}_{\vect{u}}\|^2.
\end{aligned} \end{equation} 
Take conditional expectation that is conditioned on $\vect{u}^{(i)}$
  \begin{equation} \begin{aligned}
  \mathbb{E}[ \mathcal{H}_{\vect{k}}(\vect{u}^{(i+1)}, \vect{w}^{(j)}) | \vect{u}^{(i)}]  & \geq  \mathbb{E}[  \mathcal{H}_{\vect{k}}(\vect{u}^{(i)}, \vect{w}^{(j)}) | \vect{u}^{(i)}]  \\ &  + \alpha^{(i)}_{u} \langle \vect{g}^{(i)}_{\vect{u}}, \mathbb{E}[  \vect{\hat{g}}^{(i)}_{\vect{u}} | \vect{u}^{(i)}]  \rangle     \\ & -   \frac{M ( \alpha^{(i)}_{u} )^2}{2} \mathbb{E}[  \|\vect{\hat{g}}^{(i)}_{\vect{u}}\|^2| \vect{u}^{(i)}],     \\   &\Bigg| \text{ Since \textit{$Var(x) = E[x^2] - E[x]^2$ }}    \\
  & \geq \mathbb{E}[  \mathcal{H}_{\vect{k}}(\vect{u}^{(i)}, \vect{w}^{(j)}) | \vect{u}^{(i)}]   \\ &  + \alpha^{(i)}_{u}  \langle \vect{g}^{(i)}_{\vect{u}}, \mathbb{E}[  \vect{\hat{g}}^{(i)}_{\vect{u}} | \vect{u}^{(i)}]  \rangle     \\ &  -   \frac{M ( \alpha^{(i)}_{u} )^2}{2} ( \|\mathbb{E}[  \vect{\hat{g}}^{(i)}_{\vect{u}}| \vect{u}^{(i)}] \|^2 + Var(\vect{g}^{(i)}_{\vect{u}}) )    
  \end{aligned} \end{equation} 
  
 Assume $\mathbb{E}[  \vect{\hat{g}}^{(i)}_{\vect{u}} | \vect{u}^{(i)}]  = \vect{g}^{(i)}_{\vect{u}}$ \cite{https://doi.org/10.5281/zenodo.4638695}  to write 
    \begin{equation} \begin{aligned}
 \mathbb{E}[ \mathcal{H}_{\vect{k}}(\vect{u}^{(i+1)}, \vect{w}^{(j)}) | \vect{u}^{(i)}]  & \geq  \mathbb{E}[  \mathcal{H}_{\vect{k}}(\vect{u}^{(i)}, \vect{w}^{(j)}) | \vect{u}^{(i)}]  + \alpha^{(i)}_{u} \| \vect{g}^{(i)}_{\vect{u}} \|^2     \\ &  -  \frac{M ( \alpha^{(i)}_{u} )^2}{2} \| \vect{g}^{(i)}_{\vect{u}} \|^2  \\ & - \frac{M ( \alpha^{(i)}_{u} )^2}{2}( Var(\vect{g}^{(i)}_{\vect{u}}))   \\     \\ 
  & \geq  \mathbb{E}[  \mathcal{H}_{\vect{k}}(\vect{u}^{(i)}, \vect{w}^{(j)}) | \vect{u}^{(i)}]  \\ & + \alpha^{(i)}_{u}(1-\frac{M \alpha^{(i)}_{u} }{2}) \| \vect{g}^{(i)}_{\vect{u}} \|^2     \\ &  - \frac{M ( \alpha^{(i)}_{u} )^2}{2}( Var(\vect{g}^{(i)}_{\vect{u}})).
\end{aligned} \end{equation} 
Take expectation over all possible values of $\vect{u}^{(i)}$ and applying Law of total expectation provides and substitute the bounds from Lemma \ref{lem:bound_grad} to obtain
  \begin{equation} \begin{aligned}
  \mathbb{E}[ \mathcal{H}_{\vect{k}}(\vect{u}^{(i+1)}, \vect{w}^{(j)})] 
  & \geq  \mathbb{E}[  \mathcal{H}_{\vect{k}}(\vect{u}^{(i)}, \vect{w}^{(j)}) ]   \\ & + \alpha^{(i)}_{u}(1-\frac{M \alpha^{(i)}_{u} }{2}) \| \vect{g}^{(i)}_{\vect{u}} \|^2     \\ &  - \frac{M ( \alpha^{(i)}_{u} )^2}{2}( Var(\vect{g}^{(i)}_{\vect{u}})),
  \\ \\ & \geq  \mathbb{E}[  \mathcal{H}_{\vect{k}}(\vect{u}^{(i)}, \vect{w}^{(j)}) ]  \\ &  + \alpha^{(i)}_{u}(1-\frac{M \alpha^{(i)}_{u} }{2}) E \left[ \| \vect{g}^{(i)}_{\vect{u}} \|^2 \right]    \\ &  - \frac{M ( \alpha^{(i)}_{u} )^2}{2}(Var(\vect{g}^{(i)}_{\vect{u}})).
\end{aligned} \end{equation} 
 Add and subtract $\mathcal{H}_{\vect{k}}(\vect{u}^{*}, \vect{w}^{(j)})$ from both sides to obtain
\begin{equation} \begin{aligned}
  \mathbb{E}[ \mathcal{H}_{\vect{k}}(\vect{u}^{(i+1)}, \vect{w}^{(j)}) - \mathcal{H}_{\vect{k}}(\vect{u}^{*}, \vect{w}^{(j)}) ]  & \geq  \mathbb{E}[  \mathcal{H}_{\vect{k}}(\vect{u}^{(i)}, \vect{w}^{(j)})  - \mathcal{H}_{\vect{k}}(\vect{u}^{*}, \vect{w}^{(j)}) ]  \\ &
  + \alpha^{(i)}_{u}(1+\frac{M \alpha^{(i)}_{u} }{2}) E \left[ \| \vect{g}^{(i)}_{\vect{u}} \|^2 \right]   \\ &  + \frac{M ( \alpha^{(i)}_{u} )^2}{2}(Var(\vect{g}^{(i)}_{\vect{u}}) ), \\ & \Bigg| \text{ Denote $\Delta_{(i)} = \mathcal{H}_{\vect{k}}(\vect{u}^{(i+1)}, \vect{w}^{(j)})-\mathcal{H}_{\vect{k}}(\vect{u}^{*}, \vect{w}^{(j)}) $} \\ 
   \mathbb{E}[ \Delta^{i+1} ]   & \geq  \mathbb{E}[ \Delta^{i} ]\\ & + \alpha^{(i)}_{u}(1-\frac{M \alpha^{(i)}_{u} }{2}) E \left[ \| \vect{g}^{(i)}_{\vect{u}} \|^2 \right]  \\ & - \frac{M ( \alpha^{(i)}_{u} )^2}{2}(Var(\vect{g}^{(i)}_{\vect{u}})). 
\end{aligned} \end{equation} 
Thus, we obtain 
\begin{equation} \begin{aligned}
 - \alpha^{(i)}_{u}(1-\frac{M \alpha^{(i)}_{u} }{2}) E \left[ \| \vect{g}^{(i)}_{\vect{u}} \|^2 \right]  & \geq  \mathbb{E}[ \Delta^{i} ]  - \mathbb{E}[ \Delta^{i+1} ]   - \frac{M ( \alpha^{(i)}_{u} )^2}{2}(Var(\vect{g}^{(i)}_{\vect{u}})).  \\ \\ 
 \alpha^{(i)}_{u}(1-\frac{M \alpha^{(i)}_{u} }{2}) E \left[ \| \vect{g}^{(i)}_{\vect{u}} \|^2 \right]  & \leq  \mathbb{E}[ \Delta^{i+1} ] - \mathbb{E}[ \Delta^{i} ] + \frac{M ( \alpha^{(i)}_{u} )^2}{2}(Var(\vect{g}^{(i)}_{\vect{u}})).  \\ \\  
\end{aligned} \end{equation} 
Since, we take $\zeta$ updates, let us sum both sides for $\zeta$ updates
\begin{equation} \begin{aligned}
 \sum_{i} \alpha^{(i)}_{u}(1-\frac{M \alpha^{(i)}_{u} }{2}) E \left[ \| \vect{g}^{(i)}_{\vect{u}} \|^2 \right]  & \leq  \underset{\text{Telescopic sum}}{\underbrace{\sum_{i} \left( \mathbb{E}[ \Delta^{i+1} ] -\mathbb{E}[ \Delta^{i} ] \right)}} \\ & + \sum_{i} \frac{M ( \alpha^{(i)}_{u} )^2}{2}(\frac{2b\beta^2\left(N^2+b^2\right)}{N^3} \bar{G}^2 ),  \\ \\ 
   \sum_{i} \alpha^{(i)}_{u}(1-\frac{M \alpha^{(i)}_{u} }{2}) E \left[ \| \vect{g}^{(i)}_{\vect{u}} \|^2 \right]  & \leq    \mathbb{E}[ \Delta^{\zeta} ] -\mathbb{E}[\Delta^{0}]  \\ & + \sum_{i} \frac{M  ( \alpha^{(i)}_{u} )^2}{2}(\frac{2b\beta^2\left(N^2+b^2\right)}{N^3} \bar{G}^2 ).
\end{aligned} \end{equation} 
Now choose, $\alpha^{(i)}_{u} = \frac{\alpha_{u}}{\sqrt{\zeta}},$ then $\sum_{i} \alpha^{(i)}_{u} = \sum_{i} \frac{\alpha_{u}}{ \sqrt{\zeta}} = \alpha_{u} \sqrt{\zeta}.$ Similarly, $ \sum_{i} (\alpha^{(i)}_{u})^2 =  \alpha^2_{u}.$ Therefore, we obtain 
\[ \sum_{i} (\alpha^{(i)}_{u}-\frac{M (\alpha^{(i)}_{u})^2 }{2}) = \frac{ 2\alpha_{u} \sqrt{\zeta} - M\alpha^2_{u} }{2}  = S_n.\]
Use the idea that for a set of random variables $x_1, \cdots, x_m,$  we have $\underset{i=0,\cdots,m}{min} \mathbb{E}[x] \leq \mathbb{E}[\underset{i=0,\cdots, m}{min} x] $ and the fact that the minimum value of $\mathbb{E}[\Delta^{0}]$ is zero.
\begin{equation} \begin{aligned}
 \sum_{i} (\alpha^{(i)}_{u}-\frac{M (\alpha^{(i)}_{u})^2 }{2}) \underset{i=0,\cdots, \zeta}{min}E\left[ \| \vect{g}^{(i)}_{\vect{u}} \|^2 \right]  & \leq   \mathbb{E}[ \Delta^{K}] -\mathbb{E}[\Delta^{0}]  \\ & + \sum_{i} \frac{M  ( \alpha^{(i)}_{u} )^2}{2}(\frac{2b\beta^2\left(N^2+b^2\right)}{N^3} \bar{G}^2 ). \\ \\ 
 \underset{i=0,\cdots, \zeta}{min} E\left[ \| \vect{g}^{(i)}_{\vect{u}} \|^2 \right]    & \leq  \frac{ \mathbb{E}[ \Delta^{K} ] }{S_n}+ \frac{M(\alpha_{u})^2}{2 S_n}(\frac{2b\beta^2\left(N^2+b^2\right)}{N^3} \bar{G}^2 ). \\ \\ 
\end{aligned} \end{equation} 
Thus providing the upperbound on the minimum value of the gradients over all the update steps as 
\begin{equation} \begin{aligned}
 \underset{i=0,\cdots, \zeta}{min} E\left[ \| \vect{g}^{(i)}_{\vect{u}} \|^2 \right] & \leq  \frac{\mathbb{E}[ \Delta^{K} ]}{\left(  \frac{ 2\alpha_{u} \sqrt{\zeta} - M\alpha^2_{u} }{2}  \right) } + \frac{2 M(\alpha_{u})^2 b\beta^2\bar{G}^2 )}{2N\left(\frac{ 2\alpha_{u} \sqrt{\zeta} - M\alpha^2_{u} }{2}  \right) }+\frac{2M(\alpha_{u})^2(b^3\beta^2 \bar{G}^2 )}{2N^3\left( \frac{ 2\alpha_{u} \sqrt{\zeta} - M\alpha^2_{u} }{2} \right) }. \\ \\ 
\end{aligned} \end{equation} 
 As a consequence of lemma \ref{lem:bound_grad}, we may write 
\begin{equation} \begin{aligned}
 \underset{i=0,\cdots, \zeta}{min} E\left[ \| \vect{g}^{(i)}_{\vect{u}} \|^2 \right] & \leq  \frac{2 \mathbb{E}[\| \vect{u}^{(\zeta)} - \vect{u}^{*}\|] }{2\alpha_{u} \sqrt{\zeta} - M\alpha^2_{u} } + \frac{2 M(\alpha_{u})^2 b\beta^2\bar{G}^2 )}{N\left(2\alpha_{u} \sqrt{\zeta} - M\alpha^2_{u}\right) }+\frac{2M(\alpha_{u})^2(b^3\beta^2 \bar{G}^2 )}{N^3\left( 2\alpha_{u} \sqrt{\zeta} - M\alpha^2_{u} \right) }. \\ \\ 
\end{aligned} \end{equation} 
Under the assumption that $\| \vect{u}^{(\zeta)} - \vect{u}^{*}\| \leq \delta_u$ we obtain our result as
\begin{equation} \begin{aligned}
 \underset{i=0,\cdots, \zeta}{min} E\left[ \| \vect{g}^{(i)}_{\vect{u}} \|^2 \right] & \leq  \frac{2 \mathbb{E}[ \delta_u] }{2\alpha_{u} \sqrt{\zeta} - M\alpha^2_{u} } + \frac{4 M(\alpha_{u})^2 b\beta^2\bar{G}^2 )}{N\left(2\alpha_{u} \sqrt{\zeta} - M\alpha^2_{u}\right) } \\ &+\frac{4M(\alpha_{u})^2(b^3\beta^2 \bar{G}^2 )}{N^3\left( 2\alpha_{u} \sqrt{\zeta} - M\alpha^2_{u} \right) }.
\end{aligned} \end{equation} 
Provided $\alpha^2_{u} << \alpha_{u} \sqrt{\zeta}$,  $\underset{i=0,\cdots, \zeta}{min} E\left[ \| \vect{g}^{(i)}_{\vect{u}} \|^2 \right] $ we have  $\lim_{\zeta \rightarrow \infty }\mathbb{E}[\| \vect{g}^{(i)}_{\vect{u}} \|^2]  \rightarrow 0.$ with the rate $\frac{1}{ \sqrt{\zeta}}$
\end{proof}
With the result that the gradient will converge to zero for the maximizing player, we are now ready to show the convergence of our algoritheorem in the sense of gradients. Thus, providing the result to our main theorem in the paper--Theorem 2. For the following result, we will assume that, for each $j$ the maximizing player has already played and provides with a $\vect{u}^{(\zeta)}.$
\begin{theorem*}[Convergence in gradients]
For each task $k,$  construct $\mathcal{N}= \{\vect{\mathcal{U}}, \vect{\mathcal{W}} \}.$  Let assumption 2 be true and define a dataset $D$ of size $N>0.$ Assume that a minibatch of size $b$ is obtained by uniformly sampling from $D$ and define 
$ \mathcal{H}_{\vect{k}}( \vect{u}^{(i)}, \vect{w}^{(j)} ) =  J_{\vect{k}}(\vect{w}^{(j)} )+ \beta_{1} J_{\vect{k}}(\vect{u}_{0}^{(i)}, \vect{w}^{(j)} ) + \beta_{2} J_{\vect{k}}(\vect{u}_{1}^{(i)} , \vect{w}^{(j)} ) + \beta_3 J_{\vect{k}}(\vect{u}_{2}^{(i)}, \vect{w}^{(j)}) $ with $\beta_1, \beta_2, \beta_3 \leq \beta > 0$ Consider the updates for $\vect{u}^{(i)}$ as $\alpha^{(i)}_{u} \vect{\hat{g}}^{(i)}_{\vect{u}}$ and the updates for updates for $\vect{w}^{(j)}$ as $\alpha^{(j)}_{w} \vect{\hat{g}}^{(i)}_{\vect{w}}$ and let the inequalities from Lemma \ref{lem:bound_grad} provides the bounds on the variance and expected values of these gradients. By theorem \ref{theorem: grad_max_SM}, we obtain that the maximizing player $\vect{u}^{(i)}$ converges to $\vect{u}^{(\zeta)}$ Furthermore, assume that $\alpha^{(i)}_{u}>0,  b>0, \beta>0, N>0, \norm{\vect{u}^{(\zeta)}-\vect{u}^{*}}^2 \leq \delta_w^2,$ and that $\underset{j = 1, 2, 3,   \cdots \rho}{max} Var(\vect{g}^{(j)}_{\vect{w}} )$ and $\underset{j = 1, 2, 3,   \cdots \rho}{max} Var(\vect{g}^{(j)}_{\vect{w}} )$ are upper bounded by the bound provided by lemma \ref{lem:bound_grad}. Furthermore, choose, $\alpha^{(i)}_{w} = \frac{\alpha_{w}}{ \sqrt{\rho}},$ then $\sum_{i} (\alpha^{(i)}_{w} = \sum_{i} \frac{\alpha_{w}}{\sqrt{\rho}} = \alpha_{w} \sqrt{\rho}.$ Similarly, $ \sum_{i} (\alpha^{(i)}_{w})^2 =  \alpha^2_{w}.$ Then the minimum value of the gradient is bounded as 
\begin{equation}  \begin{aligned}
   \underset{j = 1, 2, 3,  \cdots \rho}{min}  \mathbb{E}\left[  \| \vect{g}^{(j)}_{\vect{w}} \|^2\right]  &\leq  \frac{ 2\rho \mathbb{E} \left[\delta_u^2\right] (M+1) + 2(1+3\beta)G \delta_{w}2}{2\alpha_{w} \sqrt{\rho} -L_w\alpha^2_{w} } + \frac{L_w( \alpha_{\vect{w}})^2 b}{ N (2 \alpha_{w} \sqrt{\rho} -L_w\alpha^2_{w} )} \\ & +  \frac{ \beta  \rho b^2\bar{G}^2}{N^2 (2 \alpha_{w} \sqrt{\rho} -L_w\alpha^2_{w}) } + \frac{G^2 (1+3\beta)^2L_w(\alpha_{\vect{w}})^2 b^3}{(N^3 2\alpha_{w} \sqrt{\rho} - L_w\alpha^2_{w} ) }. 
\end{aligned} \end{equation}
where $G$ and $\bar{G}$ are provided by lemma \ref{lem:bound_grad} with 
and the gradient converges asymptotically to zero with the rate $\frac{1}{\sqrt{\rho}}$ under the assumption that $2\alpha_{w} \sqrt{\rho} >>L_w\alpha^2_{w}$ . 
\end{theorem*}
\begin{proof}
For each task $k,$  construct $\mathcal{N}= \{\mathcal{U}, \mathcal{W}\}.$ Therefore, for $ (\vect{u}^{(\zeta)}, \vect{w}^{(j+1)}) ,$ $( \vect{u}^{(\zeta)}, \vect{w}^{(j)}) \in \mathcal{N}$  assuming L-smoothness of $\mathcal{H}_{\vect{k}}( \vect{u}^{(\zeta)}, \vect{w}^{(j)} ) $ we write
\begin{equation} \begin{aligned}
  \mathcal{H}_{\vect{k}}(\vect{u}^{(\zeta)}, \vect{w}^{(j+1)}) &\leq \mathcal{H}_{\vect{k}}(\vect{u}^{*}, \vect{w}^{(j)}) \\ & + \langle \vect{g}^{(j)}_{\vect{w}}, \vect{w}^{(j+1)}-\vect{w}^{(j)} \rangle + \frac{L}{2}\|\vect{w}^{(j+1)} - \vect{w}^{(j)}\|^2
    \\ & + \langle \vect{g}^{(i)}_{\vect{u}}, \vect{u}^{(\zeta)}-\vect{u}^{*} \rangle +   \frac{M}{2}\|\vect{u}^{(\zeta)}-\vect{u}^{*}\|^2, 
\end{aligned} \end{equation} 
where we get upon substitution of the update rule $-\alpha^{(j)}_{w} \vect{\hat{g}}^{(j)}_{\vect{w}}$
\begin{equation} \begin{aligned} 
  \mathcal{H}_{\vect{k}}(\vect{u}^{(\zeta)}, \vect{w}^{(j+1)})  & \leq \mathcal{H}_{\vect{k}}(\vect{u}^{*}, \vect{w}^{(j)})  \\ & -\alpha^{(j)}_{w} \langle \vect{g}^{(j)}_{\vect{w}}, \vect{\hat{g}}^{(j)}_{\vect{w}}\rangle +   \frac{L_w( \alpha^{(j)}_{w} )^2}{2}\|\vect{\hat{g}}^{(j)}_{\vect{w}}\|^2 \\ & +\langle \vect{g}^{(\zeta)}_{\vect{u}}, \vect{u}^{(\zeta)}-\vect{u}^{*} \rangle  +\frac{M}{2}\|\vect{u}^{(\zeta)}-\vect{u}^{*}\|^2, 
\end{aligned} \end{equation} 
Take expectation that is conditioned on $\vect{w}^{(j)}, \vect{u}^{(\zeta)}$
  \begin{equation} \begin{aligned}
   \mathbb{E} \left[  \left( \mathcal{H}_{\vect{k}}(\vect{u}^{(\zeta)}, \vect{w}^{(j+1)}) - \mathcal{H}_{\vect{k}}(\vect{u}^{*}, \vect{w}^{(j)}) \right) | \vect{w}^{(j)}, \vect{u}^{(\zeta)} \right]     
   & \leq    - \alpha^{(j)}_{\vect{w}} \mathbb{E} \left[ \langle \vect{g}^{(j)}_{\vect{w}},  \vect{\hat{g}}^{(j)}_{\vect{w}} \rangle  | \vect{w}^{(j)}, \vect{u}^{(\zeta)}  \right] \\ & + \frac{L_w( \alpha^{(j)}_{\vect{w}} )^2}{2} \mathbb{E} \left[  \|\vect{\hat{g}}^{(j)}_{\vect{w}}\|^2| \vect{u}^{(\zeta)} , \vect{w}^{(j)} \right]
  \\ & + \mathbb{E} \left[ \langle \vect{g}^{(i)}_{\vect{u}}, \vect{u}^{(\zeta)}-\vect{u}^{*} \rangle | \vect{w}^{(j)}, \vect{u}^{(\zeta)} \right] \\ & +  \mathbb{E} \left[ \frac{M}{2}\|\vect{u}^{(\zeta)}-\vect{u}^{*}\|^2 | \vect{w}^{(j)}, \vect{u}^{(\zeta)}  \right], 
\end{aligned} \end{equation} 
Under the assumption that $\vect{\hat{g}}^{(j)}_{\vect{w}}$ and $\vect{\hat{g}}^{(i)}_{\vect{u}}$ are  unbiased estimators of the respective gradients, apply the Young's inequality to obtain 
\begin{equation} \begin{aligned}
  \mathbb{E} \left[  \left( \mathcal{H}_{\vect{k}}(\vect{u}^{(\zeta)}, \vect{w}^{(j+1)}) - \mathcal{H}_{\vect{k}}(\vect{u}^{*}, \vect{w}^{(j)}) \right) | \vect{w}^{(j)}, \vect{u}^{(\zeta)} \right]     
   & \leq  -\left( \alpha^{(j)}_{\vect{w}}  - \frac{L_w( \alpha^{(j)}_{\vect{w}} )^2}{2} \right)   \\ & \mathbb{E}\left[  \| \vect{g}^{(j)}_{\vect{w}} \|^2 | \vect{w}^{(j)}, \vect{u}^{(\zeta)}\right] \\ & +  \frac{L_w( \alpha^{(j)}_{\vect{w}} )^2}{2} Var(\vect{g}^{(j)}_{\vect{w}}  | \vect{u}^{(\zeta)} ) 
  \\ & + \frac{1}{2}\mathbb{E} \left[ \| \vect{g}^{(i)}_{\vect{u}} \|^2  | \vect{u}^{(\zeta)} \right] \\ & + \left(\frac{M+1}{2} \right) \times \\ & \mathbb{E} \left[ \norm{\vect{u}^{(\zeta)}-\vect{u}^{*}}^2 | \vect{w}^{(j)}, \vect{u}^{(\zeta)} \right], 
\end{aligned} \end{equation}
Integrate out $\vect{u}^{(\zeta)}$ and $\vect{w}^{(j)}$ to obtain by law of total expectation
\begin{equation} \begin{aligned}
   \mathbb{E} \left[  \left( \mathcal{H}_{\vect{k}}(\vect{u}^{(\zeta)}, \vect{w}^{(j+1)}) - \mathcal{H}_{\vect{k}}(\vect{u}^{*}, \vect{w}^{(j)}) \right)  \right]  & \leq  -\left( \alpha^{(j)}_{\vect{w}}  - \frac{L_w( \alpha^{(j)}_{\vect{w}} )^2}{2} \right)  \mathbb{E}\left[  \| \vect{g}^{(j)}_{\vect{w}} \|^2 \right] \\ & +  \frac{L_w( \alpha^{(j)}_{\vect{w}} )^2}{2} Var(\vect{g}^{(j)}_{\vect{w}})  
  \\ & + \frac{1}{2}\mathbb{E} \left[ \| \vect{g}^{(i)}_{\vect{u}} \|^2  \right] \\ & + \left(\frac{M+1}{2} \right)\mathbb{E} \left[ \norm{\vect{u}^{(\zeta)}-\vect{u}^{*}}^2  \right], 
\end{aligned} \end{equation}
Rearrange to obtain
\begin{equation} \begin{aligned}
  \left( \alpha^{(j)}_{\vect{w}}  - \frac{L_w( \alpha^{(j)}_{\vect{w}} )^2}{2} \right)  \mathbb{E}\left[  \| \vect{g}^{(j)}_{\vect{w}} \|^2 \right] & \leq \mathbb{E} \left[  \left( \mathcal{H}_{\vect{k}}(\vect{u}^{*}, \vect{w}^{(j)}) -\mathcal{H}_{\vect{k}}(\vect{u}^{(\zeta)}, \vect{w}^{(j+1)}) \right)  \right]   \\ &+  \frac{L_w( \alpha^{(j)}_{\vect{w}} )^2}{2} Var(\vect{g}^{(j)}_{\vect{w}} )  \\ & + \frac{1}{2}\mathbb{E} \left[ \| \vect{g}^{(i)}_{\vect{u}} \|^2  \right] \\ &+ \left(\frac{M+1}{2} \right)\mathbb{E} \left[ \norm{\vect{u}^{(\zeta)}-\vect{u}^{*}}^2  \right], 
\end{aligned} \end{equation}
Since $\vect{u}^{*}$ is the maximizer, we have from Theorem 2 that $\mathcal{H}_{\vect{k}}(\vect{u}^{*}, \vect{w}^{(j)}) \geq  \mathcal{H}_{\vect{k}}(\vect{u}^{(\zeta)}, \vect{w}^{(j)}).$ We thus obtain by adding and subtracting $\mathcal{H}_{\vect{k}}(\vect{u}^{*}, \vect{w}^{*})$
\begin{equation} \begin{aligned}
  \left( \alpha^{(j)}_{\vect{w}}  - \frac{L_w( \alpha^{(j)}_{\vect{w}} )^2}{2} \right)  \mathbb{E}\left[  \| \vect{g}^{(j)}_{\vect{w}} \|^2 \right] & \leq \mathbb{E} \Bigg[  \left( \mathcal{H}_{\vect{k}}(\vect{u}^{*}, \vect{w}^{(j)}) -\mathcal{H}_{\vect{k}}(\vect{u}^{(\zeta)}, \vect{w}^{(j+1)}) \right)  \\ &+ \mathcal{H}_{\vect{k}}(\vect{u}^{*}, \vect{w}^{*})- \mathcal{H}_{\vect{k}}(\vect{u}^{*}, \vect{w}^{*}) \Bigg]   \\ &+  \frac{L_w( \alpha^{(j)}_{\vect{w}} )^2}{2} Var(\vect{g}^{(j)}_{\vect{w}} ) \\ &  + \frac{1}{2}\mathbb{E} \left[ \| \vect{g}^{(i)}_{\vect{u}} \|^2  \right] \\ &+ \left(\frac{M+1}{2} \right)\mathbb{E} \left[ \norm{\vect{u}^{(\zeta)}-\vect{u}^{*}}^2  \right], 
\end{aligned} \end{equation}
Denote $\mathcal{H}_{\vect{k}}(\vect{u}^{(\zeta)}, \vect{w}^{(j+1)}) - \mathcal{H}_{\vect{k}}(\vect{u}^{*}, \vect{w}^{*})$ as $\Delta^{j+1}$ to obtain
\begin{equation} \begin{aligned}
  \left( \alpha^{(j)}_{\vect{w}}  - \frac{L_w( \alpha^{(j)}_{\vect{w}} )^2}{2} \right)  \mathbb{E}\left[  \| \vect{g}^{(j)}_{\vect{w}} \|^2 \right] & \leq \mathbb{E} \Bigg[ \Delta^{j} - \Delta^{j+1} \Bigg]  \\ &  +  \frac{L_w( \alpha^{(j)}_{\vect{w}} )^2}{2} Var(\vect{g}^{(j)}_{\vect{w}} )  \\ &+ \frac{1}{2}\mathbb{E} \left[ \| \vect{g}^{(i)}_{\vect{u}} \|^2  \right] \\ & + \left(\frac{M+1}{2} \right)\mathbb{E} \left[ \norm{\vect{u}^{(\zeta)}-\vect{u}^{*}}^2  \right], 
\end{aligned} \end{equation}
Sum both sides from $j = 1, 2, 3,   \cdots \rho$ to write
\begin{equation} \begin{aligned}
 \underset{j = 1, 2, 3, \cdots \rho}{\sum} \left( \alpha^{(j)}_{\vect{w}}  - \frac{L_w( \alpha^{(j)}_{\vect{w}} )^2}{2} \right) & \mathbb{E}\left[  \| \vect{g}^{(j)}_{\vect{w}} \|^2 \right]  \leq  \underset{j = 1, 2, 3, \cdots \rho}{\sum} \Bigg( \mathbb{E} \Bigg[  \Delta^{j} - \Delta^{j+1} \Bigg]   \\ & +  \frac{L_w( \alpha^{(j)}_{\vect{w}} )^2}{2} Var(\vect{g}^{(j)}_{\vect{w}} ) \\ &+ \frac{1}{2}\mathbb{E} \left[ \| \vect{g}^{(i)}_{\vect{u}} \|^2  \right] \\ & + \left(\frac{M+1}{2} \right)\mathbb{E} \left[ \norm{\vect{u}^{(\zeta)}-\vect{u}^{*}}^2  \right], \Bigg)
\end{aligned} \end{equation}

Therefore, we may simplify since the first term on the right hand side is the telescopic sum to write 
\begin{equation} \begin{aligned}
 \underset{j = 1, 2, 3,   \cdots \rho}{\sum} &\left( \alpha^{(j)}_{\vect{w}}  - \frac{L_w( \alpha^{(j)}_{\vect{w}} )^2}{2} \right)  \mathbb{E}\left[  \| \vect{g}^{(j)}_{\vect{w}} \|^2 \right]  \leq  \mathbb{E} \Bigg[ \Delta^{0}-\Delta^{\rho} \Bigg]  \\ & +   \underset{j = 1, 2, 3,   \cdots \rho}{\sum}  \frac{L_w( \alpha^{(j)}_{\vect{w}} )^2}{2} Var(\vect{g}^{(j)}_{\vect{w}} )  \\ &+  \underset{j = 1, 2, 3, \cdots \rho}{\sum}  \frac{1}{2}\mathbb{E} \left[ \| \vect{g}^{(i)}_{\vect{u}} \|^2  \right] \\ & +  \underset{j = 1, 2, 3,  \textbf{} \cdots \rho}{\sum}  \left(\frac{M+1}{2} \right)\mathbb{E} \left[ \norm{\vect{u}^{(\zeta)}-\vect{u}^{*}}^2  \right], \\ \\ 
  \underset{j = 1, 2, 3,  \cdots \rho}{min} & \mathbb{E}\left[  \| \vect{g}^{(j)}_{\vect{w}} \|^2\right] \underset{j = 1, 2, 3, \cdots \rho}{\sum} \left( \alpha^{(j)}_{\vect{w}}  - \frac{L_w( \alpha^{(j)}_{\vect{w}} )^2}{2} \right)    \leq  \mathbb{E} \Bigg[ \Delta^{0}-\Delta^{\rho} \Bigg]  \\ & +   \underset{j = 1, 2, 3,  \cdots \rho}{max} Var(\vect{g}^{(j)}_{\vect{w}} ) \underset{j = 1, 2, 3,  \cdots \rho}{\sum}  \frac{L_w( \alpha^{(j)}_{\vect{w}} )^2}{2}   \\ &+  \underset{j = 1, 2, 3,  \cdots \rho}{max} \mathbb{E} \left[ \| \vect{g}^{(i)}_{\vect{u}} \|^2  \right]  \underset{j = 1, 2, 3, \cdots \rho}{\sum}  \frac{1}{2} \\ & +  \underset{j = 1, 2, 3,  \textbf{} \cdots \rho}{\sum}  \left(\frac{M+1}{2} \right)\mathbb{E} \left[ \norm{\vect{u}^{(\zeta)}-\vect{u}^{*}}^2  \right], 
\end{aligned} \end{equation}
Here, we may bring in some assumptions. First, $\norm{\vect{u}^{(\zeta)}-\vect{u}^{*}}^2 \leq \delta_w^2,$ then, we assume that the $\underset{j = 1, 2, 3,   \cdots \rho}{max} Var(\vect{g}^{(j)}_{\vect{w}} )$ and $\underset{j = 1, 2, 3,   \cdots \rho}{max} Var(\vect{g}^{(j)}_{\vect{w}} )$ is upper bounded by the bound on these quantities within in the compact set provided by Lemma \ref{lem:bound_grad}. Furthermore, choose, $\alpha^{(i)}_{w} = \frac{\alpha_{w}}{ \sqrt{\rho}},$ then $\sum_{i} (\alpha^{(i)}_{w} = \sum_{i} \frac{\alpha_{w}}{\sqrt{\rho}} = \alpha_{w} \sqrt{\rho}.$ Similarly, $ \sum_{i} (\alpha^{(i)}_{w})^2 =  \alpha^2_{w}.$ Therefore, we obtain 
\[ \sum_{i} (\alpha^{(i)}_{w}-\frac{L_w(\alpha^{(i)}_{w})^2 }{2}) = \frac{2 \alpha_{w} \sqrt{\rho} - L_w\alpha^2_{w}}{2} = S_n\].
Use the idea that for a set of random variables $x_1, \cdots, x_m,$  we have $\underset{i=0,\cdots,m}{min} \mathbb{E}[x] \leq \mathbb{E}[\underset{i=0,\cdots, m}{min} x] $ and the fact that the minimum value of $\mathbb{E}[\Delta^{\rho}]$ is zero, we obtain 
\begin{equation} \begin{aligned}
  \underset{j = 1, 2, 3,  \cdots \rho}{min}  \mathbb{E}\left[  \| \vect{g}^{(j)}_{\vect{w}} \|^2\right]  &\leq \frac{1}{ S_n} \mathbb{E} \Bigg[ \Delta^{0}\Bigg]  +   \left[\frac{bN^2+b^3}{ S_n N^3} \right] \left[ G^2 (1+3\beta)^2\right] \frac{L_w( \alpha_{\vect{w}})^2}{2}   \\ &+   \left( \beta  \frac{b\bar{G}}{N} \right)^2 \frac{\rho}{2 S_n} + \left(\rho \frac{M+1}{2 S_n} \right)\mathbb{E} \left[\delta_u^2\right], \\ \\
  &\leq \frac{\mathbb{E} [ \Delta^{0}] }{ S_n}  + \left[\frac{L_w( \alpha_{\vect{w}})^2 b}{ 2N S_n} \right] + \left[\frac{L_w( \alpha_{\vect{w}})^2 b^3}{ 2 S_n N^3} \right] \left[ G^2 (1+3\beta)^2\right]    \\ &+  \beta  \frac{\rho b^2\bar{G}^2}{2 S_n N^2} + \frac{ \rho \mathbb{E} \left[\delta_u^2\right] (M+1)}{2 S_n} , 
\end{aligned} \end{equation}
As a consequence of lemma \ref{lem:bound_grad} we may obtain $\Delta^{0} \leq (1+3\beta)G \delta_{w}.$
\begin{equation} \begin{aligned}
  \underset{j = 1, 2, 3,  \cdots \rho}{min}  \mathbb{E}\left[  \| \vect{g}^{(j)}_{\vect{w}} \|^2\right]  &\leq  \frac{ \rho \mathbb{E} \left[\delta_u^2\right] (M+1) + (1+3\beta)G \delta_{w}2}{ (\frac{2 \alpha_{w} \sqrt{\rho} - L_w\alpha^2_{w}}{2} ) } + \frac{L_w( \alpha_{\vect{w}})^2 b}{ 2N (\frac{2 \alpha_{w} \sqrt{\rho} - L_w\alpha^2_{w}}{2} )} \\ & +  \frac{ \beta  \rho b^2\bar{G}^2}{2  N^2 (\frac{2 \alpha_{w} \sqrt{\rho} - \alpha^2_{w}}{2} ) } + \frac{G^2 (1+3\beta)^2 L_w(\alpha_{\vect{w}})^2 b^3}{ 2 (\frac{2 \alpha_{w} \sqrt{\rho} -L_w\alpha^2_{w}}{2} ) N^3}. 
\end{aligned} \end{equation}
Simplify to obtain
\begin{equation} \begin{aligned}
  \underset{j = 1, 2, 3,  \cdots \rho}{min}  \mathbb{E}\left[  \| \vect{g}^{(j)}_{\vect{w}} \|^2\right]  &\leq  \frac{ 2\rho \mathbb{E} \left[\delta_u^2\right] (M+1) + 2(1+3\beta)G \delta_{w}2}{2\alpha_{w} \sqrt{\rho} - \alpha^2_{w} } + \frac{L_w( \alpha_{\vect{w}})^2 b}{ N (2 \alpha_{w} \sqrt{\rho} - \alpha^2_{w} )} \\ & +  \frac{ \beta  \rho b^2\bar{G}^2}{N^2 (2 \alpha_{w} \sqrt{\rho} - \alpha^2_{w}) } + \frac{G^2 (1+3\beta)^2 L_w(\alpha_{\vect{w}})^2 b^3}{(N^3 2\alpha_{w} \sqrt{\rho} - \alpha^2_{w} ) }. 
\end{aligned} \end{equation}
Under the condition that $2\alpha_{w} \sqrt{\rho} >> \alpha^2_{w},$  $\underset{j = 1, 2, 3,  \cdots \rho}{min}  \mathbb{E}\left[  \| \vect{g}^{(j)}_{\vect{w}} \|^2\right] $ converges to zero with the rate $\frac{1}{\sqrt{\rho}}$
\end{proof}

\bibliography{references}

\begin{flushright}
  \scriptsize
  \framebox{\parbox{\textwidth}{
  The submitted manuscript has been created by UChicago Argonne, LLC, Operator of Argonne National Laboratory (“Argonne”).
  Argonne, a U.S. Department of Energy Office of Science laboratory, is operated under Contract No. DE-AC02-06CH11357.
  The U.S. Government retains for itself, and others acting on its behalf, a paid-up nonexclusive, irrevocable worldwide
  license in said article to reproduce, prepare derivative works, distribute copies to the public, and perform publicly
  and display publicly, by or on behalf of the Government.  The Department of Energy will provide public access to these
  results of federally sponsored research in accordance with the DOE Public Access Plan.
  \url{http://energy.gov/downloads/doe-public-access-plan}
  }}
  \normalsize
  \end{flushright}

\end{document}